\newtheorem{theorem}{Theorem}[section]
\newtheorem{corollary}[theorem]{Corollary} 
\newtheorem{lemma}[theorem]{Lemma}
\theoremstyle{definition}
\theoremstyle{remark}
\newtheorem{remark}[theorem]{Remark}
\newtheorem{example}[theorem]{Example}
\numberwithin{equation}{section}
\newtheorem{definition}[theorem]{Definition}
\newcommand{\norm}[1]{\Vert#1\Vert}
\newcommand{\e}{{\epsilon}}
\newcommand{\E}{{\mathbb E}}
\newcommand{\VC}{{\mbox{VC}}}
\newcommand{\fat}{{\mbox{fat}}}
\newcommand{\abs}[1]{\lvert#1\rvert}
\def\fat{{\mathrm{fat}}}
\def\Q {{\mathbb Q}}
\def\C {{\mathbb C}}
\def\N{{\mathbb N}}
\def\R{{\mathbb R}}
\def\modd{\,{\mathrm{mod}}\,}
\newcommand{\neswarrow}{\mathrel{\text{$\nearrow$\llap{$\swarrow$}}}}
\journal{Theoretical Computer Science}
\begin{document}

\begin{frontmatter}



\title{PAC learnability under non-atomic measures: a problem by Vidyasagar}


\author{Vladimir Pestov}

\address{Departamento de Matem\'atica,
Universidade Federal de Santa Catarina,
Campus Universit\'ario Trindade,
CEP 88.040-900 Florian\'opolis-SC, Brasil \footnote{Pesquisador Visitante do CNPq.}} 

\address{Department of Mathematics and Statistics, University of Ottawa,
        585 King Edward Avenue, Ottawa, Ontario, K1N6N5 Canada \footnote{Permanent address.}}

\begin{abstract} 
In response to a 1997 problem of M. Vidyasagar, we state a criterion for PAC learnability of a concept class $\mathscr C$ under the family of all non-atomic (diffuse) measures on the domain $\Omega$. 
The uniform Glivenko--Cantelli property with respect to non-atomic measures is no longer a necessary condition, and consistent learnability cannot in general be expected.
Our criterion is stated in terms of a combinatorial parameter $\VC({\mathscr C}\,{\mathrm{mod}}\,\omega_1)$ which we call the VC dimension of $\mathscr C$ modulo countable sets. The new parameter is obtained by ``thickening up'' single points in the definition of VC dimension to uncountable ``clusters''. Equivalently, $\VC(\mathscr C\modd\omega_1)\leq d$ if and only if every countable subclass of $\mathscr C$ has VC dimension $\leq d$ outside a countable subset of $\Omega$. The new parameter can be also expressed as the classical VC dimension of $\mathscr C$ calculated on a suitable subset of a compactification of $\Omega$. We do not make any measurability assumptions on $\mathscr C$, assuming instead the validity of Martin's Axiom (MA). Similar results are obtained for function learning in terms of fat-shattering dimension modulo countable sets, but, just like in the classical distribution-free case, the finiteness of this parameter is sufficient but not necessary for PAC learnability under non-atomic measures.
\end{abstract}

\begin{keyword}
PAC learnability \sep non-atomic measures \sep learning rule \sep uniform Glivenko--Cantelli classes \sep
\sep Martin's Axiom \sep VC dimension modulo countable sets \sep
fat shattering dimension modulo countable sets 
\MSC[2010] 68T05 \sep 03E05
\end{keyword}

\end{frontmatter}


\section{Introduction}
A fundamental result of statistical learning theory says that under some mild measurability assumptions on a concept class $\mathscr C$ the three conditions are equivalent: (1) $\mathscr C$ is distribution-free PAC learnable over the family $P(\Omega)$ of all probability measures on the domain $\Omega$, (2) $\mathscr C$ is a uniform Glivenko--Cantelli class with respect to $P(\Omega)$, and (3) the Vapnik--Chervonenkis dimension of $\mathscr C$ is finite \cite{VC1968,VC1971,BEHW}.
In this paper we are interested in the problem, discussed by Vidyasagar in both editions of his book \cite{vidyasagar1997,vidyasagar2003} as problem 12.8, of giving a similar combinatorial description of concept classes $\mathscr C$ which are PAC learnable under the family $P_{na}(\Omega)$ of all non-atomic probability measures on $\Omega$. (A measure $\mu$ is {\em non-atomic}, or {\em diffuse,} if every set $A$ of strictly positive measure contains a subset $B$ with $0<\mu(B)<\mu(A)$.)

The condition $\VC({\mathscr C})<\infty$, while of course sufficient for $\mathscr C$ to be learnable under $P_{na}(\Omega)$,
is not necessary. 
Let a concept class $\mathscr C$ consist of all finite and all cofinite subsets of a standard Borel space $\Omega$. Then $\VC({\mathscr C})=\infty$, and moreover $\mathscr C$ is clearly not a uniform Glivenko-Cantelli class {\em with respect to non-atomic measures.} At the same time, $\mathscr C$ is PAC learnable under non-atomic measures: any learning rule $\mathcal L$ consistent with the subclass $\{\emptyset,\Omega\}$ will learn $\mathscr C$. Notice that $\mathscr C$ is not {\em consistently} learnable under non-atomic measures: there are consistent learning rules mapping every training sample to a finite set, and they will not learn any cofinite subset of $\Omega$. 

The most salient feature of this example is that PAC learnability of a concept class $\mathscr C$ under non-atomic measures is not affected by adding to $\mathscr C$ symmetric differences $C\bigtriangleup N$ for each $C\in {\mathscr C}$ and every 
countable set $N$.

A version of VC dimension oblivious to this kind of set-theoretic ``noise'' is obtained from the classical definition by ``thickening up'' individual points and replacing them with uncountable clusters (Figure \ref{fig:shattered-3}).

\begin{figure}[ht]
\begin{center}
\scalebox{0.225}[0.225]{\includegraphics{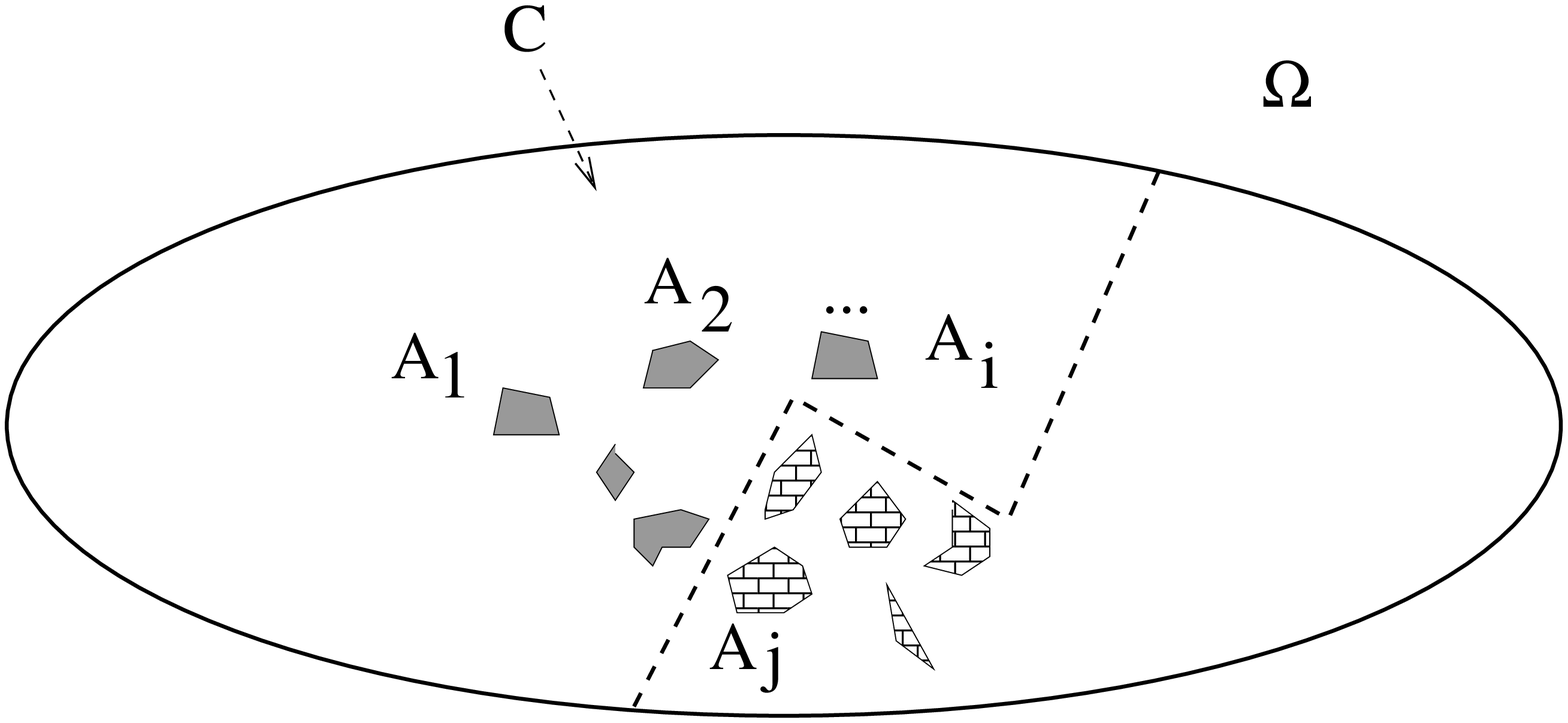}} 
\caption{A family $A_1,A_2,\ldots,A_n$ of uncountable sets shattered by $\mathscr C$.}
\label{fig:shattered-3}
\end{center}
\end{figure}

Define the {\em VC dimension of a concept class $\mathscr C$ modulo countable sets} as the supremum of natural $n$ for which there exists a family of $n$ uncountable sets, $A_1,A_2,\ldots,A_n\subseteq\Omega$, shattered by $\mathscr C$ in the sense that for each $J\subseteq \{1,2,\ldots,n\}$, there is $C\in {\mathscr C}$ 
which contains all sets $A_i$, $i\in J$, and is disjoint from all sets $A_j$, $j\notin J$. Denote this parameter by $\VC({\mathscr C}\modd\omega_1)$. Clearly, for every concept class $\mathscr C$ 
\[\VC({\mathscr C}\modd\omega_1)\leq \VC({\mathscr C}).\]
In our example above, one has $\VC({\mathscr C}\modd\omega_1)=1$, even as $\VC({\mathscr C})=\infty$.

Our main theorem for PAC concept learning under non-atomic measures requires an additional set-theoretic hypothesis, {\em Martin's Axiom} ({\em MA}) \cite{fremlin,jech,kunen}. This is one of the most often used and best studied additional set-theoretic assumptions beyond the standard Zermelo-Frenkel set theory with the Axiom of Choice (ZFC). Here is one of the equivalent forms. Let $B$ be a Boolean algebra satisfying the countable chain condition (that is, every family of pairwise disjoint elements of $B$ is countable). Then for every family ${\mathcal X}$ of cardinality $< 2^{\aleph_0}$ of subsets of $B$ there is a maximal ideal $\xi$ (element of the Stone space of $B$) with the property: each $X\in {\mathcal X}$ disjoint from $\xi$ admits an upper bound $x\notin\xi$. 

The above conclusion holds unconditionally if $\mathcal X$ is countable (due to the Baire Category Theorem), and thus Martin's Axiom follows from the Continuum Hypothesis (CH). At the same time, MA is compatible with the negation of CH, and in fact it is namely the combination MA+$\neg$CH that is really interesting. As a consequence of Martin's Axiom, the usual sigma-additivity of a measure can be strengthened as follows: the union of $<2^{\aleph_0}$ Lebesgue measurable sets is Lebesgue measurable. Essentially, this is the only property we need in the proof of the following result.

\begin{theorem}
\label{th:main}
Let $(\Omega,{\mathscr A})$ be a standard Borel space, and let ${\mathscr C}\subseteq {\mathscr A}$ be a concept class. Under Martin's Axiom, the following are equivalent.

\begin{enumerate}
\item \label{mainth:1} 
$\mathscr C$ is PAC learnable under the family of all non-atomic measures.
\item \label{mainth:3} 
$\VC({\mathscr C}\modd\omega_1)=d<\infty$. 
\item \label{mainth:4} 
Every countable subclass ${\mathscr C}^\prime\subseteq {\mathscr C}$ has finite VC dimension on the complement to some countable subset of $\Omega$ (which depends on ${\mathscr C}^\prime$).
\item\label{mainth:4a} There is $d$ such that for every countable ${\mathscr C}^\prime\subseteq {\mathscr C}$ one has $\VC({\mathscr C}^\prime)\leq d$ on the complement to some countable subset of $\Omega$ (depending on $\mathscr C^\prime$). 
\item\label{mainth:5a} Every countable subclass ${\mathscr C}^\prime\subseteq {\mathscr C}$ is a uniform Glivenko--Cantelli class with respect to the family of non-atomic measures.
\item\label{mainth:5} Every countable subclass ${\mathscr C}^\prime\subseteq {\mathscr C}$ is a uniform Glivenko--Cantelli class with respect to the family of non-atomic measures, with sample complexity $s(\e,\delta)$ which only depends on $\mathscr C$ and not on ${\mathscr C}^\prime$.
\end{enumerate}
If $\mathscr C$ is universally separable \cite{pollard}, the above are also equivalent to:
\begin{enumerate}
\item[7.] $\VC$ dimension of $\mathscr C$ is finite outside of a countable subset of $\Omega$.
\item[8.] $\mathscr C$ is a uniform Glivenko-Cantelli class with respect to the family of non-atomic probability measures.
\item[9.]
$\mathscr C$ is consistently PAC learnable under the family of all non-atomic measures.
\end{enumerate}
\end{theorem}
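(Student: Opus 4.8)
The plan is to route all the equivalences through one combinatorial lemma, valid in ZFC: \emph{for a countable $\mathscr C'\subseteq\mathscr A$ and an integer $d$, one has $\VC(\mathscr C'\modd\omega_1)\le d$ if and only if $\VC(\mathscr C')\le d$ on $\Omega\setminus N$ for some countable $N\subseteq\Omega$.} The nontrivial direction I would prove via a product-block decomposition of shattering witnesses. For a $2^{d+1}$-tuple $\sigma=(C_J)_{J\subseteq\{1,\dots,d+1\}}$ of members of $\mathscr C'$, put $A_i^\sigma=\bigcap_{J\ni i}C_J\cap\bigcap_{J\not\ni i}(\Omega\setminus C_J)$; these are pairwise disjoint Borel sets, and an ordered tuple $(x_1,\dots,x_{d+1})$ is shattered \emph{via} $\sigma$ precisely when $(x_1,\dots,x_{d+1})\in A_1^\sigma\times\dots\times A_{d+1}^\sigma$. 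If for some $\sigma$ all the $A_i^\sigma$ were uncountable, this would be an uncountable $(d{+}1)$-configuration shattered by $\mathscr C'$, against the hypothesis; hence each $\sigma$ admits an index $i(\sigma)$ with $A_{i(\sigma)}^\sigma$ countable (possibly empty), and $N:=\bigcup_\sigma A_{i(\sigma)}^\sigma$ is countable because $\mathscr C'$ is. No $(d{+}1)$-subset of $\Omega\setminus N$ can then be shattered, since a witness $\sigma$ would force a sampled point into $A_{i(\sigma)}^\sigma\subseteq N$. The converse is immediate: delete one point from each block of an uncountable configuration. Granting the lemma, $(2)\Leftrightarrow(3)\Leftrightarrow(4)$ is bookkeeping: $(2)\Rightarrow(4)$ applies the lemma to each countable subclass with the uniform bound $d=\VC(\mathscr C\modd\omega_1)$; $(4)\Rightarrow(3)$ is trivial; $(3)\Rightarrow(4)$ passes to a union of witnessing subclasses (still countable) if no uniform bound existed; and $(4)\Rightarrow(2)$ restricts a hypothetical uncountable configuration to the finite subclass of concepts realizing it.

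For the Glivenko--Cantelli conditions: $(2)\Rightarrow(6)$ (hence $\Rightarrow(5)$) because a countable $\mathscr C'$ has, by the lemma, $\VC(\mathscr C'|_{\Omega\setminus N})\le d$ for a countable $N$, every non-atomic $\mu$ kills $N$, and the supremum $\sup_{C\in\mathscr C'}\abs{\mu(C)-\mu_m(C)}$ — over a countable, hence jointly measurable, family of VC dimension $\le d$ — is controlled by the classical Vapnik--Chervonenkis bound with $s(\e,\delta)$ depending only on $d$. For $(5)\Rightarrow(2)$: if $\VC(\mathscr C\modd\omega_1)=\infty$, for each $n$ fix an uncountable configuration $A_1,\dots,A_n$ shattered by a finite subclass $\mathscr C'_n$; each $A_i$ is an uncountable Borel set in a standard Borel space, hence supports a non-atomic probability measure, and $\mu_n:=\frac1n\sum_i\mu_i$ is non-atomic with $\mu_n(A_i)=1/n$. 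A sample of size $m<n/2$ meets at most $m$ blocks, so the concept realizing the union of the unmet blocks has $\mu_n$-measure $>1/2$ but empirical measure $0$; thus the countable subclass $\bigcup_n\mathscr C'_n$ fails uniform Glivenko--Cantelli against non-atomic measures, contradicting $(5)$.

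For PAC learnability, $(1)\Rightarrow(2)$ is again ZFC: if $\VC(\mathscr C\modd\omega_1)=\infty$, the measures $\mu_n$ above defeat any (permissible) learning rule by a no-free-lunch count — with $n\approx 10m$, a size-$m$ sample leaves most blocks unlabelled, so averaging over the $2^n$ targets $C_J$ exhibits a target of constant error probability, for every $m$. The interesting direction is $(2)\Rightarrow(1)$, and this is where Martin's Axiom enters. I would take the rule that outputs some concept of $\mathscr C$ consistent with the sample and analyse it for a fixed non-atomic $\mu$ and target $C$: its error event is contained in $U=\bigcup\{\{S:C'\text{ consistent with }S\}:C'\in\mathscr C,\ \mu(C'\triangle C)>\e\}$. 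Since $\{S:C'\text{ consistent}\}$ depends on $C'$ only through $[C'\triangle C]$ modulo $\mu^{m}$-null sets and the measure algebra of $\mu$ is ccc and separable, there is a \emph{countable} subfamily $\mathscr C''$ with $U=\bigcup_{C'\in\mathscr C''}\{S:C'\text{ consistent with }S\}$ up to a $\mu^{m}$-null set; it is exactly here that one invokes Martin's Axiom, in the form that the $\mu^{m}$-measurable sets are closed under unions of fewer than $2^{\aleph_0}$ members, so that $U$ — a priori a union of up to $2^{\aleph_0}$ measurable sets — is genuinely measurable and agrees with this countable sub-union. The class $\{C'\triangle C:C'\in\mathscr C''\}$ is countable and of VC dimension $\le d=\VC(\mathscr C\modd\omega_1)$ off a countable $\mu$-null set, so the classical $\e$-net theorem bounds $\mu^{m}(U)$ by a polynomial in $m$ times $2^{-\e m/2}$, yielding a sample complexity depending only on $d$.

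Finally, in the universally separable case one fixes a countable pointwise-dense $\mathscr C_0\subseteq\mathscr C$; since shattering is a finite condition, $\VC(\mathscr C_0|_S)=\VC(\mathscr C|_S)$ for every $S\subseteq\Omega$, whence also $\VC(\mathscr C_0\modd\omega_1)=\VC(\mathscr C\modd\omega_1)$ through the lemma, giving $(2)\Leftrightarrow(7)$. Then $(7)\Rightarrow(8)$: off a countable $\mu$-null set $\mathscr C$ is universally separable of finite VC dimension, hence uniformly Glivenko--Cantelli by the classical theory with sample complexity depending only on $d$; $(8)\Rightarrow(5)$ is trivial. And $(7)\Rightarrow(9)$: the rule returning the first member of $\mathscr C_0$ consistent with the sample is permissible and consistent (one exists by pointwise density), and the $\e$-net theorem applied to $\{\hat C\triangle C:\hat C\in\mathscr C_0\}$ off the countable null set shows that with high probability every consistent $\hat C\in\mathscr C_0$ is $\e$-close to the target; $(9)\Rightarrow(1)$ is trivial. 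I expect the two genuine obstacles to be the product-block decomposition underlying the lemma — obtaining a \emph{single} countable $N$ rather than a transfinite construction — and, in $(2)\Rightarrow(1)$, turning the family of ``consistent-with-the-sample'' events attached to a possibly non-measurable class into an honestly measurable event, which is precisely the point at which Martin's Axiom substitutes for a measurability hypothesis on $\mathscr C$.
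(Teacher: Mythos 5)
Your combinatorial lemma and its product-block proof are correct, and in fact cleaner than the paper's route (which passes through Theorem~\ref{th:shatter}, the Boolean algebra generated by $\mathscr C'$, Theorem~\ref{th:uscc} and Corollary~\ref{c:criterion}); your argument produces a \emph{single} countable $N$ at once because there are only countably many $2^{d+1}$-tuples $\sigma$ of concepts drawn from a countable $\mathscr C'$, and that really is the crux the paper also isolates. Your $(1)\Rightarrow(2)$ and $(5)\Rightarrow(2)$ arguments, built on non-atomic measures supported on the uncountable blocks via Lemma~\ref{l:supports}, are likewise faithful in spirit to Theorem~\ref{th:necessary}.

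The genuine gap is in $(2)\Rightarrow(1)$, which the paper itself flags as the core novelty. You take \emph{an arbitrary} learning rule that outputs some concept of $\mathscr C$ consistent with the sample and analyse
\[
U=\bigcup\bigl\{\{S:C'\ \text{consistent with}\ S\}: C'\in\mathscr C,\ \mu(C'\triangle C)>\e\bigr\},
\]
intending to invoke Martin's Axiom to conclude $U$ is measurable and to extract a countable subfamily with the same union modulo null. This step fails on two counts. First, the version of MA actually available (Theorem~\ref{th:martin-solovay} and its corollary on unions) makes unions of \emph{strictly fewer than} $2^{\aleph_0}$ Lebesgue-measurable sets measurable; your $U$ is a union indexed by a subset of $\mathscr C$, and $\mathscr C\subseteq\mathscr A$ can have cardinality exactly $2^{\aleph_0}$. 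A union of $2^{\aleph_0}$ measurable sets can easily be non-measurable (every subset of $[0,1]$ is such a union of singletons), so MA gives you nothing here. Second, the proposed reduction to a countable subfamily via ccc of the measure algebra presupposes the measurability of $U$ — which is exactly what you are trying to establish — and separability of the measure algebra alone does not let you replace a union over an arbitrary subset of it by a countable sub-union.

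The paper's fix, and the technical heart of the whole theorem, is to \emph{choose the consistent rule with care}: minimally well-order $\mathscr C=\{C_\alpha:\alpha<\kappa\}$ and let $\mathcal L$ output the $\prec$-least consistent concept (Lemma~\ref{l:l}). Then for a fixed target $C=C_\alpha$, the set $\mathcal L^{C}=\{\mathcal L(\sigma,C\cap\sigma):\sigma\}$ sits inside the initial segment $\{C_\beta:\beta\le\alpha\}$, which has cardinality $<2^{\aleph_0}$ precisely because the well-ordering is minimal and $\kappa\le2^{\aleph_0}$. Only then do the $<2^{\aleph_0}$ form of MA (Lemma~\ref{l:fma}) and the observation of Lemma~\ref{l:basic} — it suffices that $\mathcal L^C\cup\{C\}$ be uniform Glivenko--Cantelli, not $\mathscr C$ itself — combine to give PAC learnability. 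Your proposal omits the well-ordering device entirely, and without it the measurability argument does not go through.
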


%

Notice that for universally separable classes, (\ref{mainth:1})--(9) are pairwise equivalent without additional set-theoretic assumptions. (A class $\mathscr C$ is {\em universally separable} if it contains a countable subclass $\mathscr C^\prime$ which is {\em universally dense}: for each $C\in {\mathscr C}$ there is a sequence $(C_n)$, $C_n\in {\mathscr C}^\prime$, such that the indicator functions $I_{C_n}$ converge to $I_C$ pointwise.)
The concept class in the above example (which is even image admissible Souslin \cite{dudley}, but not universally separable) shows that in general (7), (8) and (9) are not equivalent to the remaining conditions. 

The core of Theorem \ref{th:main} --- and the main technical novelty of our paper --- is the proof of the implication (\ref{mainth:4})$\Rightarrow$(\ref{mainth:1}). It is based on a special choice of a consistent learning rule $\mathcal L$ having the property that for every concept $C\in{\mathscr C}$, the image of all learning samples of the form $(\sigma,C\cap\sigma)$ under $\mathcal L$ forms a uniform Glivenko--Cantelli class. It is for establishing this property of $\mathcal L$ that we need Martin's Axiom.

Most of the remaining implications are relatively straightforward adaptations of the standard techniques of statistical learning. Nevertheless, 
(\ref{mainth:3})$\Rightarrow$(\ref{mainth:4}) requires a certain technical dexterity, and we study this implication in the setting of Boolean algebras.

An analog of Theorem \ref{th:main} also holds for PAC learning of function classes. In this case, we are employing a version of fat shattering dimension \cite{ABDCBH}, which we call fat shattering dimension modulo countable sets and denote $\fat_{\e}({\mathscr F}\modd\omega_1)$.
However, just like in the classical case, finiteness of this combinatorial parameter at every scale $\e>0$, while sufficient for PAC learnability of a function class $\mathscr F$ under non-atomic measures, is not necessary. It is easy to construct a function class $\mathscr F$ with $\fat_{\e}({\mathscr F}\modd\omega_1)=\infty$ which is distribution-free probably {\em exactly} learnable (Example \ref{ex:fprecpacinfdim}).

Recall that a function $f\colon X\to Y$ between two measurable spaces (sets equipped with sigma-algebras of subsets) is {\em universally measurable} if for every measurable subset $A\subseteq Y$ and every probability measure $\mu$ on $X$ the set $f^{-1}(A)$ is $\mu$-measurable. For instance, Borel functions are universally measurable.

\begin{theorem}
\label{th:fmain}
Let $\Omega$ be a standard Borel space, and let ${\mathscr F}$ be a class of universally measurable functions on $\Omega$ with values in $[0,1]$. Consider the following conditions.

\begin{enumerate}
\item \label{fmainth:1} 
$\mathscr F$ is PAC learnable under the family of all non-atomic measures.
\item \label{fmainth:3} 
For every $\e>0$, $\fat_{\e}({\mathscr F}\modd\omega_1)=d(\e)<\infty$. 
\item \label{fmainth:4} 
For each $\e>0$, every countable subclass ${\mathscr F}^\prime\subseteq {\mathscr F}$ has finite $\e$-fat shattering dimension on the complement to some countable subset of $\Omega$ (which depends on ${\mathscr F}^\prime$).
\item\label{fmainth:4a} There is a function $d(\e)$ such that for every countable ${\mathscr F}^\prime\subseteq {\mathscr F}$ and all $\e>0$ one has $\fat_{\e}({\mathscr F}^\prime)\leq d(\e)$ on the complement to some countable subset of $\Omega$ (depending on $\mathscr F^\prime$). 
\item\label{fmainth:5a} Every countable subclass ${\mathscr F}^\prime\subseteq {\mathscr F}$ is a uniform Glivenko--Cantelli class with respect to the family of non-atomic measures.
\item\label{fmainth:5} Every countable subclass ${\mathscr F}^\prime\subseteq {\mathscr F}$ is a uniform Glivenko--Cantelli class with respect to the family of non-atomic measures, with sample complexity $s(\e,\delta)$ which only depends on $\mathscr F$ and not on ${\mathscr F}^\prime$.
\end{enumerate}
The conditions (\ref{fmainth:3})--(\ref{fmainth:5}) are pairwise equivalent, and under Martin's Axiom each of them implies (\ref{fmainth:1}).
If $\mathscr F$ is universally separable, the conditions (\ref{fmainth:3})--(\ref{fmainth:5}) are also equivalent to:
\begin{enumerate}
\item[7.] For each $\e>0$, $\e$-fat shattering dimension of $\mathscr F$ is finite outside of a countable subset of $\Omega$.
\item[8.] $\mathscr F$ is a uniform Glivenko-Cantelli class with respect to the family of non-atomic probability measures,
\end{enumerate}
and each of them implies
\begin{enumerate}
\item[9.]
$\mathscr F$ is consistently PAC learnable under the family of all non-atomic measures.
\end{enumerate}
\end{theorem}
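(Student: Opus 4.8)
The plan is to mirror the architecture of Theorem \ref{th:main}, replacing VC dimension modulo countable sets by fat-shattering dimension modulo countable sets throughout, and to isolate the two places where the function-valued setting genuinely differs from the concept-valued one: the scale parameter $\e$ must be carried along everywhere, and the ``consistent learning rule'' argument used for (\ref{mainth:4})$\Rightarrow$(\ref{mainth:1}) in the concept case must be replaced by an approximation-then-reduction step that discretizes $[0,1]$. First I would establish the equivalence of (\ref{fmainth:3})--(\ref{fmainth:5}). The implications (\ref{fmainth:3})$\Leftrightarrow$(\ref{fmainth:4})$\Leftrightarrow$(\ref{fmainth:4a}) are the fat-shattering analogue of (\ref{mainth:3})$\Leftrightarrow$(\ref{mainth:4})$\Leftrightarrow$(\ref{mainth:4a}): one repeats the Boolean-algebra argument used for the concept case, now applied scale by scale, noting that for a fixed countable $\mathscr F'$ and a fixed $\e$, the $\e$-fat-shattering condition for a finite configuration of points is a countable conjunction of witnessing inequalities, so the ``thickening'' of points to uncountable clusters goes through verbatim. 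For (\ref{fmainth:4a})$\Leftrightarrow$(\ref{fmainth:5a})$\Leftrightarrow$(\ref{fmainth:5}) one invokes the classical quantitative uniform Glivenko--Cantelli theorem for function classes of finite fat-shattering dimension (Alon--Ben-David--Cesa-Bianchi--Haussler, cited as \cite{ABDCBH}): after deleting the relevant countable set, the restricted class has $\e$-fat-shattering dimension $\le d(\e)$ for all $\e$, hence is uniform Glivenko--Cantelli with a sample complexity bound depending only on the function $d(\cdot)$, which in turn depends only on $\mathscr F$; deleting a countable set does not change any non-atomic measure, so the GC property transfers back. The converse, that uniform Glivenko--Cantelli of every countable subclass forces finite $\e$-fat-shattering dimension modulo countable sets, is the contrapositive: an $\e$-fat-shattered family of uncountable clusters lets one build, using non-atomicity to place mass on distinct clusters, a countable subclass and a sequence of non-atomic measures on which empirical means fail to converge uniformly, exactly as in the concept case.

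Next, under Martin's Axiom, I would prove that (\ref{fmainth:4}) implies (\ref{fmainth:1}). This is the technical heart, and I expect it to be the main obstacle. The strategy is the one flagged in the introduction for the concept case: construct a single learning rule $\mathcal L$, consistent in the appropriate approximate sense, such that for every target $f\in\mathscr F$ the set of hypotheses $\mathcal L$ can output on samples $(\sigma, f|_\sigma)$ forms a uniform Glivenko--Cantelli class of functions. Concretely, I would fix a scale $\e$, pass to a $\lceil 1/\e\rceil$-level discretization of the value range, and observe that $\fat_\e(\mathscr F' \modd \omega_1)<\infty$ for every countable $\mathscr F'$ lets us, after removing a countable ``bad'' set (whose union over all the relevant countable subclasses is still null, or at least measurable-and-null, precisely by the Martin's Axiom consequence that a union of $<2^{\aleph_0}$ measurable sets is measurable), treat $\mathscr F$ on the remaining domain as a class with uniformly bounded fat-shattering dimension at every scale. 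On that remaining domain the restricted class is uniform Glivenko--Cantelli, so an empirical-risk-type rule over a suitable countable $\e$-dense (in the pseudmetric induced by a random sample) subfamily learns to within $O(\e)$; letting $\e\to 0$ along a sequence and taking a diagonal rule yields PAC learnability under all non-atomic measures. The delicate point, and where Martin's Axiom is indispensable, is that the ``bad countable set'' one must discard depends on the countable subclass, hence implicitly on the (unknown) target and on the sample; one must choose the learning rule so that the union of all bad sets ever invoked has measure zero under the (arbitrary, non-atomic) sampling measure, and MA is what guarantees this union is measurable and null rather than a non-measurable set of full outer measure.

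Finally, the statements for universally separable $\mathscr F$ follow cheaply. If $\mathscr F$ has a countable universally dense subclass $\mathscr F_0$, then for every $\e$ the $\e$-fat-shattering dimension of $\mathscr F$ equals that of $\mathscr F_0$ off a countable set (pointwise limits of indicator-like truncations preserve $\e'$-shattering for $\e'$ slightly smaller than $\e$), which is exactly condition~7, and it coincides with (\ref{fmainth:4}) because there is essentially only one countable subclass to worry about; condition~8 is then (\ref{fmainth:5a}) applied to $\mathscr F_0$ together with the observation that universal density lets empirical means for $\mathscr F$ be controlled by those for $\mathscr F_0$. For the implication to consistent learnability~(9): once $\mathscr F$ is itself uniform Glivenko--Cantelli with respect to non-atomic measures (condition~8), any rule returning a member of $\mathscr F_0$ whose empirical risk on the sample is within $\e$ of the minimum is consistent and PAC under non-atomic measures, by the standard uniform-deviation argument — no set-theoretic hypothesis is needed here, since the countable subclass $\mathscr F_0$ carries no measurability pathology. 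The only asymmetry with Theorem \ref{th:main} that I would explicitly flag (and illustrate via Example \ref{ex:fprecpacinfdim}) is that (\ref{fmainth:1}) does \emph{not} imply (\ref{fmainth:3}): a function class can be distribution-free probably exactly learnable while having infinite $\e$-fat-shattering dimension modulo countable sets, so the chain of equivalences cannot be closed at the top as it was in the concept case.
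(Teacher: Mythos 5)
Your high-level architecture is right — you correctly identify the implication chain $(\ref{fmainth:3})\Rightarrow(\ref{fmainth:4})\Rightarrow(\ref{fmainth:4a})\Rightarrow(\ref{fmainth:5})\Rightarrow(\ref{fmainth:5a})$, that the ABDCBH bound does the work of $(\ref{fmainth:4a})\Rightarrow(\ref{fmainth:5})$, that MA enters only at the step to $(\ref{fmainth:1})$, that Lemma~\ref{l:rational}/Lemma~\ref{l:shatt} handle the universally separable reduction, and that $(\ref{fmainth:1})\not\Rightarrow(\ref{fmainth:3})$ via Example~\ref{ex:fprecpacinfdim}. You also supply a contrapositive argument for $(\ref{fmainth:5a})\Rightarrow(\ref{fmainth:3})$ (mimicking Theorem~\ref{th:necessary} with non-atomic measures placed on the shattered uncountable clusters), which the paper's proof key leaves implicit; that is a genuinely useful addition if one wants to see the cycle closed explicitly.

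Where you genuinely diverge from the paper, and where there is a gap, is the MA step. You announce the right slogan (``construct $\mathcal L$ so that $\mathcal{L}^f \cup\{f\}$ is uniform Glivenko--Cantelli''), but then implement it by discretizing $[0,1]$ into $O(1/\e)$ levels, deleting a scale-dependent countable ``bad set,'' and running a kind of ERM over a ``countable $\e$-dense'' subfamily. This is not how the paper proceeds, and your version has a real hole, which you partly flag yourself: the bad set depends on the countable subclass, hence (implicitly) on the unknown target, and the union of all bad sets one might ever invoke across targets in an uncountable class need not be small (or even definable); MA guarantees measurability of unions of fewer than $2^{\aleph_0}$ measurable sets, not that an unindexed union is null. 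There is also no natural sample-independent countable $\e$-dense subfamily to minimize over, and approximate consistency does not feed cleanly into Lemma~\ref{l:basic}, whose hypothesis is about exact-consistency outputs $\mathcal{L}(f\upharpoonright\sigma)$.

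The paper's machinery avoids all of this: Lemma~\ref{l:l} well-orders $\mathscr F$ and defines $\mathcal L(\sigma,\tau)$ to be the $\prec$-least $f_\beta$ with $f_\beta\upharpoonright\sigma=\tau$. This is an \emph{exactly} consistent rule (the target itself is a witness), it needs no discretization, and it has the crucial property that $\mathcal L^f$ is contained in the initial segment below $f$, hence has cardinality $<2^{\aleph_0}$. Lemma~\ref{l:fma} then upgrades ``every countable subclass is uniform GC w.r.t.\ $\mathcal P$ with a common sample complexity'' to ``every subclass of cardinality $<2^{\aleph_0}$ is uniform GC with the same complexity,'' using Martin--Solovay $2^{\aleph_0}$-additivity; Lemma~\ref{l:mapac} feeds $\mathcal L^f\cup\{f\}$ into Lemma~\ref{l:basic}, and Theorem~\ref{th:countablesubclassesugc} wraps it up. None of these lemmas distinguishes concepts from $[0,1]$-valued functions — they are stated for general function classes — so the transfer from Theorem~\ref{th:main} to Theorem~\ref{th:fmain} at this step is free, not something requiring a new discretization argument. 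If you replace your paragraph with an application of Lemmas~\ref{l:l}, \ref{l:fma}, \ref{l:mapac} and Theorem~\ref{th:countablesubclassesugc} verbatim, the proof goes through; as written, the MA step is not established.
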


We begin the paper by reviewing a general formal setting for PAC learnability, after which we proceed to analysis of a well-known example of a concept class of VC dimension $1$ which is not a uniform Glivenko--Cantelli class and is not consistently PAC learnable \cite{DD,BEHW}. The example was originally constructed under the Continuum Hypothesis, though in fact Martin's Axiom suffices. We observe that the class $\mathscr C$ in the example is still PAC learnable, and this observation provides a clue to our approach to constructing learning rules. 

This analysis is followed by a series of general results about PAC learnability of a function class $\mathscr F$ under non-atomic measures under Martin's Axiom and without making any assumptions on measurability of $\mathscr F$ except the measurability of individual members $f$ of the class.

In the two sections to follow, we discuss Boolean algebras which appear to provide a useful framework for studying concept learning under intermediate families of measures, and commutative $C^\ast$-algebras and their spaces of maximal ideals, which provide a similar convenient framework for function classes. 
In particular, we will show that for a concept class $\mathscr C$ our version of the VC dimension modulo countable sets, $\VC({\mathscr C}\modd\omega_1)$, is just the usual VC dimension of the family of closures, ${\mathrm{cl}}(C)$, of all $C\in {\mathscr C}$, taken in a suitable compactification $b\,\Omega$ of $\Omega$ and computed over a certain subdomain of $b\,\Omega$, as illustrated in Figure \ref{fig:stone_space3}.

\begin{figure}[ht]
\begin{center}
  \scalebox{0.275}[0.275]{\includegraphics{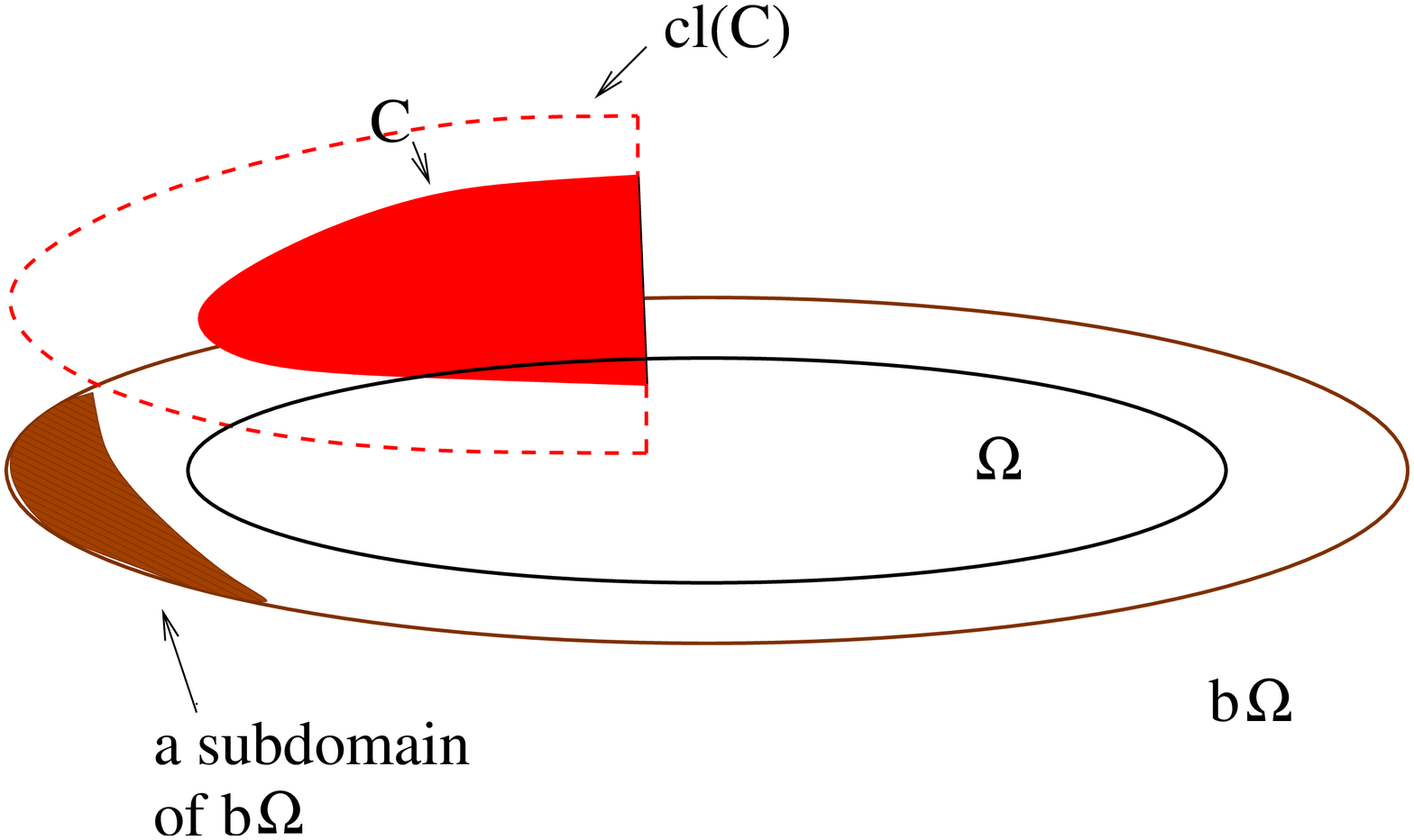}} 
  \caption{$\VC({\mathscr C}\modd\omega_1)$ via the usual VC dimension of $\mathscr C$.}
\label{fig:stone_space3}
\end{center}
\end{figure}

A similar result holds for the fat shattering dimension.

At the next stage we establish the corresponding parts of Theorems \ref{th:main} and \ref{th:fmain} for universally separable classes, at which moment we have all the machinery needed to accomplish the general case.

A conference version of this paper \cite{pestov:2010a} treated the case of concept classes, but we believe that the presentation of our approach has now improved considerably.

\section{The setting}

We need to fix a precise setting, which is mostly standard \cite{vidyasagar1997,vidyasagar2003}, see also \cite{ABDCBH,BEHW,mendelson:03}.
The {\em domain} ({\em instance space}) $\Omega=(\Omega,{\mathscr A})$ is a {\em measurable space,} that is, a set $\Omega$ equipped with a sigma-algebra of subsets $\mathscr A$. Typically, $\Omega$ is assumed to be a {\em standard Borel space,} that is, a complete separable metric space equipped with the sigma-algebra of Borel subsets. We will clarify the assumption whenever necessary. 

In the learning model, a set $\mathcal P$ of probability measures on $\Omega$ is fixed. Usually either ${\mathcal P}=P(\Omega)$ is the set of all probability measures (distribution-free learning), or ${\mathcal P}=\{\mu\}$ is a single measure (learning under fixed distribution). In our article, the case of interest is the family ${\mathcal P}=P_{na}(\Omega)$ of all non-atomic measures.

We will not distinguish between a measure $\mu$ and its Lebesgue completion, that is, an extension of $\mu$ over the larger sigma-algebra of Lebesgue measurable subsets of $\Omega$. Consequently, we will sometimes use the term {\em measurability} meaning {\em Lebesgue measurability}. No confusion can arise here.

A {\em function class}, $\mathscr F$, is a family of functions from $\Omega$ to the unit interval $[0,1]$ which are measurable with regard to every $\mu\in {\mathcal P}$. For instance, elements of $\mathscr F$ can be universally measurable, or most often Borel. A {\em concept class}, $\mathscr C$, is a function class with values in $\{0,1\}$ or, equivalently, a family of measurable subsets of $\Omega$. 

Every probability measure $\mu$ on $\Omega$ determines an $L^1$ distance between functions: 
\[\norm{f-g}_1=\int_{\Omega}\abs{f(x)-g(x)}d\mu(x).\]
For concept classes, this reduces to the following metric:
\[d_{\mu}(A,B)=\mu\left(A\bigtriangleup B \right).\] 

Often it is convenient to approximate the functions from $\mathscr F$ with elements of the {\em hypothesis space,} $\mathscr H$, which is, technically, a family of functions whose closure in each space $L^1(\mu)$, $\mu\in{\mathcal P}$, contains $\mathscr F$. However, in our article we make no distinction between $\mathscr H$ and $\mathscr F$.

A {\em learning sample} is a pair $(\sigma,r)$, where $\sigma$ is a finite subset of $\Omega$ and $r$ is a function from $\sigma$ to $[0,1]$. It is convenient to assume that elements $x_1,x_2,\ldots,x_n\in\sigma$ are ordered, and thus the set of all samples $(\sigma,r)$ with $\abs\sigma=n$ can be identified with $\left(\Omega\times [0,1]\right)^n$. In the case of concept classes, a learning sample is simply a pair $(\sigma,\tau)$ of finite subsets of $\Omega$, where $\tau\subseteq\sigma$ is thought of as the set of points where $r$ takes the value $1$. The set of all samples of size $n$ in this case is $\left(\Omega\times \{0,1\}\right)^n$.

A {\em learning rule} (for $\mathscr F$) is a mapping 
\[{\mathcal L}\colon \bigcup_{n=1}^\infty\Omega^n\times [0,1]^n\to {\mathscr F}\]
which satisfies the following measurability condition:  for every $f\in{\mathscr F}$ and $\mu\in{\mathcal P}$, the function
\begin{equation}
\label{eq:measurabilityl}
\Omega\ni\sigma\mapsto \left\Vert {\mathcal L}(\sigma,f\upharpoonright\sigma)-f\right\Vert_1 \in \R 
\end{equation}
is measurable.

A learning rule $\mathcal L$ is {\em consistent} (with a function class $\mathscr F$) if for every $f\in {\mathscr F}$ and each $\sigma\in\Omega^n$ one has 
\[{\mathcal L}(\sigma,f\upharpoonright \sigma)\upharpoonright\sigma = f\upharpoonright\sigma.\]
In the case of a concept class $\mathscr C$, the consistency condition becomes this: for every $C\in {\mathscr C}$ and each $\sigma\in\Omega^n$ one has 
\[{\mathcal L}(\sigma,C\cap \sigma)\cap\sigma = C\cap\sigma.\]

A learning rule $\mathcal L$ is {\em probably approximately correct} ({\em PAC}) {\em under ${\mathcal P}$} if for every $\e>0$
\begin{equation}
\label{eq:fpac}
\sup_{\mu\in {\mathcal P}}\sup_{f\in {\mathscr F}}
\mu^{\otimes n}\left\{\sigma\in\Omega^n\colon \left\Vert {\mathcal L}(\sigma,f\upharpoonright\sigma)-f\right\Vert_1>\e \right\} \to 0\mbox{ as }n\to\infty.\end{equation}
Here $\mu^{\otimes n}$ denotes the (Lebesgue extension of the) product measure on $\Omega^n$. Now the origin of the measurability condition (\ref{eq:measurabilityl}) on the mapping $\mathcal L$ is clear: it is implicit in (\ref{eq:fpac}). 

Equivalently, there is a function $s(\e,\delta)$ ({\em sample complexity} of $\mathcal L$) such that for each $f\in{\mathscr F}$ and every $\mu\in {\mathcal P}$ an i.i.d. sample $\sigma$ with $\geq s(\e,\delta)$ points has the property $\left\Vert {\mathcal L}(\sigma,f\upharpoonright\sigma)-f\right\Vert_1<\e$ with confidence $\geq 1-\delta$.

In particular, for a concept class $\mathscr C$, it is convenient to rewrite the definition of a PAC learning rule thus: for each $\e>0$,
\begin{equation}
\label{eq:pac}
\sup_{\mu\in {\mathcal P}}\sup_{C\in {\mathscr C}}
\mu^{\otimes n}\left\{\sigma\in\Omega^n\colon \mu\left({\mathcal L(\sigma,C\cap\sigma)}\bigtriangleup C\right)>\e \right\} \to 0\mbox{ as }n\to\infty.\end{equation}

In terms of the sample complexity function $s(\e,\delta)$, a learning rule $\mathcal L$ is PAC if for each $C\in{\mathscr C}$ and every $\mu\in {\mathcal P}$ an i.i.d. sample $\sigma$ with $\geq s(\e,\delta)$ points has the property $\mu(C\bigtriangleup {\mathcal L}(\sigma,C\cap\sigma))<\e$ with confidence $\geq 1-\delta$.

A function class $\mathscr F$ is {\em PAC learnable} {\em under $\mathcal P$}, if there exists a PAC learning rule for $\mathscr F$ ($\mathscr C$) under $\mathcal P$. A class $\mathscr F$ is {\em consistently learnable} (under $\mathcal P$) if every learning rule consistent with $\mathscr F$ is PAC under $\mathcal P$. 
If $\mathcal P = P(\Omega)$ is the set of all probability measures, then $\mathscr F$ is said to be (distribution-free) {\em PAC learnable}. If $P=\{\mu\}$ is a single probability measure, one is talking of {\em learning under a single measure} (or distribution).
These definitions apply in particular to concept classes as well.
Learnability under intermediate families of measures on $\Omega$ has received considerable attention, cf. Chapter 7 in \cite{vidyasagar2003}.

Notice that in this paper, we only talk of {\em potential} PAC learnability, adopting a purely information-theoretic viewpoint. As a consequence, our statements about learning rules are existential rather than constructive, and 
building learning rules by transfinite recursion is perfectly acceptable.

An important concept 
is that of a {\em uniform Glivenko--Cantelli} function class {\em with respect to a family of measures} $\mathcal P$, that is, a function class $\mathscr F$ such that for each $\e>0$
\begin{equation}
\label{eq:fglivenko}
\sup_{\mu\in {\mathcal P}}\mu^{\otimes n}\left\{\sup_{f\in{\mathscr F}}\left\vert \E_{\mu}(f)-  \E_{\mu_n}(f) \right\vert\geq \e\right\}\to 0\mbox{ as }n\to\infty,
\end{equation}
(cf. \cite{dudley}, Ch. 3; \cite{mendelson:03}.) Here $\mu_n$ stands for the empirical (uniform) measure on $n$ points, sampled in an i.i.d. fashion from $\Omega$ according to the distribution $\mu$. The symbol $\E_{\mu_n}$ means the empirical mean of $f$ on the sample $\sigma$. One also says that $\mathscr F$ has the property of {\em uniform convergence of empirical means} ({\em UCEM} property) {\em with respect to} $\mathcal P$ \cite{vidyasagar2003}. 

In the case of a concept class $\mathscr C$, the uniform Glivenko--Cantelli property becomes
\begin{equation}
\label{eq:glivenko}
\sup_{\mu\in {\mathcal P}}\mu^{\otimes n}\left\{\sup_{C\in{\mathscr C}}\left\vert \mu(C)-  \mu_n(C) \right\vert\geq \e\right\}\to 0\mbox{ as }n\to\infty.
\end{equation}
In this case, one says that $\mathscr C$ has the property of {\em uniform convergence of empirical measures}, which is also abbreviated to UCEM property (with respect to $\mathcal P$).

Every uniform Glivenko--Cantelli class (with respect to $\mathcal P$) is PAC learnable (under $\mathcal P$). In the distribution-free situation the converse holds under mild additional measurability conditions on the class (but not always \cite{DD}, see a discussion in Section \ref{s:dd} below). 
For learning under a single measure, it is not so: a PAC learnable class under a single distribution $\mu$ need not be uniform Glivenko-Cantelli with respect to $\mu$ (cf. Chapter 6 in \cite{vidyasagar2003}, or else \cite{pestov:2010}, Example 2.10, where a countable counter-example is given). Not every PAC learnable class under non-atomic measures is uniform Glivenko--Cantelli with respect to non-atomic measures either: the class consisting of all finite and all cofinite subsets of $\Omega$ is a counter-example.

We say, following Pollard \cite{pollard}, that a function class $\mathscr F$ is {\em universally separable} if it contains a countable subfamily $\mathscr F^{\prime}$ which is {\em universally dense} in $\mathscr F$: every function $f\in \mathscr F$ is a pointwise limit of a sequence of elements of $\mathscr F^{\prime}$. By the Lebesgue Dominated Convergence Theorem, for every probability measure $\mu$ on $\Omega$ the set $\mathscr F^{\prime}$ is everywhere dense in $\mathscr F$ in the $L^1(\mu)$-distance. In particular, a concept class $\mathscr C$ is universally separable if it contains a countable subfamily $\mathscr C^\prime$ with the property that for every $C\in {\mathscr C}$ there exists a sequence $(C_n)_{n=1}^\infty$ of sets from $\mathscr C^\prime$ and for every $x\in\Omega$ there is $N$ with the property that, for all $n\geq N$, $x\in C_n$ if $x\in C$, and $x\notin C_n$ if $x\notin C$. 

Probably the main source of uniform Glivenko--Cantelli classes is the finiteness of VC dimension. Assume that $\mathscr C$ satisfies a suitable measurability condition, for instance, $\mathscr C$ is image admissible Souslin \cite{dudley}, or else universally separable. (In particular, a countable $\mathscr C$ satisfies either condition.) If $\VC({\mathscr C})=d <\infty$, then $\mathscr C$ is uniform Glivenko--Cantelli, with a sample complexity bound that does not depend on $\mathscr C$, but only on $\e$, $\delta$, and $d$. The following is a typical (and far from being optimal) such estimate, which can be deduced, for instance, along the lines of \cite{mendelson:03}:
\begin{equation}
\label{eq:standard}
s(\e,\delta,d)\leq
\frac{128}{\e^2}\left(d\log\left(\frac{2e^2}{\e}\log\frac{2e}{\e}\right) + \log\frac 8 {\delta}\right).
\end{equation}
For our purposes, we will fix any such bound and refer to it as a {\em ``standard''} sample complexity estimate for $s(\e,\delta,d)$. 

Let us recall a more general concept of fat shattering dimension \cite{ABDCBH} which is relevant for function classes. Let $\e>0$. A finite subset $A$ of $\Omega$ is {\em $\e$-fat shattered} by a function class $\mathscr F$ with witness function $h\colon A\to [0,1]$ if for every $B\subseteq A$ there is a function $f_B\in {\mathscr F}$ such that 
\begin{equation}
\label{eq:fb}
\begin{cases}f_B(a)>h(a)+\e&\mbox{ for }a\in B,\\
f_B(a)<h(a)-\e&\mbox{ for }a\in A\setminus B.
\end{cases}
\end{equation}
The {\em $\e$-fat shattering dimension} of $\mathscr F$ (over the domain $\Omega$) is defined as
\[{\mathrm{fat}}_{\e}{\mathscr F} = \sup\left\{\abs A\colon A\subseteq\Omega,~ A\mbox{ is $\e$-fat shattered by }\mathscr F \right\}.\]
In particular, if $\mathscr C$ is a concept class, then for any $\e\leq 1/2$ the $\e$-fat shattering dimension of $\mathscr C$ is the VC dimension of $\mathscr C$. If we want to stress that the combinatorial dimension is calculated over a particular domain $\Omega$, we will use the notation $\fat_{\e}({\mathscr F}\upharpoonright\Omega)$ and $\VC({\mathscr C}\upharpoonright\Omega)$.

In the definition of $\e$-fat shattering dimension, one can assume without loss of generality the values of $\e$ and of a witness function to be rational. More precisely, the following holds.

\begin{lemma}
\label{l:rational}
Suppose a finite set $A$ is $\e$-fat shattered by a function class $\mathscr F$. Then there is a rational value $\e^\prime>\e$ such that $A$ is $\e^{\prime}$-fat shattered by $\mathscr F$ with a rational-valued witness function $h^{\prime}\colon A\to\Q$.
\end{lemma}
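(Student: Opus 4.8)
The plan is to start from the given $\e$-fat shattering of $A$ with (possibly irrational) witness function $h$ and a family $\{f_B : B\subseteq A\}\subseteq\mathscr F$ satisfying the strict inequalities in \eqref{eq:fb}, and to extract the necessary slack from the strictness. First I would observe that the finite set
\[
\delta = \min_{B\subseteq A}\ \min\left(\ \min_{a\in B}\bigl(f_B(a)-h(a)-\e\bigr),\ \min_{a\in A\setminus B}\bigl(h(a)-\e-f_B(a)\bigr)\ \right)
\]
is strictly positive, because it is a minimum of finitely many strictly positive quantities (here $A$ is finite, hence $\mathcal P(A)$ is finite). Thus in fact $f_B(a)\geq h(a)+\e+\delta$ for $a\in B$ and $f_B(a)\leq h(a)-\e-\delta$ for $a\in A\setminus B$, for every $B\subseteq A$.

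Next I would perturb $\e$ and $h$ into rational quantities using only a fraction of this slack. Pick a rational $\e'$ with $\e<\e'<\e+\delta/3$, and for each $a\in A$ pick a rational $h'(a)$ with $\abs{h'(a)-h(a)}<\delta/3$; since $h$ takes values in $[0,1]$ and $\delta$ may be taken $\le 1$, one can also keep $h'(a)\in[0,1]$ (clipping to $[0,1]$ only decreases the distance to $h(a)$, which lies in $[0,1]$). Then for $a\in B$,
\[
f_B(a)\ \ge\ h(a)+\e+\delta\ >\ \bigl(h'(a)-\tfrac{\delta}{3}\bigr)+\bigl(\e'-\tfrac{\delta}{3}\bigr)+\delta\ =\ h'(a)+\e'+\tfrac{\delta}{3}\ >\ h'(a)+\e',
\]
and symmetrically $f_B(a)<h'(a)-\e'$ for $a\in A\setminus B$. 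Hence the same family $\{f_B\}$ witnesses that $A$ is $\e'$-fat shattered by $\mathscr F$ with the rational-valued witness function $h'\colon A\to\Q$, with $\e'>\e$ rational, which is exactly the claim.

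There is essentially no obstacle here: the only point requiring a word of care is that $A$ must be finite (so that the minimum defining $\delta$ is over a finite set and is genuinely positive), which is part of the hypothesis, and that the clipped rational approximations $h'(a)$ stay in $[0,1]$ — both routine. The proof is a standard ``open conditions have room'' compactness-free argument, and I would present it essentially as above, perhaps suppressing the explicit formula for $\delta$ in favor of the phrase ``by finiteness, the inequalities in \eqref{eq:fb} hold with a uniform positive margin $\delta$''.
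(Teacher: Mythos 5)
Your proof is correct and takes essentially the same approach as the paper: exploit the strictness of the inequalities and the finiteness of $A$ to find positive slack, then perturb $\e$ and $h$ into rational values using only a fraction of it. The paper organizes the argument point by point (for each $a$ it computes $S_a=\min_{a\in B}f_B(a)$ and $s_a=\max_{a\notin B}f_B(a)$, chooses rational $\e'_a$ and $h'(a)$ in the gap, then sets $\e'=\min_a \e'_a$), whereas you extract a single uniform margin $\delta$ and perturb everything by $\delta/3$; the difference is cosmetic, and your explicit note about keeping $h'$ in $[0,1]$ is a small point the paper leaves implicit.
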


\begin{proof}
Let $h$ be a witness of $\e$-fat shattering for $A$.
For each $B\subseteq A$ choose a function $f_B$ satisfying Condition (\ref{eq:fb}). For every $a\in A$ define
\[S_a = \min_{a\in B}f_B(a),~~s_a = \max_{a\in A\setminus B} f_B(a).\]
One has: $s_a<h(a)-\e < h(a)+\e < S_a$, and so $S_a-s_a>2\e$. One can therefore select rational values $\e^{\prime}_a>\e$ and $h^{\prime}(a)$ such that $s_a+\e^{\prime}_a < h^{\prime}(a) < S_a-\e^{\prime}_a$. This way, we obtain a desired witness function $h^\prime$, and the proof is now finished by posing $\e^\prime=\min_{a\in A}\e^{\prime}_a$.
\end{proof}

Every function class $\mathscr F$ whose $\e$-fat shattering dimension is finite at every scale $\e>0$ is uniform Glivenko--Cantelli. Here is an asymptotic estimate of the sample size taken from \cite{ABDCBH} (Theorem 3.6):
\begin{equation}
\label{eq:asymptoticest}
s(\e,\delta,d) \leq C\left(\frac{1}{\e^2}{d(\e/24)}({\mathscr F})\log^2\frac{d(\e/24)}{\e} +\log\frac{1}{\delta}\right),
\end{equation}
where $d\colon \R_+\to\N$ is the fat-shattering dimension of $\mathscr F$ understood as a function of epsilon, $d(\e)=\fat_{\e}({\mathscr F})$. In the formula, $C$ denotes a universal constant whose value can be extracted from the proofs in \cite{ABDCBH}, but, given the presence of such a loose scale as $\e/24$, does not really matter. Tighter sample size estimates can be found in \cite{BL}. Again, we will refer to Condition (\ref{eq:asymptoticest}) as {\em ``standard''} complexity estimate corresponding to the fat shattering dimension function $d$.

Finally, recall that a subset $N\subseteq\Omega$ is {\em universal null} if for every non-atomic probability measure $\mu$ on $(\Omega,{\mathscr A})$ one has $\mu(N^\prime)=0$ for some Borel set $N^\prime$ containing $N$. Universal null Borel sets are just countable sets. 

\section{Revisiting an example of Durst and Dudley\label{s:dd}}

In order to explain our approach to constructing a learning rule that is PAC under non-atomic distributions, we need to examine the traditional way of proving distribution-free PAC learnability. A usual approach consists of two stages.

1. A function (or concept) class $\mathscr F$ is uniform Glivenko--Cantelli as long as a suitable combinatorial parameter of $\mathscr F$ (VC dimension, fat-shattering dimension etc.) is finite.

2. A uniform Glivenko--Cantelli class $\mathscr F$ is PAC learnable. Moreover, such a class is {\em consistently} PAC learnable: every consistent learning rule $\mathcal L$ for $\mathscr F$ is probably approximately correct.

The proof of every statement of the former type depends in an essential way on the Fubini theorem, and so some measurability restrictions on the class $\mathscr F$ are necessary. Without them, the conclusion is not true in general. Here is a classical example of a concept class having finite VC dimension which is not uniform Glivenko--Cantelli.

\begin{example}[Durst and Dudley \cite{DD}, Proposition 2.2; cf. also \cite{WD}, p. 314; \cite{dudley}, pp. 170--171]
\label{ex:dd} Let $\Omega$ be an uncountable standard Borel space, that is, up to an isomorphism, a Borel space associated to the unit interval $[0,1]$. The cardinality of $\Omega$ is continuum. Choose a minimal well-ordering $\prec$ on $\Omega$, and let $\mathscr C$ consist of all half-open initial segments of the ordered set $(\Omega,\prec)$, that is, subsets of the form $I_y=\{x\in\Omega\colon x\prec y\}$, $y\in\Omega$. Clearly, the VC dimension of the class $\mathscr C$ is one. 

Fix a non-atomic Borel probability measure $\mu$ on $\Omega$ (e.g., the Lebesgue measure on $[0,1]$). 

Now assume the validity of the Continuum Hypothesis. 
Under this assumption, every element of $\mathscr C$ is a countable set, therefore Borel measurable of measure zero. 
At the same time, for every $n$ and each random $n$-sample $\sigma$, there is a countable initial segment $C\in {\mathscr C}$ containing all elements of $\sigma$. The empirical measure of $C$ with respect to $\sigma$ is one. Thus, no finite sample guesses the measure of all elements of $\mathscr C$ to within an accuracy $\e<1$ with a non-vanishing confidence.
\end{example}

A further modification of this construction gives an example of a concept class of finite VC dimension which is not consistently PAC learnable.

\begin{example}[Blumer, Ehrenfeucht, Haussler, and Warmuth \cite{BEHW}, p. 953]
\label{ex:behw}
Again, assume the Continuum Hypothesis.
Add to the concept class $\mathscr C$ from Example \ref{ex:dd} the set $\Omega$ as an element. In other words, form a concept class $\mathscr C^\prime$ consisting of all intitial segments of $(\Omega,\prec)$, including improper ones. One still has $\VC(\mathscr C^\prime)=1$.
For a finite labelled sample $(\sigma,\tau)$ define
\begin{equation}
\label{eq:rule}
{\mathcal L}(\sigma,\tau) = \min\{y\colon \tau\subseteq I_y\}.\end{equation}
The learning rule $\mathcal L$ is clearly consistent with the class $\mathscr C$, but is not probably approximately correct, because for the concept $C=\Omega$ the value ${\mathcal L}(\Omega\cap\sigma)={\mathcal L}(\sigma,\sigma)$ will always return a countable concept $I_y$, and if $\mu$ is a non-atomic Borel probability measure on $\Omega$, then $\mu(C\bigtriangleup I_y)=1$. The concept $C=\Omega$ is not learned to accuracy $\e<1$ with a non-zero confidence.
\end{example}

\begin{remark}
It is important to note that --- again, under the Continuum Hypothesis --- the class ${\mathscr C}^\prime$ is nevertheless distribution-free PAC learnable. 
\end{remark}

Indeed, redefine a well-ordering on ${\mathscr C}^\prime=\{I_x\colon x\in\Omega\}\cup\{\Omega\}$ by making $\Omega$ the smallest element (instead of the largest one) and keeping the order relation between the other elements the same. Denote the new order relation by $\prec_1$, and define a learning rule $\mathcal L_1$ similarly to Eq. (\ref{eq:rule}), but this time understanding the minimum with respect to the order $\prec_1$:
\begin{equation}
\label{eq:rule1}
{\mathcal L}_1(\sigma,\tau) = \min_{(\prec_1)}\left\{C\in{\mathscr C}^{\prime}\colon C\cap\sigma = \bigcap_{\tau\subseteq D}D\right\}.\end{equation}
In essence, $\mathcal L_1$ examines all the concepts following a transfinite order on them, and if a labelled sample is consistent with the class ${\mathscr C}^\prime$, then $\mathcal L_1$ returns the first concept consistent with the sample that it comes across.

To understand what difference it makes with Example \ref{ex:behw}, let $\mu$ be again a non-atomic probability measure on $\Omega$. If $C=\Omega$, then for every sample $\sigma$ consistently labelled with $C$ the rule $\mathcal L_1$ will return $C$, because this is the smallest consistent concept encountered by the algorithm. If $C\neq\Omega$, then for $\mu$-almost all samples $\sigma$ (that is, for a set of $\mu$-measure one) the labelling on $\sigma$ produced by $C$ will be empty, and the concept ${\mathcal L}_1(\sigma,\emptyset)$ returned by $\mathcal L_1$, while possibly different from $C$, will be again a countable concept, meaning that $\mu(C\bigtriangleup{\mathcal L}(\sigma,\emptyset))=0$. 

To give a formal proof that $\mathcal L_1$ is PAC, notice that for every $C\in {\mathscr C}^\prime$ and each $n\in\N$ the collection of pairwise distinct concepts ${\mathcal L}_1(\sigma\cap C)$, $\sigma\in\Omega^n$ is only countable (under Continuum Hypothesis), because they are all contained in the $\prec_1$-initial segment of a minimally ordered set ${\mathscr C}^\prime$ of cardinality continuum, bounded by $C$ itself. As a consequence, the concept class
\begin{equation}
\label{eq:lc}
{\mathcal L}_1^{C}=\{{\mathcal L}_1(\sigma\cap C)\colon\sigma\in\Omega^n,n\in\N\}\subseteq {\mathscr C}^\prime\end{equation}
is also countable (assuming Continuum Hypothesis). The VC dimension of the family ${\mathcal L}_1^{C}\cup\{C\}$ is $\leq 1$, and being countable, it is a uniform Glivenko--Cantelli class with a standard sample complexity as in Eq. (\ref{eq:standard}). Consequently, given $\e,\delta>0$, and assuming that $n$ is sufficiently large, one has for each probability measure $\mu$ on $\Omega$ and every $\sigma\in\Omega^n$
\[\mu(C\bigtriangleup{\mathcal L}(\sigma,C\cap\sigma))<\e\]
provided $n\geq s(\e,\delta,1)$,
as required.

\begin{remark}
\label{r:notalways}
Thus, under the Continuum Hypothesis, the example of Dudley and Durst as modified by Blumer, Ehrenfeucht, Haussler, and Warmuth gives an example of a PAC learnable concept class which is not uniform Glivenko--Cantelli (even if having finite VC dimension). As it will become clear in the next Section, the assumption of Continuum Hypothesis can be weakened to Martin's Axiom. Still, it would be interesting to know whether an example with the same combination of properties can be constructed without additional set-theoretic assumptions.
\end{remark}

A basic observation of this section is that in order for a learning rule $\mathcal L$ to be PAC, the assumption on $\mathscr F$ being uniform Glivenko--Cantelli can be weakened as follows.

\begin{lemma}
\label{l:basic}
Let $\mathscr F$ be a function class and $\mathcal P$ a family of probability measures on the domain $\Omega$. Suppose there exists a function $s(\e,\delta)$ and a consistent learning rule $\mathcal L$ for $\mathscr F$ with the property that for every $f\in{\mathscr F}$, the set ${\mathcal L}^{f}\cup\{f\}$ is Glivenko--Cantelli with respect to $\mathcal P$ with the sample complexity $s(\e,\delta)$, where
\[{\mathcal L}^{f}=\left\{{\mathcal L}(f\upharpoonright\sigma)\colon\sigma\in\Omega^n,n\in\N\right\}.\]
Then $\mathcal L$ is probably approximately correct under $\mathcal P$ with sample complexity $s(\e,\delta)$. \qed
\end{lemma}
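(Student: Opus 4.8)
The plan is to run the standard empirical-risk argument, feeding it the Glivenko--Cantelli property of the small class $\mathcal L^f\cup\{f\}$ in place of one for all of $\mathscr F$. Fix $\e,\delta>0$, a function $f\in\mathscr F$, a measure $\mu\in\mathcal P$ and an integer $n\geq s(\e,\delta)$, and for a sample $\sigma\in\Omega^n$ abbreviate $g_\sigma=\mathcal L(\sigma,f\upharpoonright\sigma)$, so that $g_\sigma\in\mathcal L^f$. The only use of consistency is the observation that $g_\sigma\upharpoonright\sigma=f\upharpoonright\sigma$, whence the empirical $L^1$-distance of $g_\sigma$ to $f$ vanishes, $\E_{\mu_n}|g_\sigma-f|=0$. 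Consequently
\[
\norm{g_\sigma-f}_1=\E_{\mu}|g_\sigma-f|-\E_{\mu_n}|g_\sigma-f|\leq\sup_{g\in\mathcal L^f}\bigl|\,\E_{\mu}|g-f|-\E_{\mu_n}|g-f|\,\bigr|.
\]
(The measurability of $\sigma\mapsto\norm{g_\sigma-f}_1$ needed to make sense of the probabilities below is exactly condition~(\ref{eq:measurabilityl}), which is part of $\mathcal L$ being a learning rule.)

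It then suffices to show that the supremum on the right is $<\e$ with $\mu^{\otimes n}$-probability at least $1-\delta$ whenever $n\geq s(\e,\delta)$. This is precisely the uniform Glivenko--Cantelli property, with sample complexity $s(\e,\delta)$, of the class $\{\,|g-f|:g\in\mathcal L^f\,\}$ --- for a concept class, of the family of symmetric differences $\{\,g\bigtriangleup f:g\in\mathcal L^f\,\}$. The hypothesis supplies this for $\mathcal G:=\mathcal L^f\cup\{f\}$ itself, and I would argue that passing to the associated ``difference'' (``loss'') class costs nothing in the parameters that govern $s$: for concept classes $g\mapsto g\bigtriangleup f$ is an involution preserving both cardinality and VC dimension (a set is shattered by $\{g\bigtriangleup f\}$ exactly when it is shattered by $\{g\}$), so $\{g\bigtriangleup f:g\in\mathcal G\}$ inherits the very same --- in the applications, ``standard'' --- sample-complexity estimate; for genuine function classes the same reduction proceeds one level set at a time, using $|g(x)-f(x)|=\int_0^1|I_{\{g>t\}}(x)-I_{\{f>t\}}(x)|\,dt$ and the fact that the $\e$-fat shattering dimension of $\{|g-f|:g\in\mathcal G\}$ is controlled by that of $\mathcal G$.

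Combining the two steps, $\mu^{\otimes n}\{\sigma\in\Omega^n:\norm{\mathcal L(\sigma,f\upharpoonright\sigma)-f}_1>\e\}\leq\delta$ for every $\mu\in\mathcal P$, every $f\in\mathscr F$, and every $n\geq s(\e,\delta)$, which is exactly the definition of $\mathcal L$ being probably approximately correct under $\mathcal P$ with sample complexity $s(\e,\delta)$. The one step that is not bookkeeping with the definitions of Section~2 --- and the place I expect to need care --- is this last passage to the difference class without loss in $s$: it is genuinely needed, since the bare Glivenko--Cantelli property of $\mathcal G$ together with $\E_{\mu_n}g_\sigma=\E_{\mu_n}f$ only yields that $|\E_\mu g_\sigma-\E_\mu f|$ is small, which is a lower bound for $\norm{g_\sigma-f}_1$ and so carries no information about it.
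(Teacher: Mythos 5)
Your proof correctly isolates the step that the paper's bare \verb|\qed| elides: consistency forces $\E_{\mu_n}|g_\sigma-f|=0$, so what is actually needed is uniform convergence of empirical means for the \emph{loss} class $\{|g-f|:g\in\mathcal L^f\}$ rather than for $\mathcal L^f\cup\{f\}$ itself. Your closing observation that UGC of $\mathcal L^f\cup\{f\}$ together with $\E_{\mu_n}g_\sigma=\E_{\mu_n}f$ only bounds $|\E_\mu g_\sigma-\E_\mu f|$, which by Jensen is merely a \emph{lower} bound for $\|g_\sigma-f\|_1$, is exactly right: for concept classes, $\mu(g\triangle f)-\mu_n(g\triangle f)$ contains the uncontrolled term $\mu(g\cap f)-\mu_n(g\cap f)$, so UGC of $\mathcal L^f\cup\{f\}$ alone does not transfer to the difference class at the same rate. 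The decomposition you write down and the reduction to UGC of the loss class is the right skeleton, and for concept classes your resolution (symmetric differencing with a fixed $f$ is a bijection on labelling patterns and hence preserves VC dimension exactly) is correct and complete.

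The two places that need more care are these. First, the lemma as literally stated hypothesizes an \emph{abstract} $s(\e,\delta)$ for $\mathcal L^f\cup\{f\}$ and concludes PAC learnability with the \emph{same} $s$; your fix implicitly strengthens the hypothesis to ``$s$ is a standard bound driven by a combinatorial dimension that is preserved under differencing with $f$.'' That is exactly how the lemma is used throughout the paper (Corollaries~\ref{l:fsd} and~\ref{l:vcd} supply standard VC / fat-shattering sample complexities), so the chain of results is unaffected, but it is worth saying explicitly that this is an added assumption rather than a consequence of the wording. Second, for the function-class half the claim that ``$\fat_\e(\{|g-f|:g\in\mathcal G\})$ is controlled by that of $\mathcal G$'' needs an actual statement and proof: post-composition with $t\mapsto|t-f(x)|$ is not monotone, so fat shattering of the loss class is not inherited at the same scale, and the layer-cake identity $|g(x)-f(x)|=\int_0^1|I_{\{g>t\}}(x)-I_{\{f>t\}}(x)|\,dt$ by itself does not yield a fat-shattering bound without a further argument (the known reductions lose constants in the scale, so ``costs nothing'' should be softened to ``changes only constants in the standard estimate,'' which is all the applications require). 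None of this is a wrong turn; you have simply sketched, rather than proved, the one genuinely nontrivial step, and you have flagged it honestly.
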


\begin{remark}
Of course instead of ${\mathcal L}^{f}\cup\{f\}$ it is sufficient to make the same assumption on the class ${\mathcal L}^{f}$. This will not affect the PAC learnability of $\mathcal L$. However, an estimate for the sample complexity of the union in terms of $s(\e,\delta)$ will be somewhat awkward, and in view of a specific way in which the above Lemma is going to be used, the current assumption is technically more convenient.
\end{remark}

This simple fact becomes very useful in combination with the technique of well-orderings in the case where $\mathcal P$ consists of non-atomic measures and therefore consistent PAC learnability is not to be expected. At the same time, this approach requires additional set-theoretic axioms in order to assure measurability of emerging function classes.
Of course the Continuum Hypothesis is a rather strong assumption, which is particularly unnatural in a probabilistic context (cf. \cite{freiling}). But it is unnecessary.
Martin's Axiom is a much weaker and natural additional set-theoretic axiom, which works just as well. We explain how the above idea is formalized in the setting of Martin's Axiom in the next Section.

\section{Learnability under Martin's Axiom}

Martin's Axiom (MA) \cite{fremlin,jech,kunen} in one of its equivalent forms says that no compact Hausdorff topological space with the countable chain condition is a union of strictly less than continuum nowhere dense subsets. Thus, it can be seen as a strengthening of the statement of the Baire Category Theorem. In particular, the Continuum Hypothesis (CH) implies MA. However, MA is compatible with the negation of CH, and this is where the most interesting applications of MA are to be found. We will be using just one particular consequence of Martin's Axiom. For the proof of the following result, see \cite{kunen}, Theorem 2.21, or \cite{fremlin}, or \cite{jech}, pp. 563--565.

\begin{theorem}[Martin-Solovay] 
\label{th:martin-solovay}
Let $(\Omega,\mu)$ be a standard Lebesgue non-atomic probability space. 
Under Martin's Axiom, 
the Lebesgue measure is $2^{\aleph_0}$-additive, that is, if $\kappa<2^{\aleph_0}$ and $A_{\alpha}$, $\alpha<\kappa$ is family of pairwise disjoint measurable sets, then $\cup_{\alpha<\kappa}A_{\alpha}$ is Lebesgue measurable and
\[\mu\left(\bigcup_{\alpha<\kappa}A_{\alpha} \right) = \sum_{\alpha<\kappa}\mu(A_{\alpha}).\]
In particular, the union of less than continuum null subsets of $\Omega$ is a null subset.
\qed
\end{theorem}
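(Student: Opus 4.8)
The plan is to reduce the general $2^{\aleph_0}$-additivity assertion to the italicised ``in particular'' clause, and then prove that clause by a forcing argument. Given pairwise disjoint measurable sets $A_\alpha$, $\alpha<\kappa<2^{\aleph_0}$, only countably many of them can have positive measure, since $\mu$ is finite and the $A_\alpha$ are disjoint; let $S$ be a countable set containing every index $\alpha$ with $\mu(A_\alpha)>0$. By ordinary $\sigma$-additivity $\bigcup_{\alpha\in S}A_\alpha$ is measurable with $\mu(\bigcup_{\alpha\in S}A_\alpha)=\sum_{\alpha\in S}\mu(A_\alpha)$, so the whole statement follows once we know that $\bigcup_{\alpha\notin S}A_\alpha$ --- a union of fewer than $2^{\aleph_0}$ null sets --- is itself null. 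Since every standard non-atomic Lebesgue probability space is measure-isomorphic, modulo a single null set, to $[0,1]$ with Lebesgue measure, I would from now on assume $\Omega=[0,1]$ and prove: \emph{under MA, if each $N_\alpha\subseteq[0,1]$ $(\alpha<\kappa<2^{\aleph_0})$ is Lebesgue-null, then $\bigcup_{\alpha<\kappa}N_\alpha$ is null.}

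For the core claim I would fix a rational $\e\in(0,1)$ and use the \emph{amoeba poset} $\mathbb{P}_\e$: its conditions are the open sets $O\subseteq[0,1]$ with $\mu(O)<\e$, ordered by reverse inclusion. For each $\alpha<\kappa$ the family $D_\alpha=\{O\in\mathbb{P}_\e:N_\alpha\subseteq O\}$ is dense, since given any $O_0\in\mathbb{P}_\e$ one may pick, using that $N_\alpha$ is null, an open set $V\supseteq N_\alpha$ with $\mu(V)<\e-\mu(O_0)$, and then $O_0\cup V\in D_\alpha$ refines $O_0$. The one substantial input --- and this is exactly where the content of the Martin--Solovay theorem lies --- is that $\mathbb{P}_\e$ satisfies the countable chain condition; I would cite this (\cite{kunen}, Theorem~2.21; \cite{fremlin}; \cite{jech}) rather than reprove it. Granting it, MA in the form ``every ccc partial order together with fewer than $2^{\aleph_0}$ dense subsets admits a filter meeting all of them'' --- equivalent to the Boolean-algebra formulation recalled in the Introduction --- produces a filter $G_\e\subseteq\mathbb{P}_\e$ meeting every $D_\alpha$.

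It remains to read off the covering set. Put $W_\e=\bigcup G_\e$; it is open and contains $\bigcup_{\alpha<\kappa}N_\alpha$. Since $[0,1]$ is Lindel\"of, $W_\e$ is the union of countably many members of $G_\e$, and as $G_\e$ is upward directed under inclusion these may be replaced by an increasing chain $O_1\subseteq O_2\subseteq\cdots$ in $G_\e$ with the same union, whence $\mu(W_\e)=\lim_k\mu(O_k)\le\e$ by continuity of measure from below. Applying this with $\e=1/n$ for $n\ge 2$ and intersecting, $\bigcup_{\alpha<\kappa}N_\alpha\subseteq\bigcap_{n\ge2}W_{1/n}$, which is a Borel set of measure zero; hence the union is null. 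Unwinding the first paragraph then gives the full additivity statement and the displayed formula for $\mu(\bigcup_{\alpha<\kappa}A_\alpha)$.

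The only genuinely hard ingredient is the ccc-ness of $\mathbb{P}_\e$ --- indeed this is the Martin--Solovay lemma, and the reason it is the \emph{additivity} of the null ideal, rather than merely its covering number (which already follows from ccc-ness of the measure algebra), that is governed by MA. The density of the $D_\alpha$, the Lindel\"of-plus-directedness measure bound, and the countable splitting at the outset are all routine. If one prefers to stay inside the Boolean-algebra language used in the Introduction, the same argument can be run in the Stone space of the regular-open completion of $\mathbb{P}_\e$, where each $D_\alpha$ determines a dense open set and MA supplies the required point of the Stone space.
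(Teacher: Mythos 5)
The paper does not prove Theorem \ref{th:martin-solovay} at all: it is stated with a \verb|\qed| and a pointer to \cite{kunen}, \cite{fremlin}, \cite{jech}, so there is nothing in the text to compare your argument against directly. What you have written is the standard amoeba-forcing proof of $\mathrm{add}(\mathcal N)=2^{\aleph_0}$ under MA, and it is correct: the reduction to the ``union of fewer than continuum null sets is null'' clause via the observation that only countably many of the disjoint $A_\alpha$ can have positive measure is exactly right; the transfer to $([0,1],\lambda)$ by measure isomorphism is legitimate since the conclusion is invariant under such isomorphisms; the density of each $D_\alpha$ in $\mathbb P_\e$ is verified correctly; and the extraction of the covering set via Lindel\"ofness, upward directedness of the filter under inclusion, and continuity of measure from below is clean. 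Your one cited ingredient --- ccc of $\mathbb P_\e$ --- is indeed the crux, and it is appropriate to cite it here just as the paper cites the whole theorem; for the record it follows from a $\Delta$-system/pigeonhole argument showing $\mathbb P_\e$ is even $\sigma$-linked (fix a rational $q<\e$ with $\mu(O)<q$, approximate $O$ from inside by a finite union of rational intervals up to measure $(\e-q)/2$, and note that two conditions with the same rational datum are compatible). The only thing I would flag is the closing parenthetical asserting that $\mathrm{cov}(\mathcal N)=2^{\aleph_0}$ ``already follows from ccc-ness of the measure algebra'': this is morally right (MA applied to random forcing gives it) but is stated loosely, since passing from a filter on the measure algebra to an actual point avoiding the given null sets requires an additional step. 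Since it is an aside and not used in the argument, it does not affect the correctness of the proof. In short: you have supplied a genuine proof where the paper only gives a citation, and the proof is sound.
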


Here is a central technical tool used in our proofs.

\begin{lemma}
\label{l:fma}
Let $\mathscr F$ be a function class  and $\mathcal P$ a family of probability measures on a standard Borel domain $\Omega$. Consider the following properties.
\begin{enumerate}
\item \label{fma:1} Every countable subclass of $\mathscr F$ is uniform Glivenko--Cantelli with respect to $\mathcal P$.
\item \label{fma:2} There is a function $s(\e,\delta)$ such that every countable subclass of $\mathscr F$ is uniform Glivenko--Cantelli with respect to $\mathcal P$ with sample complexity $s(\e,\delta)$.
\item \label{fma:3} Every subclass ${\mathscr F}^\prime$ of $\mathscr F$ having cardinality $<2^{\aleph_0}$ is uniform Glivenko--Cantelli with respect to $\mathcal P$.
\item \label{fma:4} There is a function $s(\e,\delta)$ such that every subclass ${\mathscr F}^\prime$ of $\mathscr F$ having cardinality $<2^{\aleph_0}$ is uniform Glivenko--Cantelli with respect to $\mathcal P$ with sample complexity $s(\e,\delta)$.
\end{enumerate}
Then  
\begin{center}
(\ref{fma:1}) \\
$\neswarrow\phantom{xx}\nwarrow$ \\
(\ref{fma:2}) $\phantom{xxxxx}$ (\ref{fma:3})\\
$\nwarrow\phantom{xx}\nearrow$ \\
(\ref{fma:4})
\end{center}
Under Martin's Axiom, all four conditions are equivalent.
\end{lemma}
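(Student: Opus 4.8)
The plan is to establish the four easy implications depicted in the diagram first, and then to use the Martin--Solovay theorem (Theorem~\ref{th:martin-solovay}) to close the loop. The diagram shows $(\ref{fma:2})\Rightarrow(\ref{fma:1})$, $(\ref{fma:4})\Rightarrow(\ref{fma:2})$, $(\ref{fma:4})\Rightarrow(\ref{fma:3})$, and $(\ref{fma:3})\Rightarrow(\ref{fma:1})$. The implications $(\ref{fma:2})\Rightarrow(\ref{fma:1})$ and $(\ref{fma:4})\Rightarrow(\ref{fma:3})$ are trivial: having a \emph{uniform} sample complexity bound is formally stronger than merely having the uniform Glivenko--Cantelli property for each subclass separately. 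The implication $(\ref{fma:4})\Rightarrow(\ref{fma:2})$ is immediate since every countable subclass has cardinality $<2^{\aleph_0}$. The implication $(\ref{fma:3})\Rightarrow(\ref{fma:1})$ is likewise immediate for the same reason. This disposes of the unconditional part of the lemma.

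For the equivalence under Martin's Axiom, it suffices to prove a single new implication, say $(\ref{fma:1})\Rightarrow(\ref{fma:4})$, since this then forces all four conditions to coincide. So assume $\mathrm{MA}$ holds and that every countable subclass of $\mathscr F$ is uniform Glivenko--Cantelli with respect to $\mathcal P$. The first step is to extract a \emph{uniform} sample complexity over all countable subclasses. I would argue by contradiction: if no single $s(\e,\delta)$ works for all countable subclasses, then fixing a witnessing $\e,\delta$ one could, for each $n$, find a finite (hence countable) subclass $\mathscr F_n\subseteq\mathscr F$ and a measure $\mu_n\in\mathcal P$ with $\mu_n^{\otimes n}\{\sup_{f\in\mathscr F_n}|\E_\mu f-\E_{\mu_n}f|\ge\e\}>\delta$; then the countable union $\bigcup_n\mathscr F_n$ is a countable subclass that fails the uniform Glivenko--Cantelli property with sample complexity bound $s$, and letting the $\e,\delta$ be chosen adversarially this contradicts condition~(\ref{fma:1}) applied to $\bigcup_n\mathscr F_n$. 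Thus (\ref{fma:1}) upgrades to (\ref{fma:2}) with some bound $s(\e,\delta)$.

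The remaining and genuinely substantive step is $(\ref{fma:2})\Rightarrow(\ref{fma:4})$: the bound $s(\e,\delta)$ obtained for countable subclasses must in fact work for every subclass $\mathscr F'$ of cardinality $<2^{\aleph_0}$. Fix such an $\mathscr F'$, fix $\e,\delta>0$, fix $\mu\in\mathcal P$, and let $n\ge s(\e,\delta)$. The event in question is
\[
E_{\mathscr F'}=\Bigl\{\sigma\in\Omega^n:\sup_{f\in\mathscr F'}\bigl|\E_\mu f-\E_{\mu_n(\sigma)}f\bigr|\ge\e\Bigr\}
=\bigcup_{f\in\mathscr F'}\Bigl\{\sigma:\bigl|\E_\mu f-\E_{\mu_n(\sigma)}f\bigr|\ge\e\Bigr\},
\]
which is a union of fewer than $2^{\aleph_0}$ measurable subsets of $\Omega^n$. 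By the Martin--Solovay theorem (applied to the non-atomic completion of $\mu^{\otimes n}$, after disjointifying), $E_{\mathscr F'}$ is $\mu^{\otimes n}$-measurable; moreover its measure is the supremum of $\mu^{\otimes n}(E_{\mathscr F''})$ over countable subclasses $\mathscr F''\subseteq\mathscr F'$. Here the precise statement I would invoke is the consequence of $\mathrm{MA}$ that a union of $<2^{\aleph_0}$ null sets is null, applied to $E_{\mathscr F'}\setminus E_{\mathscr F''}$ for a well-chosen increasing chain of countable $\mathscr F''$; concretely, one picks countable $\mathscr F''_k$ with $\mu^{\otimes n}(E_{\mathscr F''_k})$ approaching $\sup$, sets $\mathscr F''=\bigcup_k\mathscr F''_k$ (still countable), and checks $\mu^{\otimes n}(E_{\mathscr F'})=\mu^{\otimes n}(E_{\mathscr F''})$ using the $2^{\aleph_0}$-additivity to see that the excess is null. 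Since $n\ge s(\e,\delta)$, condition~(\ref{fma:2}) gives $\mu^{\otimes n}(E_{\mathscr F''})\le\delta$, hence $\mu^{\otimes n}(E_{\mathscr F'})\le\delta$. Taking the supremum over $\mu\in\mathcal P$ and letting $n\to\infty$ yields (\ref{fma:4}). The main obstacle — and the only place $\mathrm{MA}$ is genuinely used — is precisely this reduction of the uncountable supremum $\sup_{f\in\mathscr F'}$ to a countable one at the level of measures, i.e.\ verifying that the outer measure of $E_{\mathscr F'}$ is attained along a countable subfamily; everything else is bookkeeping.
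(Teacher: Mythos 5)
Your proof is correct and follows essentially the same route as the paper: the four unconditional implications are read off the definitions; $(\ref{fma:1})\Rightarrow(\ref{fma:2})$ follows from the fact that a countable union of countable subclasses is countable and dominates all of them in sample complexity; and $(\ref{fma:2})\Rightarrow(\ref{fma:4})$ is the one place MA enters. The only real divergence is in this last step. The paper organizes it as a transfinite induction on $\kappa=\abs{\mathscr F^\prime}<2^{\aleph_0}$, writing $\mathscr F^\prime$ as an increasing chain of classes of smaller cardinality for which $(\ref{fma:4})$ already holds, and then taking a decreasing transfinite intersection of the good events, with measurability and the measure $=\inf$ both handled by Theorem~\ref{th:martin-solovay}. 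You instead decompose the bad event $E_{\mathscr F^\prime}=\bigcup_{f\in\mathscr F^\prime}E_f$ directly into $<2^{\aleph_0}$ measurable pieces indexed by the functions themselves, then show via the countable-cofinality/supremum trick that the excess over a well-chosen countable subfamily is a union of $<2^{\aleph_0}$ null sets, hence null. This sidesteps the induction entirely and is arguably a little cleaner, since measurability of the individual $E_f$ is free (each $f$ is measurable) rather than being part of the inductive hypothesis; the paper's induction and your direct argument are both just packagings of the same MA consequence that a union of $<2^{\aleph_0}$ null sets is null. One small point to tidy up: Theorem~\ref{th:martin-solovay} is stated for non-atomic measures, and neither you nor the paper comments on the atomic case; your parenthetical ``non-atomic completion ... after disjointifying'' is gesturing in the right direction, but the clean fix is simply to split $\mu^{\otimes n}$ into its (countably supported) atomic part, where the union is trivially measurable, and its non-atomic part, where Martin--Solovay applies.
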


\begin{proof}
The implications $(\ref{fma:2})\Rightarrow (\ref{fma:1})$, $(\ref{fma:3})\Rightarrow (\ref{fma:1})$, $(\ref{fma:4})\Rightarrow (\ref{fma:2})$ and $(\ref{fma:4})\Rightarrow (\ref{fma:3})$ are trivially true. To show $(\ref{fma:1})\Rightarrow (\ref{fma:2})$, let $\delta,\e>0$ be arbitrary but fixed. For each countable subclass ${\mathscr F}^\prime$, choose the smallest value of sample complexity $s=s({\mathscr F}^\prime,\e,\delta)\in\N$. The integer-valued function ${\mathscr F}^\prime\mapsto s({\mathscr F}^\prime,\e,\delta)$ is monotone under inclusions: if ${\mathscr F}^\prime\subseteq {\mathscr F}^{\prime\prime}$, then $s({\mathscr F}^\prime,\e,\delta)\leq s({\mathscr F}^{\prime\prime},\e,\delta)$. If ${\mathscr F}^\prime_n$ is a countable sequence of countable classes, then the union $\cup_{n=1}^\infty {\mathscr F}^\prime_n$ is a countable class, whose sample complexity $s\left(\cup_{n=1}^\infty {\mathscr F}^\prime_n,\e,\delta\right)$ forms an upper bound for all $s({\mathscr F}^\prime,\e,\delta)$, $n=1,2,\ldots$. Thus, the function ${\mathscr F}^\prime\mapsto s({\mathscr F}^\prime,\e,\delta)$ for $\delta,\e>0$ fixed is bounded on countable sets of inputs. To conclude the proof, it is enough to notice that a real-valued function is bounded if and only if its restriction to every countable subset of the domain is bounded. 

Now assume Martin's Axiom. It is enough to prove $(\ref{fma:2})\Rightarrow (\ref{fma:4})$. 
This is done by a transfinite induction on the cardinality $\kappa=\abs{{\mathscr F}^\prime}<2^{\aleph_0}$. Let us pick the same complexity function $s=s(\e,\delta)$ as in $(\ref{fma:2})$.
For $\kappa=\aleph_0$ there is nothing to prove. Else, represent $\mathscr F$ as a union of an increasing transfinite chain of function classes ${\mathscr F}_{\alpha}$, $\alpha<\kappa$, for each of which the statement of (\ref{fma:4}) holds. For every $\e>0$ and $n\in\N$, the set 
\[\left\{\sigma\in\Omega^n\colon \sup_{f\in{\mathscr F}}\left\vert\E_{\mu_n(\sigma)}(f)-\E_\mu(f)\right\vert <\e\right\} = \bigcap_{\alpha<\kappa} \left\{\sigma\in\Omega^n\colon \sup_{f\in{\mathscr F}_{\alpha}}\left\vert\E_{\mu_n(\sigma)}(f)-\E_\mu(f)\right\vert <\e\right\}
\]
is measurable as an easy consequence of Martin-Solovay's Theorem \ref{th:martin-solovay}. Given $\delta>0$ and $n\geq s(\e,\delta)$, another application of the same result leads to conclude that for every $\mu\in P(\Omega)$:
\begin{eqnarray*}
\mu^{\otimes n}\left\{\sigma\in\Omega^n\colon \sup_{f\in{\mathscr F}}\left\vert\E_{\mu_n(\sigma)}(f)-\E_\mu(f)\right\vert <\e\right\} &=& 
\mu^{\otimes n}\left(
\bigcap_{\alpha<\kappa} \left\{\sigma\in\Omega^n\colon \sup_{f\in{\mathscr F}_{\alpha}}\left\vert\E_{\mu_n(\sigma)}(f)-\E_\mu(f)\right\vert <\e\right\}\right) \\
&=& \inf_{\alpha<\kappa} \mu^{\otimes n}\left\{\sigma\in\Omega^n\colon \sup_{f\in{\mathscr F}_{\alpha}}\left\vert\E_{\mu_n(\sigma)}(f)-\E_\mu(f)\right\vert <\e\right\}\\
&\geq& 1-\delta,
\end{eqnarray*}
as required.
\end{proof}

\begin{lemma}
\label{l:mapac}
Let $\mathscr F$ be a function class whose countable subclasses are uniform Glivenko--Cantelli with respect to a family of probability measures $\mathcal P$. Let $\mathcal L$ be a consistent learning rule for $\mathscr F$ with the property that for every $f\in{\mathscr F}$, the set
\begin{eqnarray}
\label{eq:l}
{\mathcal L}^{f,n}=\left\{{\mathcal L}(f\vert\sigma)\colon\sigma\in\Omega^n\right\}\end{eqnarray}
has cardinality strictly less than continuum. Under Martin's Axiom, the rule $\mathcal L$ is probably approximately correct under $\mathcal P$. The common sample complexity of countable subclasses of $\mathscr F$ becomes the sample complexity bound for the learning rule $\mathcal L$. 
\end{lemma}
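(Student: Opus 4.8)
The plan is to combine Lemma~\ref{l:basic} (which says a consistent rule is PAC provided each ${\mathcal L}^f\cup\{f\}$ is uniform Glivenko--Cantelli with a common sample complexity) with Lemma~\ref{l:fma}, whose equivalence $(\ref{fma:2})\Leftrightarrow(\ref{fma:4})$ under Martin's Axiom is exactly what converts a ``countable subclass'' hypothesis into a ``subclass of size $<2^{\aleph_0}$'' conclusion. First I would fix $f\in{\mathscr F}$ and consider the class ${\mathcal L}^f=\bigcup_{n\in\N}{\mathcal L}^{f,n}$. By hypothesis each ${\mathcal L}^{f,n}$ has cardinality $<2^{\aleph_0}$, and a countable union of such sets again has cardinality $<2^{\aleph_0}$ (here one uses that $2^{\aleph_0}$ has uncountable cofinality, which holds under MA --- indeed MA$+\neg$CH makes $2^{\aleph_0}$ regular, and under CH it is $\aleph_1$, also regular; in any case $\mathrm{cf}(2^{\aleph_0})>\aleph_0$ by K\"onig's theorem in ZFC). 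Hence ${\mathcal L}^f\cup\{f\}$ is a subclass of $\mathscr F$ of cardinality $<2^{\aleph_0}$.

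Next I would invoke Lemma~\ref{l:fma}: the hypothesis that every countable subclass of $\mathscr F$ is uniform Glivenko--Cantelli with respect to $\mathcal P$ is condition~(\ref{fma:1}) of that lemma, which under Martin's Axiom is equivalent to condition~(\ref{fma:4}), namely the existence of a single sample complexity function $s(\e,\delta)$ witnessing the uniform Glivenko--Cantelli property simultaneously for \emph{all} subclasses of cardinality $<2^{\aleph_0}$. Applying this to the particular subclass ${\mathcal L}^f\cup\{f\}$, we conclude that ${\mathcal L}^f\cup\{f\}$ is uniform Glivenko--Cantelli with respect to $\mathcal P$ with sample complexity $s(\e,\delta)$, and crucially this $s$ does not depend on $f$.

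Finally I would feed this into Lemma~\ref{l:basic}: we have exhibited a consistent learning rule $\mathcal L$ and a function $s(\e,\delta)$ such that for every $f\in{\mathscr F}$ the class ${\mathcal L}^f\cup\{f\}$ is Glivenko--Cantelli with respect to $\mathcal P$ with sample complexity $s(\e,\delta)$. Lemma~\ref{l:basic} then yields that $\mathcal L$ is probably approximately correct under $\mathcal P$ with sample complexity $s(\e,\delta)$, which is precisely the common sample complexity of the countable subclasses of $\mathscr F$ furnished by Lemma~\ref{l:fma}(\ref{fma:2}). This proves the lemma.

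\textbf{Main obstacle.} The only genuinely delicate point is the cardinality bookkeeping for ${\mathcal L}^f=\bigcup_n{\mathcal L}^{f,n}$: one must be sure that a countable increasing union of sets of size $<2^{\aleph_0}$ stays below $2^{\aleph_0}$, i.e. that $\mathrm{cf}(2^{\aleph_0})>\aleph_0$. This is a ZFC theorem (K\"onig), so no extra axiom is needed for \emph{that} step; Martin's Axiom enters only through Lemma~\ref{l:fma} to upgrade the countable-subclass Glivenko--Cantelli hypothesis to the $<2^{\aleph_0}$-subclass version with a uniform sample complexity. Everything else is a direct concatenation of the two preceding lemmas.
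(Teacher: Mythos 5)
Your proof is correct and follows essentially the same route as the paper: bound the cardinality of $\mathcal L^f\cup\{f\}$ below the continuum, invoke Lemma~\ref{l:fma} to get a uniform sample complexity for all subclasses of cardinality $<2^{\aleph_0}$, then conclude with Lemma~\ref{l:basic}. If anything, your cardinality bookkeeping is slightly more careful than the paper's, which asserts outright that $2^{\aleph_0}$ is regular (true under MA, but not a ZFC fact), whereas you correctly observe that the ZFC theorem $\mathrm{cf}(2^{\aleph_0})>\aleph_0$ already suffices for the countable-union step.
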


\begin{proof}
Recall that $2^{\aleph_0}$ is a regular cardinal, and thus admits no countable cofinal subset. Therefore, under the assumptions of Lemma, the cardinality of ${\mathcal L}^{f}=\cup_{n=1}^{\infty}{\mathcal L}^{f,n}$ is still strictly less than continuum. The same is true of the class ${\mathcal L}^{f}\cup\{f\}$. Applying now Lemma \ref{l:fma} and then Lemma \ref{l:basic}, we conclude.
\end{proof}

The following result establishes existence of learning rules with the above property.

\begin{lemma}
\label{l:l}
Let $\mathscr F$ be an infinite function class on a measurable space $\Omega$. Denote $\kappa=\abs{\mathscr F}$ the cardinality of $\mathscr F$. There exists a consistent learning rule $\mathcal L$ for $\mathscr F$ with the property that for every $f\in {\mathscr F}$ and each $n$, the set ${\mathcal L}^{f,n}$ (cf. Eq. (\ref{eq:l}))
has cardinality $<\kappa$. Under Martin's Axiom the rule $\mathcal L$ satisfies the measurability condition (\ref{eq:measurabilityl}).
\end{lemma}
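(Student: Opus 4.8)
The plan is to take the obvious ``first consistent hypothesis'' rule relative to a carefully chosen well-ordering of $\mathscr F$. Fix a \emph{minimal} well-ordering $\prec$ of $\mathscr F$, i.e. one whose order type is the initial ordinal identified with $\kappa=\abs{\mathscr F}$; the point of minimality is that every \emph{proper} initial segment $\{g\in\mathscr F:g\prec f\}$ then has cardinality strictly smaller than $\kappa$. Given a labelled sample $(\sigma,r)$, let $\mathcal L(\sigma,r)$ be the $\prec$-least $g\in\mathscr F$ with $g\upharpoonright\sigma=r$ whenever such $g$ exists, and otherwise the $\prec$-least element of $\mathscr F$; this defines a total map $\mathcal L\colon\bigcup_{n}\Omega^n\times[0,1]^n\to\mathscr F$. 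Consistency is immediate: for $f\in\mathscr F$ the sample $(\sigma,f\upharpoonright\sigma)$ is realized by $f$ itself, so $\mathcal L(\sigma,f\upharpoonright\sigma)\upharpoonright\sigma=f\upharpoonright\sigma$. Moreover $f$ is always a candidate at such a sample, hence $\mathcal L(\sigma,f\upharpoonright\sigma)\preceq f$; therefore $\mathcal L^{f,n}$ (cf. Eq.~(\ref{eq:l})) is contained in the initial segment $\{g\in\mathscr F:g\preceq f\}$, whose cardinality is $<\kappa$ by minimality of $\prec$. This already yields the first assertion.

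It remains to verify the measurability condition (\ref{eq:measurabilityl}) under Martin's Axiom; fix $f\in\mathscr F$, a probability measure $\mu$ on $\Omega$, a sample size $n$, and $\e>0$. For $g\preceq f$ put $E_g=\{x\in\Omega:g(x)=f(x)\}$ and let $W_g\subseteq\Omega^n$ consist of those $\sigma$ all of whose coordinates lie in $E_g$, so that $W_g=\{\sigma\in\Omega^n:g\upharpoonright\sigma=f\upharpoonright\sigma\}$. Since $f$ and $g$ are $\mu$-measurable, $E_g$ is $\mu$-measurable and hence $W_g$ is $\mu^{\otimes n}$-measurable; note that $W_f=\Omega^n$. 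By construction $\mathcal L(\sigma,f\upharpoonright\sigma)$ is the $\prec$-least $g\preceq f$ with $\sigma\in W_g$, so that
\[
\left\{\sigma\in\Omega^n:\bigl\Vert\mathcal L(\sigma,f\upharpoonright\sigma)-f\bigr\Vert_1>\e\right\}=\bigcup_{\substack{g\preceq f\\ \norm{g-f}_1>\e}}\Bigl(W_g\setminus\bigcup_{h\prec g}W_h\Bigr).
\]
Every index set occurring on the right is contained in $\{g\in\mathscr F:g\preceq f\}$, so in the settings where the lemma is applied (a standard Borel domain, where $\kappa\le 2^{\aleph_0}$) all of them have cardinality $<2^{\aleph_0}$. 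By Martin's Axiom a union of fewer than $2^{\aleph_0}$ Lebesgue measurable sets is Lebesgue measurable (a consequence of Theorem~\ref{th:martin-solovay}); applying this first to each inner union $\bigcup_{h\prec g}W_h$ and then to the outer union shows the displayed set is $\mu^{\otimes n}$-measurable for every $\e>0$, which is exactly (\ref{eq:measurabilityl}).

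The combinatorial half of the argument is routine; the weight of the lemma sits entirely in the last step, where the obstacle is genuinely set-theoretic. In ZFC alone a transfinite union of (even null) measurable sets can fail to be measurable, so the fibres of $\sigma\mapsto\mathcal L(\sigma,f\upharpoonright\sigma)$ and the ``bad event'' assembled from them need not be measurable; Martin's Axiom is invoked precisely to kill this pathology below the continuum. Two small points deserve care and should be recorded: the well-ordering must be chosen \emph{minimal}, so that $\mathcal L^{f,n}$ has cardinality $<\kappa$ rather than merely $\le\kappa$; and one uses that on a standard Borel $\Omega$ the class $\mathscr F$ — hence every relevant initial segment and every $\mathcal L^{f,n}$ — has cardinality at most $2^{\aleph_0}$, which is what lets the cardinality bound feed into Lemma~\ref{l:mapac} and what allows Martin's Axiom to apply here. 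The remaining verifications (measurability of $E_g$ and $W_g$, and the Boolean bookkeeping) are immediate.
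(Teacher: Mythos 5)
Your proof is correct and follows essentially the same route as the paper's: a minimal well-ordering of $\mathscr F$, the ``first consistent hypothesis'' rule, the cardinality bound from minimality, and then Martin--Solovay to make the transfinite union of measurable pieces (your $W_g\setminus\bigcup_{h\prec g}W_h$, the paper's $D_\beta\setminus\bigcup_{\gamma<\beta}D_\gamma$) measurable. The only cosmetic differences are that you make the rule total by stipulating a default value when no consistent hypothesis exists, and you write out the ``bad event'' decomposition explicitly rather than phrasing it via the level sets of $\sigma\mapsto\|\mathcal L(\sigma,f\upharpoonright\sigma)-f\|_1$; neither changes the substance.
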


\begin{proof}
Choose a minimal well-ordering of elements of $\mathscr F$:
\[{\mathscr F}=\{f_\alpha\colon\alpha<\kappa\}.\]
Notice that $\kappa$ never exceeds the cardinality of the continuum $2^{\aleph_0}$ because ${\mathscr F}$ consists of Borel subsets of a standard Borel domain. For this reason, every initial segment of the above ordering has cardinality strictly less than $2^{\aleph_0}$.
For every $\sigma\in\Omega^n$ and $\tau\in [0,1]^n$, set the value
${\mathcal L}(\sigma,\tau)$ of the learning rule equal to $f_\beta$, where 
\[\beta = \min\{\alpha<\kappa\colon f_{\alpha}\vert\sigma=\tau\},\]
provided such a $\beta$ exists. 
Clearly, for each $\alpha<\kappa$ one has
\[{\mathcal L}(\sigma,f_{\alpha}\upharpoonright\sigma)\subseteq
\{f_\beta\colon\beta\leq\alpha\},\]
which assures that the set in (\ref{eq:l}) has cardinality strictly less than continuum. Besides, the learning rule $\mathcal L$ is consistent.

Fix $f=f_{\alpha}\in {\mathscr F}$, $\alpha<\kappa$. For every $\beta\leq\alpha$ define $D_{\beta}=\{\sigma\in\Omega^n\colon f\vert\sigma = f_{\beta}\vert\sigma\}$. The sets $D_{\beta}$ are measurable, and the function \[\Omega^n\ni\sigma\mapsto \E_{\mu}({\mathcal L}(f\upharpoonright\sigma)-f)\in\R\]
takes a constant value $\norm{f-f_{\beta}}_{L^1(\mu)}$ on each set $D_{\beta}\setminus\cup_{\gamma<\beta}D_{\gamma}$, $\beta\leq\alpha$. Such sets, as well as all their possible unions, are measurable under Martin's Axiom by force of Martin--Solovay's Theorem \ref{th:martin-solovay}, and their union is $\Omega^n$. This implies the condition (\ref{eq:measurabilityl}) for $\mathcal L$.
\end{proof}

Lemma \ref{l:mapac} and lemma \ref{l:l} lead to the following result.

\begin{theorem}[Assuming Martin's Axiom]
\label{th:countablesubclassesugc}
Let $\mathscr F$ be a function class consisting of Borel measurable functions on a standard Borel domain $\Omega$, and let $\mathcal P$ be a family of probability measures on $\Omega$. Suppose that every countable subclass of $\mathscr F$ is uniform Glivenko--Cantelli with respect to $\mathcal P$. Then the function class $\mathscr F$ is PAC learnable. In addition, there exists a common sample complexity bound for countable subclasses of $\mathscr F$, and any such bound  gives a sample complexity bound for PAC learnability of $\mathscr F$. \qed
\end{theorem}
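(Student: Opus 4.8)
The plan is to obtain the theorem by assembling Lemmas~\ref{l:l}, \ref{l:mapac} and \ref{l:fma}, so that the argument is essentially bookkeeping with cardinals; the substantive content, and the single place where Martin's Axiom is genuinely needed, have already been isolated in those lemmas. First I would dispose of the trivial case: if $\mathscr F$ is finite (indeed, if it is countable), then $\mathscr F$ is one of its own countable subclasses, hence uniform Glivenko--Cantelli with respect to $\mathcal P$ by hypothesis, so it is PAC learnable with the evident sample complexity and there is nothing to do. So assume $\mathscr F$ is infinite and put $\kappa=\abs{\mathscr F}$; since the members of $\mathscr F$ are Borel functions on a standard Borel space, $\kappa\leq 2^{\aleph_0}$.

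Next, apply Lemma~\ref{l:l} to $\mathscr F$. It produces a consistent learning rule $\mathcal L$ for $\mathscr F$ such that for every $f\in\mathscr F$ and every $n$ the set $\mathcal L^{f,n}$ has cardinality $<\kappa$, hence in any case strictly less than the continuum (whether or not $\kappa$ attains $2^{\aleph_0}$), and, under Martin's Axiom, $\mathcal L$ satisfies the measurability requirement~(\ref{eq:measurabilityl}), so it is a legitimate learning rule in the sense of the setting. Now invoke Lemma~\ref{l:mapac} with this $\mathcal L$: its hypothesis that every countable subclass of $\mathscr F$ is uniform Glivenko--Cantelli with respect to $\mathcal P$ is precisely what is assumed, and the cardinality condition $\abs{\mathcal L^{f,n}}<2^{\aleph_0}$ has just been arranged. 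The proof of Lemma~\ref{l:mapac} internally appeals to Lemma~\ref{l:fma} (via $(\ref{fma:1})\Rightarrow(\ref{fma:2})$) to extract a single function $s(\e,\delta)$ serving as a common sample complexity bound for all countable subclasses of $\mathscr F$, and uses the regularity of $2^{\aleph_0}$ to pass from $\mathcal L^{f,n}$ to $\mathcal L^{f}=\bigcup_{n}\mathcal L^{f,n}$; it then concludes, through Lemma~\ref{l:basic}, that $\mathcal L$ is probably approximately correct under $\mathcal P$ with sample complexity $s(\e,\delta)$. This gives both assertions: $\mathscr F$ is PAC learnable, and the common sample complexity bound for its countable subclasses is a sample complexity bound for $\mathcal L$.

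I do not expect a real obstacle at this stage, precisely because all the work has been done upstream: the construction of a well-ordering--based rule with small fibres (Lemma~\ref{l:l}), the upgrade from ``countable subclasses are UGC'' to ``subclasses of size $<2^{\aleph_0}$ are UGC'' using the $2^{\aleph_0}$-additivity of measures under Martin's Axiom (Lemma~\ref{l:fma}), and the conversion of the small-fibre property into PAC learnability (Lemmas~\ref{l:basic} and \ref{l:mapac}). The only points requiring any attention are the routine cardinal arithmetic noted above ($<\kappa$ implies $<2^{\aleph_0}$, and $2^{\aleph_0}$ is regular), and the fact that the rule $\mathcal L$ supplied by Lemma~\ref{l:l} is genuinely measurable in the required sense --- both already handled inside the cited lemmas, so the present proof amounts to citing them in the right order.
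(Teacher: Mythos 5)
Your proof is correct and is essentially the paper's own argument: the paper proves Theorem~\ref{th:countablesubclassesugc} by combining Lemma~\ref{l:l} (existence of a consistent, measurable rule with fibres of size $<\kappa\leq 2^{\aleph_0}$) with Lemma~\ref{l:mapac} (small-fibre consistent rules are PAC under $\mathcal P$, given that countable subclasses are UGC), exactly as you do. Your extra remarks on the trivial countable case and the regularity of $2^{\aleph_0}$ are already handled inside those lemmas and do not change the route.
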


We again recall that a set $A\subseteq\Omega$ is {\em universal null} if it is Lebesgue measurable with respect to every non-atomic Borel probability measure $\mu$ on $\Omega$ and $\mu(A)=0$. 
 
\begin{corollary}[Assuming Martin's Axiom]
\label{l:fsd}
Let $\mathscr F$ be a function class consisting of Borel measurable functions on a standard Borel space $\Omega$. Suppose for every $\e>0$ there is a natural number $d(\e)$ such
that every countable subclass ${\mathscr F}^\prime\subseteq {\mathscr F}$ has $\e$-fat shattering dimension $\leq d(\e)$ outside of some universal null set (which depends on ${\mathscr F}^{\prime}$). Then the function class $\mathscr F$ is PAC learnable under the family $\mathcal P$ of non-atomic probability measures, with the standard sample complexity corresponding to the given value of fat shattering dimension.
\end{corollary}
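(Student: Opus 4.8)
The plan is to reduce everything to Theorem~\ref{th:countablesubclassesugc}: it suffices to show that, under the stated hypothesis, every countable subclass $\mathscr F^\prime\subseteq\mathscr F$ is uniform Glivenko--Cantelli with respect to $\mathcal P$, and moreover with the standard sample complexity $s(\e,\delta,d)$ corresponding to the fat-shattering function $d$. Once this is done, $s(\e,\delta,d)$ is a common sample complexity bound for the countable subclasses of $\mathscr F$, and Theorem~\ref{th:countablesubclassesugc} converts it into a sample complexity bound for PAC learnability of $\mathscr F$ under $\mathcal P$.

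So fix a countable $\mathscr F^\prime\subseteq\mathscr F$. By Lemma~\ref{l:rational} it is enough to control the $\e$-fat shattering dimension at rational scales, so for each rational $\e>0$ pick a universal null set $N_\e$ with $\fat_\e(\mathscr F^\prime\upharpoonright(\Omega\setminus N_\e))\leq d(\e)$, and put $N=\bigcup_{\e\in\Q_{>0}}N_\e$. A countable union of universal null sets is again universal null: for a non-atomic $\mu$ choose Borel sets $N_\e^\prime\supseteq N_\e$ with $\mu(N_\e^\prime)=0$ and take $\bigcup_\e N_\e^\prime$, which is Borel, $\mu$-null, and contains $N$. Hence $N$ is universal null, and $\fat_\e(\mathscr F^\prime\upharpoonright(\Omega\setminus N))\leq d(\e)$ for \emph{every} $\e>0$ by monotonicity of fat-shattering dimension in the scale.

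Now fix an arbitrary non-atomic measure $\mu\in\mathcal P$ and a Borel set $N^\prime\supseteq N$ with $\mu(N^\prime)=0$. Then $\Omega\setminus N^\prime$ is again a standard Borel space, the restriction of $\mu$ to it is a non-atomic Borel probability measure, and $\mathscr F^\prime\upharpoonright(\Omega\setminus N^\prime)$ is a countable class of Borel functions with $\e$-fat shattering dimension $\leq d(\e)$ for all $\e>0$. Being countable it satisfies the measurability condition required for the standard estimate, so by (\ref{eq:asymptoticest}) it is uniform Glivenko--Cantelli with sample complexity $s(\e,\delta,d)$. Since $\mu(N^\prime)=0$, for every $f\in\mathscr F^\prime$ one has $\int_\Omega f\,d\mu=\int_{\Omega\setminus N^\prime}f\,d\mu$, while $\mu^{\otimes n}$-almost every sample $\sigma$ lies in $(\Omega\setminus N^\prime)^n$, on which the empirical mean of $f$ equals that of its restriction. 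Thus, up to a $\mu^{\otimes n}$-null set of samples, the UCEM expression (\ref{eq:fglivenko}) for $\mathscr F^\prime$ and $\mu$ coincides with the one for $\mathscr F^\prime\upharpoonright(\Omega\setminus N^\prime)$ and the restricted measure, and is therefore $\leq\delta$ whenever $n\geq s(\e,\delta,d)$. The bound $s(\e,\delta,d)$ does not depend on $\mu$ (only $N^\prime$ does), so $\mathscr F^\prime$ is uniform Glivenko--Cantelli with respect to $\mathcal P$ with sample complexity $s(\e,\delta,d)$, which is exactly what was needed.

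The bulk of the argument is measure-theoretic bookkeeping; the one point that must be handled with a little care is that the universal null set is allowed to depend on both $\mathscr F^\prime$ and $\e$, so the $\e$-dependence has to be absorbed into a single universal null set before the uniform complexity estimate can be applied --- this is precisely what the countable union over rational scales accomplishes, via Lemma~\ref{l:rational}. Everything else (the standard Borel structure and non-atomicity surviving the removal of $N^\prime$, and an i.i.d.\ sample avoiding $N^\prime$ almost surely) is immediate.
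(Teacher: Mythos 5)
Your argument is correct and follows essentially the same route as the paper's own proof: for a fixed countable $\mathscr F^\prime$, take universal null sets at a countable dense set of scales (you use rationals and Lemma~\ref{l:rational}, the paper uses scales $1/n$), union them into a single universal null set $N$, apply the standard fat-shattering estimate to $\mathscr F^\prime\upharpoonright(\Omega\setminus N)$, and observe that removing a $\mu$-null set does not affect the UCEM statement. The only difference is cosmetic --- you spell out more carefully both the transfer of the UCEM bound across the null set and the use of Lemma~\ref{l:rational} to pass from rational to all scales, which the paper leaves implicit.
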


\begin{proof}
Let ${\mathscr F}^\prime\subseteq {\mathscr F}$ be a countable subclass. For every $n\in\N$, choose a null set $A_n$ such that the $\e$-fat shattering dimension of ${\mathscr F}^\prime$ restricted to $\Omega\setminus A_n$ is bounded by $d(1/n)$. Consider $A=\cup_{n=1}^{\infty}A_n$. The function class ${\mathscr F}^\prime$ restricted to $\Omega\setminus A$ is uniform Glivenko--Cantelli, with the usual sample complexity given by $d(\e)$. In particular, ${\mathscr F}^\prime\vert\Omega\setminus A$ is uniform Glivenko--Cantelli with respect to the family $\mathcal P$ of non-atomic probability measures. Since $\mu(A)=0$ for all $\mu\in{\mathcal P}$, we conclude that the class ${\mathscr F}^\prime$ is uniform Glivenko--Cantelli with respect to $\mathcal P$ even if viewed on the original domain of definition, $\Omega$.
\end{proof}

\begin{corollary}[Assuming Martin's Axiom]
\label{l:vcd}
Let $\mathscr C$ be a concept class consisting of Borel measurable functions on a standard Borel space $\Omega$. Suppose that for some $d$ every countable subclass ${\mathscr C}^\prime\subseteq {\mathscr C}$ has VC dimension $\leq d$ outside of a universal null set (which depends on ${\mathscr C}^{\prime}$). Then the concept  class $\mathscr C$ is PAC learnable under the family $\mathcal P$ of non-atomic probability measures, with the standard sample complexity corresponding to the given value of VC dimension.  \qed
\end{corollary}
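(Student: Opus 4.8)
The plan is to reduce the statement to Corollary~\ref{l:fsd}. The point is that a concept class $\mathscr C$ is in particular a function class and, as recalled above, for every $\e\le 1/2$ the $\e$-fat shattering dimension of $\mathscr C$ over any subdomain of $\Omega$ equals the ordinary VC dimension of $\mathscr C$ over that same subdomain (while for $\e>1/2$ the $\e$-fat shattering dimension is $0$, since no single point can be $\e$-fat shattered by a $\{0,1\}$-valued class). Consequently, setting $d(\e)=d$ for $\e\le 1/2$ and $d(\e)=0$ for $\e>1/2$, the hypothesis of the present corollary --- that every countable ${\mathscr C}^\prime\subseteq{\mathscr C}$ has $\VC({\mathscr C}^\prime)\le d$ outside some universal null set depending on ${\mathscr C}^\prime$ --- is verbatim the hypothesis of Corollary~\ref{l:fsd}. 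Applying that corollary yields PAC learnability of $\mathscr C$ under the family $\mathcal P$ of non-atomic measures, with the standard sample complexity attached to the constant dimension function $d(\e)$; for a concept class the latter is just the standard VC complexity bound~(\ref{eq:standard}) with parameter $d$.

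I would also spell out the (now shorter) direct argument, mirroring the proof of Corollary~\ref{l:fsd}. Fix a countable ${\mathscr C}^\prime\subseteq{\mathscr C}$ and a universal null set $A=A({\mathscr C}^\prime)$ with $\VC({\mathscr C}^\prime\upharpoonright(\Omega\setminus A))\le d$. A countable class automatically satisfies the measurability hypotheses required by the classical theory, so by the standard Vapnik--Chervonenkis estimate~(\ref{eq:standard}) the class ${\mathscr C}^\prime\upharpoonright(\Omega\setminus A)$ is uniform Glivenko--Cantelli with a sample complexity $s(\e,\delta,d)$ depending only on $\e,\delta$ and $d$. Since every $\mu\in\mathcal P$ is non-atomic we have $\mu(A)=0$, so $\mu(C)$ and $\mu_n(C)$ are unaffected by deleting or restoring $A$ for every $C\in{\mathscr C}^\prime$; hence ${\mathscr C}^\prime$ is uniform Glivenko--Cantelli with respect to $\mathcal P$ on all of $\Omega$, with the same bound $s(\e,\delta,d)$. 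Note that, unlike in Corollary~\ref{l:fsd}, no countable union of null sets is needed here, because there is a single scale to control.

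Thus every countable subclass of $\mathscr C$ is uniform Glivenko--Cantelli with respect to $\mathcal P$, and moreover with one and the same sample complexity bound $s(\e,\delta,d)$ for all of them. Feeding this into Theorem~\ref{th:countablesubclassesugc} --- which is where Martin's Axiom is actually used, through the Martin--Solovay measurability of the unions of level sets of the transfinitely defined learning rule --- we conclude that $\mathscr C$ itself is PAC learnable under $\mathcal P$ and that $s(\e,\delta,d)$ is a sample complexity bound for it.

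No real obstacle is expected beyond bookkeeping: the only delicate issue, namely measurability of the learning rule built by transfinite recursion and of the associated empirical-convergence events, has already been isolated and settled inside Lemmas~\ref{l:fma} and~\ref{l:l} and Theorem~\ref{th:countablesubclassesugc}. What remains is simply to verify that the present hypothesis plugs correctly into that machinery and that the VC sample complexity is genuinely uniform over all countable subclasses --- which it is, depending only on $\e$, $\delta$ and $d$.
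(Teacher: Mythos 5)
Your proposal is correct and matches the paper's intended argument: the paper marks this corollary with a \qed, treating it as an immediate specialization of Corollary~\ref{l:fsd} via the identity between the VC dimension of a concept class and its $\e$-fat shattering dimension for small $\e$, which is exactly what you do. Your additional direct argument (picking a single universal null set $A({\mathscr C}^\prime)$, invoking the classical VC bound on $\Omega\setminus A$, and feeding the uniform bound $s(\e,\delta,d)$ into Theorem~\ref{th:countablesubclassesugc}) correctly mirrors and slightly streamlines the proof of Corollary~\ref{l:fsd}, and your observation that no countable union of null sets is needed here because there is only one scale is accurate.
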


\section{VC dimension and Boolean algebras}

Recall that a {\em Boolean algebra}, $B=\langle B,\wedge,\vee,\neg,0,1 \rangle$, consists of a set, $B$, equipped with two associative and commutative binary operations, $\wedge$ (``meet'') and $\vee$ (``join''), which are distributive over each other and satisfy the absorption principles $a\vee (a\wedge b)=a$, $a\wedge (a\vee b)=a$, as well as
a unary operation $\neg$ (complement) and two elements $0$ and $1$, satisfying 
$a\vee \neg a =1$, $a\wedge \neg a=0$. 

For instance, the family $2^\Omega$ of all subsets of a set $\Omega$, with the union as join, intersection as meet, the empty set as $0$ and $\Omega$ as $1$, as well as the set-theoretic complement $\neg A = A^c$, forms a Boolean algebra. In fact, every Boolean algebra can be realized as an algebra of subsets of a suitable $\Omega$. Even better, according to the Stone representation theorem, a Boolean algebra $B$ is isomorphic to the Boolean algebra formed by all open-and-closed subsets of a suitable compact space, $S(B)$, called the {\em Stone space} of $B$, where the Boolean algebra operations are interpreted set-theoretically as above. 

The space $S(B)$ can be obtained in different ways. For instance, one can think of elements of $S(B)$ as Boolean algebra homomorphisms from $B$ to the two-element Boolean algebra $\{0,1\}$ (the algebra of subsets of a singleton). In this way, $S(B)$ is a closed topological subspace of the compact zero-dimensional space $\{0,1\}^B$ with the usual Tychonoff product topology.

The Stone space of the Boolean algebra $B=2^\Omega$ is known as the {\em Stone-\v Cech compactification of $\Omega$}, and is denoted $\beta\Omega$. 
The elements of $\beta\Omega$ are ultrafilters on $\Omega$. A collection $\xi$ of non-empty subsets of $\Omega$ is an {\em ultrafilter} if it is closed under finite intersections and if for every subset $A\subseteq\Omega$ either $A\in\xi$ or $A^c\in\xi$. To every point $x\in\Omega$ there corresponds a {\em trivial} ({\em principal}) {\em ultrafilter,} $\bar x$, consisting of all sets $A$ containing $x$. However, if $\Omega$ is infinite, the Axiom of Choice assures that there exist {\em non-principal} ultrafilters on $\Omega$. Recall that a non-empty family $\Phi$ of non-empty subsets of a set $X$ is a {\em filter} if it is closed under finite intersections and supersets. An equivalent form of the Axiom of Choise states that every filter is contained in an ultrafilter. Now starting with a filter having an empty intersection (e.g. the filter of all cofinite subsets of the natural numbers), one obtained a non-principal ultrafilter.

Basic open sets in the space $\beta\Omega$ are of the form $\bar A = \{\zeta\in\beta\Omega\colon A\in\zeta\}$, where $A\subseteq\Omega$. It is interesting to note that each $\bar A$ is at the same time closed, and in fact $\bar A$ is the closure of $A$ in $\beta\Omega$. Moreover, every open and closed subset of $\beta\Omega$ is of the form $\bar A$.

A one-to-one correspondence between ultrafilters on $\Omega$ and Boolean algebra homomorphisms $2^{\Omega}\to \{0,1\}$ is this: think of an ultrafilter $\xi$ on $\Omega$ as its own indicator function $\chi_\xi$ on $2^{\Omega}$, sending $A\subseteq\Omega$ to $1$ if and only if $A\in\xi$. It is not difficult to verify that $\chi_\xi$ is a Boolean algebra homomorphism, and that every homomorphism arises in this way.

The book \cite{johnstone} is a standard reference to the above topics.

Given a subset $\mathscr C$ of a Boolean algebra $B$, and a subset $X$ of the Stone space $S(B)$, one can regard $\mathscr C$ as a set of binary functions restricted to $X$, and compute the VC dimension of $\mathscr C$ over $X$. We will denote this parameter $\VC({\mathscr C}\upharpoonright X)$. 

A subset $I$ of a Boolean algebra $B$ is an {\em ideal} if, whenever $x,y\in I$ and $a\in B$, one has $x\vee y\in I$ and $a\wedge x\in I$. Define a {\em symmetric difference} on $B$ by the formula $x\bigtriangleup y =(x\vee y)\wedge\neg(x\wedge y)$. 
The {\em quotient Boolean algebra} $B/I$ consists of all equivalence classes modulo the equivalence relation $x\sim y\iff x\bigtriangleup y\in I$. It can be easily verified to be a Boolean algebra on its own, with operations induced from $B$ in a unique way. 

The Stone space of $B/I$ can be identified with a compact topological subspace of $S(B)$, consisting of all homomorphisms $B\to \{0,1\}$ whose kernel contains $I$. For instance, if $B=2^{\Omega}$ and $I$ is an ideal of subsets of $\Omega$, then the Stone space of $2^{\Omega}/I$ is easily seen to consist of all ultrafilters on $\Omega$ which do not contain sets from $I$.

\begin{theorem}
\label{th:shatter}
Let $\mathscr C$ be a concept class consisting of measurable subsets of a measurable domain $\Omega=(\Omega,{\mathscr A})$, and let $I$ be an ideal of sets on $\Omega$. The following conditions are equivalent.
\begin{enumerate}
\item 
\label{i:vc}
The $VC$ dimension of the (family of closures of the) concept class $\mathscr C$ restricted to the Stone space of the quotient algebra $2^\Omega/I$ is at least $n$: $\VC({\mathscr C}\upharpoonright S(2^\Omega/I))\geq n$.
\item 
\label{i:shattered}
There exists a family $A_1,A_2,\ldots,A_n$ of subsets of $\Omega$ not belonging to $I$, which is shattered by $\mathscr C$ in the sense that if $J\subseteq \{1,2,\ldots,n\}$, then there is $C\in {\mathscr C}$ which contains all sets $A_i$, $i\in J$, and is disjoint from all sets $A_i$, $i\notin J$. 
In addition, the subsets $A_i$ can be assumed measurable.
\end{enumerate}
\end{theorem}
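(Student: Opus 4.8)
The plan is to prove the equivalence by translating back and forth between ultrafilters in the Stone space $S(2^\Omega/I)$ and the set-theoretic shattering condition, using the standard identification of a concept $C \in \mathscr C$ with the open-closed set $\bar C = \{\zeta : C \in \zeta\}$ in $\beta\Omega$ and its restriction (equivalently, closure) to the subspace $S(2^\Omega/I) = \{\zeta \in \beta\Omega : \zeta \cap I = \emptyset\}$.

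\emph{From (\ref{i:shattered}) to (\ref{i:vc}).} Suppose $A_1,\dots,A_n \notin I$ are shattered by $\mathscr C$ in the stated sense. For each $i$, since $A_i \notin I$, the filter generated by $A_i$ together with the filter of complements of members of $I$ has the finite intersection property: indeed, if $A_i \cap N^c = \emptyset$ for some $N \in I$ then $A_i \subseteq N$, so $A_i \in I$, a contradiction. Hence by the ultrafilter lemma there is an ultrafilter $\zeta_i \ni A_i$ with $\zeta_i \cap I = \emptyset$, i.e. $\zeta_i \in S(2^\Omega/I)$. I claim $\{\zeta_1,\dots,\zeta_n\}$ is shattered by $\{\bar C : C \in \mathscr C\}$ restricted to $S(2^\Omega/I)$. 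Given $J \subseteq \{1,\dots,n\}$, pick $C \in \mathscr C$ with $A_i \subseteq C$ for $i \in J$ and $A_j \cap C = \emptyset$ for $j \notin J$. Then for $i \in J$ we have $C \in \zeta_i$ (as $\zeta_i \ni A_i$ and $A_i \subseteq C$), so $\zeta_i \in \bar C$; for $j \notin J$ we have $A_j \subseteq C^c \in \zeta_j$, so $\zeta_j \notin \bar C$. Thus $\bar C \cap \{\zeta_1,\dots,\zeta_n\} = \{\zeta_i : i \in J\}$, giving $\VC(\mathscr C \upharpoonright S(2^\Omega/I)) \geq n$. (Note this uses nothing about measurability.)

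\emph{From (\ref{i:vc}) to (\ref{i:shattered}), with measurable $A_i$.} Suppose $\zeta_1,\dots,\zeta_n \in S(2^\Omega/I)$ are distinct and shattered: for each $J$ there is $C_J \in \mathscr C$ with $\zeta_i \in \bar{C_J} \iff i \in J$. For each $i$, set
\[
A_i = \bigcap_{J \ni i} C_J \ \cap \bigcap_{J \not\ni i} C_J^c.
\]
This is a finite intersection of sets each lying in $\zeta_i$: if $J \ni i$ then $\zeta_i \in \bar{C_J}$ means $C_J \in \zeta_i$, and if $J \not\ni i$ then $\zeta_i \notin \bar{C_J}$ means $C_J^c \in \zeta_i$. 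Since $\zeta_i$ is a filter, $A_i \in \zeta_i$; in particular $A_i \neq \emptyset$, and since $\zeta_i \cap I = \emptyset$, also $A_i \notin I$ (a member of an ultrafilter disjoint from $I$ cannot lie in $I$). Moreover $A_i$ is a Boolean combination of the finitely many measurable sets $C_J$, hence measurable — this is where the ``in addition'' clause is obtained for free. Finally, the $A_i$ are shattered by $\mathscr C$ in the set-theoretic sense: given $J$, the concept $C_J$ satisfies, for $i \in J$, $A_i \subseteq C_J$ by construction, and for $j \notin J$, $A_j \subseteq C_J^c$, i.e. $A_j \cap C_J = \emptyset$. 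This is exactly condition (\ref{i:shattered}).

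\emph{Main obstacle.} The argument is essentially bookkeeping with ultrafilters, and no step is genuinely hard; the one place requiring a little care is checking that $A_i \notin I$ in the direction (\ref{i:vc})$\Rightarrow$(\ref{i:shattered}) — one must use that $\zeta_i$ lies in the Stone space of the \emph{quotient} $2^\Omega/I$ (equivalently, $\zeta_i \cap I = \emptyset$), not merely in $\beta\Omega$, so that membership of $A_i$ in $\zeta_i$ forces $A_i \notin I$. Symmetrically, in the direction (\ref{i:shattered})$\Rightarrow$(\ref{i:vc}), the nontrivial point is the existence of an ultrafilter extending $A_i$ and avoiding $I$, which is exactly the observation $A_i \notin I \Rightarrow A_i \not\subseteq N$ for all $N \in I$ combined with the ultrafilter lemma. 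Everything else — that $\bar C$ restricted to $S(2^\Omega/I)$ is the set of $\zeta$ containing $C$, that shattering of concepts equals shattering of their closures on this subspace — is immediate from the description of $S(2^\Omega/I)$ recalled just before the theorem.
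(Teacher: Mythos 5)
Your argument is essentially identical to the paper's: the direction (\ref{i:vc})$\Rightarrow$(\ref{i:shattered}) uses the same Boolean combination $A_i = \bigcap_{J\ni i}C_J\cap\bigcap_{J\not\ni i}C_J^c$ and the observation that $A_i\in\xi_i$ forces $A_i\notin I$ and $A_i$ measurable, while the direction (\ref{i:shattered})$\Rightarrow$(\ref{i:vc}) extends the filter generated by $A_i$ together with complements of members of $I$ to an ultrafilter in $S(2^\Omega/I)$, exactly as in the paper. Correct and the same approach.
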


\begin{proof}
(\ref{i:vc})$\Rightarrow$(\ref{i:shattered}). Choose ultrafilters $\xi_1,\ldots,\xi_n$ in the Stone space of the Boolean algebra $2^\Omega/I$, whose collection is shattered by $\mathscr C$. 
For every $J\subseteq \{1,2,\ldots,n\}$, select $C_J\in {\mathscr C}$ which carves the subset $\{\xi_i\colon i\in J\}$ out of $\{\xi_1,\ldots,\xi_n\}$. This means ${C_J}\in\xi_i$ if and only if $i\in J$. For all $i=1,2,\ldots,n$, set
\begin{equation}
\label{eq:bigcap}
A_i = \bigcap_{J\ni i}C_J\cap \bigcap_{J\not\ni i}C_J^c.
\end{equation}
Then $A_i\in\xi_i$ and hence $A_i\notin I$. Furthermore, if $i\in J$, then clearly $A_i\subseteq C_J$, and if $i\notin J$, then $A_i\cap C_J=\emptyset$.
The sets $A_i$ are measurable by their definition.
\par
(\ref{i:shattered})$\Rightarrow$(\ref{i:vc}). Let $A_1,A_2,\ldots,A_n$ be a family of subsets of $\Omega$ not belonging to the set ideal $I$ and shattered by $\mathscr C$ in sense of the lemma. For every $i$, the family of sets of the form $A_i\cap B^c$, $B\in I$ is a filter and so is contained in some ultrafilter $\xi_i$, which is clearly disjoint from $I$ and contains $A_i$. 
If $J\subseteq \{1,2,\ldots,n\}$ and $C_J\in{\mathscr C}$ contains all sets $A_i$, $i\in J$ and is disjoint from all sets $A_i$, $i\notin J$, then the closure $\bar C_J$ of $C_J$ in the Stone space contains $\xi_i$ if and only if $i\in J$. We conclude: the collection of ultrafilters $\xi_i$, $i=1,2,\ldots,n$, which are all contained in the Stone space of $2^{\Omega}/I$, is shattered by the closed sets $\bar C_J$.
\end{proof}

It follows in particular that the VC dimension of a concept class does not change if the domain $\Omega$ is compactified.

\begin{corollary}
$\VC({\mathscr C}\upharpoonright\Omega)=\VC({\mathscr C}\upharpoonright\beta\Omega)$.
\end{corollary}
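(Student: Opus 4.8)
The plan is to derive this as the special case of Theorem~\ref{th:shatter} in which $I$ is the trivial ideal $\{\emptyset\}$. Then $2^\Omega/I = 2^\Omega$, its Stone space is $\beta\Omega$ by definition, and the phrase ``subsets of $\Omega$ not belonging to $I$'' simply means ``nonempty subsets of $\Omega$''. Thus Theorem~\ref{th:shatter} already gives, for each $n$, the equivalence of $\VC({\mathscr C}\upharpoonright\beta\Omega)\geq n$ with the existence of nonempty sets $A_1,\ldots,A_n\subseteq\Omega$ shattered by $\mathscr C$ in the ``cluster'' sense of that theorem.

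Next I would show that cluster-shattering by an $n$-family of nonempty sets is equivalent, for this trivial ideal, to ordinary shattering of an $n$-element subset of $\Omega$. For the easy direction, if $x_1,\ldots,x_n\in\Omega$ are shattered by $\mathscr C$ in the classical sense, then the singletons $\{x_1\},\ldots,\{x_n\}$ are nonempty sets cluster-shattered by $\mathscr C$, since containing (resp.\ being disjoint from) a singleton $\{x_i\}$ is the same as containing (resp.\ not containing) the point $x_i$. Conversely, if nonempty sets $A_1,\ldots,A_n$ are cluster-shattered, pick any $x_i\in A_i$; for each $J\subseteq\{1,\ldots,n\}$ the witnessing concept $C_J\in{\mathscr C}$ contains $A_i\ni x_i$ for $i\in J$ and is disjoint from $A_j\ni x_j$ for $j\notin J$, so $\{x_1,\ldots,x_n\}$ is shattered by $\mathscr C$ over $\Omega$. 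Combining the two equivalences, $\VC({\mathscr C}\upharpoonright\beta\Omega)\geq n$ if and only if $\VC({\mathscr C}\upharpoonright\Omega)\geq n$ for every $n\in\N$, and taking the supremum over $n$ (with the usual convention that the two sides are simultaneously infinite) yields the asserted equality.

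I do not anticipate a genuine obstacle here; the statement is a direct corollary. The only point worth spelling out is that $\VC({\mathscr C}\upharpoonright\beta\Omega)$ is computed on the family of closures $\overline{C}$, $C\in{\mathscr C}$, and that $\overline{C}\cap\Omega=C$ (the principal ultrafilter $\bar x$ lies in $\overline{C}=\{\zeta\in\beta\Omega\colon C\in\zeta\}$ exactly when $x\in C$); this makes the inequality $\VC({\mathscr C}\upharpoonright\Omega)\leq\VC({\mathscr C}\upharpoonright\beta\Omega)$ immediate by restriction to a subspace, so that the substantive content is the reverse inequality, which is precisely what the representative-point argument above, via Theorem~\ref{th:shatter}, supplies.
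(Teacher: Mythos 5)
Your proposal is correct and follows essentially the same route as the paper: both apply Theorem~\ref{th:shatter} with the trivial ideal $I=\{\emptyset\}$ (so that $S(2^\Omega/I)=\beta\Omega$ and ``not in $I$'' means nonempty), and both get the nontrivial inequality by choosing one representative point in each cluster set $A_i$. Your explicit observation that $\overline{C}\cap\Omega=C$ settles the easy inequality, which the paper simply calls trivial.
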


\begin{proof}
The inequality $\VC({\mathscr C}\upharpoonright\Omega)\leq\VC({\mathscr C}\upharpoonright\beta\Omega)$ is trivial. To establish the converse, assume there is a subset of $\beta\Omega$ of cardinality $n$ shattered by $\mathscr C$.
Choose sets $A_i$ as in Theorem \ref{th:shatter},(\ref{i:shattered}). Clearly, any subset of $\Omega$ meeting each $A_i$ at exactly one point is shattered by $\mathscr C$.
\end{proof}

\begin{definition}
Given a concept class $\mathscr C$ on a domain $\Omega$ and an ideal $I$ of subsets of $\Omega$, we define the VC dimension of $\mathscr C$ modulo $I$, 
\[\VC({\mathscr C}\,{\mathrm{mod}}\,I) = \VC({\mathscr C}\upharpoonright S(2^\Omega/I)).\]
That is, $\VC({\mathscr C}\,{\mathrm{mod}}\,I)\geq n$ if and only if any of the equivalent conditions of Theorem \ref{th:shatter} are met. 
\end{definition}

\begin{definition}
Let $\mathscr C$ be a concept class on a domain $\Omega$. If $I$ is the ideal of all countable subsets of $\Omega$, we denote the $\VC({\mathscr C}\,{\mathrm{mod}}\,I)$ by $\VC({\mathscr C}\modd{\omega_1})$ and call it the {\em VC dimension modulo countable sets}.
\end{definition}

Now Theorem \ref{th:shatter} validates a definition of VC dimension modulo countable sets in a form stated in Introduction to our article.

\section{Fat-shattering dimension modulo countable sets}

When dealing with real-valued functions instead of subsets of the domain, the role of Boolean algebras is taken over by commutative $C^\ast$-algebras. Here is a brief summary. See e.g. \cite{arveson} for more.

Recall that a $C^\ast$-algebra is an associative algebra over the field of complex numbers $\C$ equipped with an involution (an anti-linear map $x\mapsto x^\ast$) and a norm which is submultiplicative ($\norm{xy}\leq\norm x\norm y$) and satisfies the property $\norm{x^\ast x}=\norm x^2$. For instance, the family $C(X)$ of all continuous complex-valued functions on a compact topological space $X$ forms a commutative unital $C^\ast$-algebra. Conversely, every commutative unital $C^\ast$-algebra $A$ is of this form. The space $X$, called the {\em Gelfand space}, or the {\em maximal ideal space} of $A$, is uniquely defined. Its elements can be described as non-zero multiplicative complex linear functionals on $A$. The topology on the space of such functionals is the weak star (weak$^\ast$) topology, that is, the coarsest topology making every evaluation map $f\mapsto f(a)$, $a\in A$, continuous. 

We want to calculate the maximal ideal space of the $C^\ast$-algebra $\ell^{\infty}(\Omega)$ of all bounded complex-valued functions on a set $\Omega$. With this purpose, we introduce the following notion.

Given a bounded scalar-valued function $f$ on a set $\Omega$ and an ultrafilter $\xi$ on $\Omega$, the {\em limit of $f$ along the ultrafilter $\xi$} is a uniquely defined number, $y$, with the property that for each $\e>0$,
\begin{equation}
\label{eq:ultralimit}
\{x\in\Omega\colon \abs{f(x)-y}<\e\}\in\xi.
\end{equation}
The limit along an ultrafilter, or an {\em ultralimit}, for short, is denoted $\lim_{x\to\xi}f(x)$.
Unlike the usual limit, the ultralimit of a bounded function along a fixed ultrafilter always exists, the proof of which fact mimicks the classical Heine--Borel compactness argument for the closed interval.
This observation makes the ultralimit a very powerful tool. Its downside is a highly non-constructive nature: typically, the value of an ultralimit of a particular function cannot be computed explicitely except in the ``uninteresting'' situations where it coincides with the usual limit.

The correspondence $\xi\mapsto \lim_{x\to\xi}f(x)$ defines a continuous function $\bar f$ on $\beta\Omega$, which is a unique continuous extension of $f$ over the Stone-\v Cech compactification $\beta\Omega$. Here, as is usual in set-theoretic topology and analysis, we identify every point $x$ of $\Omega$ with the corresponding principal (trivial) ultrafilter, $\bar x$, consisting of all subsets of $\Omega$ which contain $x$ as an element.

If an ultrafilter $\xi$ is fixed, then the correspondence $f\mapsto f(\xi)$ is a linear multiplicative functional of norm one on $\ell^\infty(\Omega)$, sending the function $1$ to $1$. It turns out that every linear multiplicative functional $\phi$ of norm one on $\ell^\infty(\Omega)$ sending $1$ to $1$ is of this form, that is, is the ultralimit along some ultrafilter on $\Omega$. This is, in fact, a rather simple observation: suffices to restrict $\phi$ to the set of all $\{0,1\}$-valued functions on $\Omega$ and notice that the image of every such function is necessarily either $0$ or $1$; the family $\xi$ of all sets $A\subseteq\Omega$ with $\phi(\chi_A)=1$ is now seen to be an ultrafilter, and an approximation argument with finite linear combinations shows that for every $f\in\ell^\infty(\Omega)$ one must have $\phi(f)=\lim_{x\to\xi}f(x)$.
In this way  the maximal ideal space of $\ell^\infty(\Omega)$ is identified with the space of ultrafilters $\beta\Omega$, that is, the Stone-\v Cech compactification of $\Omega$. Thus, the $C^\ast$-algebras $\ell^\infty(\Omega)$ and $C(\beta\Omega)$ are isomorphic. An isomorphism is given by the map $f\mapsto\bar f$, where $\bar f$ is the unique continuous extension of $f$ over $\beta\Omega$ mentioned above.


Given a $C^\ast$-algebra, an {\em ideal} $I$ of $A$ is a closed linear subspace stable under multiplication by elements of $A$. The quotient algebra $A/I$ is again a $C^\ast$-algebra (which is in general not an easy fact to prove). If $A$ is a commutative unital $C^\ast$-algebra and $I$ is a non-trivial ideal ($I\neq A$), then $A/I$ is isomorphic to an algebra of continuous functions on a suitable closed subspace $Y$ of the maximal ideal space $X$ of $A$. A functional $x\in X$ belongs to $Y$ if and only if it factors through the quotient map $\pi\colon A\to A/I$, that is, the kernel of $x\colon A\to \C$ contains $I$.

Conversely, every compact subspace of $X$ determines an ideal of $C(X)$. 

A link with the Boolean algebra setting is provided by the following observation: every ideal $I$ of subsets of $\Omega$ generates an ideal $\tilde I$ of the $C^\ast$-algebra $\ell^{\infty}(\Omega)$, as the smallest ideal of $A$ containing characteristic functions of all elements of $I$. Now one can verify without difficulty that the maximal ideal space of the $C^\ast$-algebra $\ell^{\infty}(\Omega)/\tilde I$ is the Stone space of the Boolean algebra $2^{\Omega}/I$. In fact, every ideal of $\ell^{\infty}(\Omega)$ is of this form.

\begin{definition}
Let $A$ be a commutative unital $C^\ast$-algebra, ${\mathscr F}$ a subset of $A$, and $I$ an ideal of $A$. For every $\e>0$, define the {\em $\e$-fat shattering dimension of $\mathscr F$ modulo $I$,} denoted $\fat_{\e}({\mathscr F}\modd I)$, as the $\e$-fat shattering dimension of $\mathscr F$ viewed as a function class on the maximal ideal space $Y$ of $A/I$. 

In a more detailed way, we denote $\pi\colon A\to A/I$ the quotient homomorphism.
A finite set $B\subseteq Y$ is $\e$-fat shattered by $\mathscr F$ if for some function $h\colon B\to [0,1]$ and every $C\subseteq B$ there is $f_C\in {\mathscr F}$ with
\[\begin{cases}
y(\pi(f_C))>h(y)+\e,&y\in C, \\
y(\pi(f_C))<h(y)-\e,&y\notin C.
\end{cases}
\]
Here elements $y\in Y$ are treated as functionals on $A/I$. The $\e$-fat shattering dimension of $\mathscr F$ modulo $I$, denoted $\fat_{\e}({\mathscr F}\modd I)$ is the supremum of cardinalities of finite subsets of the maximal ideal space of $A/I$ $\e$-fat shattered by $\mathscr F$.
\end{definition}

\begin{definition}
\label{d:fatmodc}
Let $\mathscr F$ be a function class on a domain $\Omega$, and let $\e>0$. We call the {\em $\e$-fat shattering dimension of $\mathscr F$ modulo countable sets} the value $\fat_{\e}({\mathscr F}\modd \tilde I)$, where $\tilde I$ is a $C^\ast$-algebra ideal of $\ell^{\infty}(\Omega)$ generated by characteristic functions of countable sets.
\end{definition}

Now we reformulate Definition \ref{d:fatmodc} avoiding the $C^\ast$-algebraic terminology. Let $\beta_{\omega_1}\Omega$ denote the collection of all points of $\beta\Omega$ which, viewed as ultrafilters on $\Omega$, only contain uncountable sets. The $\e$-fat shattering dimension of $\mathscr F$ modulo countable sets is the usual $\e$-fat shattering dimension of the class of functions $f\in\mathscr F$ extended over $\beta\Omega$ by continuity and then restricted to $\beta_{\omega_1}\Omega$.

We have an analogue of Theorem \ref{th:shatter}.

\begin{theorem}
\label{th:fshatter}
Let $\mathscr F$ be a class of measurable functions on a standard Borel domain $\Omega$, and let $I$ be an ideal of the $C^\ast$-algebra $\ell^{\infty}(\Omega)$. 
Fix any $\e>0$.
The following are equivalent.
\begin{enumerate}
\item 
\label{i:fvc}
The $\e$-fat shattering dimension of $\mathscr F$ modulo $I$ is at least $n$.
\item 
\label{i:fshattered}
There exists a family $A_1,A_2,\ldots,A_n$ of measurable subsets of $\Omega$ whose indicator functions do not belong to $I$, which is $\e$-fat shattered by $\mathscr F$ in the following sense: there is a witness function $h\colon \{1,2,\ldots,n\}\to [0,1]$ and for each $J\subseteq \{1,2,\ldots,n\}$ there is a $f_J\in {\mathscr F}$ such that
\begin{equation}
\label{eq:fatshatt}
\begin{array}{c}
(i\in J\wedge x\in A_i) \Rightarrow {f_J}(x)>h(i)+\e, \\
(i\notin J\wedge x\in A_i) \Rightarrow {f_J}(x)<h(i)-\e.
\end{array}
\end{equation}
\end{enumerate}
\end{theorem}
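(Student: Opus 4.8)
The plan is to transcribe the proof of Theorem~\ref{th:shatter} into the language of commutative $C^\ast$-algebras and ultralimits, with the margin $\e$ and the witness function $h$ now playing the role that plain membership ``$C\in\xi$'' played there. First I would fix a concrete model of the maximal ideal space. Since every ideal of $\ell^\infty(\Omega)$ is of the form $\tilde I_0$ for some set ideal $I_0$ of subsets of $\Omega$, the maximal ideal space $Y$ of $\ell^\infty(\Omega)/I$ is exactly the set of ultrafilters $\xi$ on $\Omega$ containing no member of $I_0$, and evaluation $y(\pi(f))$ equals the ultralimit $\bar f(\xi)=\lim_{x\to\xi}f(x)$. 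Thus a finite set $\{\xi_1,\dots,\xi_n\}\subseteq Y$ is $\e$-fat shattered by $\mathscr F$ precisely when there is $h\colon\{1,\dots,n\}\to[0,1]$ and, for each $J\subseteq\{1,\dots,n\}$, some $f_J\in{\mathscr F}$ with $\bar f_J(\xi_i)>h(i)+\e$ for $i\in J$ and $\bar f_J(\xi_i)<h(i)-\e$ for $i\notin J$. Also $\chi_A\in I\iff A\in I_0$, so condition (\ref{i:fshattered}) is asking for measurable $A_i\notin I_0$.

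For (\ref{i:fvc})$\Rightarrow$(\ref{i:fshattered}) I would mimic formula (\ref{eq:bigcap}). Given shattered ultrafilters $\xi_1,\dots,\xi_n$ with witness $h$ and functions $f_J$, observe that the strict inequality $\bar f_J(\xi_i)>h(i)+\e$ forces the superlevel set $\{x\in\Omega\colon f_J(x)>h(i)+\e\}$ to lie in $\xi_i$: by definition of the ultralimit, the set of $x$ on which $f_J$ is within $\tfrac12(\bar f_J(\xi_i)-h(i)-\e)$ of $\bar f_J(\xi_i)$ belongs to $\xi_i$, and that set is contained in the superlevel set. Dually $\{x\colon f_J(x)<h(i)-\e\}\in\xi_i$ whenever $i\notin J$. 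Setting
\[A_i=\bigcap_{J\ni i}\{x\colon f_J(x)>h(i)+\e\}\ \cap\ \bigcap_{J\not\ni i}\{x\colon f_J(x)<h(i)-\e\},\]
a finite intersection of members of the ultrafilter $\xi_i$, we get $A_i\in\xi_i$, hence $A_i\notin I_0$ and $\chi_{A_i}\notin I$; each $A_i$ is measurable because the $f_J$ are. By construction the same $h$ (transported via $i\mapsto\xi_i$) and the same $f_J$ witness that $A_1,\dots,A_n$ is $\e$-fat shattered in the sense of (\ref{eq:fatshatt}).

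For (\ref{i:fshattered})$\Rightarrow$(\ref{i:fvc}) I would run the filter argument from the proof of Theorem~\ref{th:shatter}. For each $i$, the family $\{A_i\setminus B\colon B\in I_0\}$ is a filter base --- nonempty, and closed under finite intersections because $A_i\notin I_0$ is covered by no member of $I_0$ --- so it extends to an ultrafilter $\xi_i$; this $\xi_i$ avoids $I_0$ (if $B\in I_0\cap\xi_i$ then $B$ and $A_i\setminus B$ both lie in $\xi_i$, a contradiction) and contains $A_i$, so $\xi_i\in Y$. Since $f_J(x)>h(i)+\e$ for all $x\in A_i\in\xi_i$ when $i\in J$, the ultralimit satisfies $\bar f_J(\xi_i)\geq h(i)+\e$, and symmetrically $\bar f_J(\xi_i)\leq h(i)-\e$ for $i\notin J$. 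To recover the \emph{strict} inequalities required by the definition of $\e$-fat shattering modulo $I$, refine the choice of $\xi_i$: write $A_i=\bigcup_{k\geq1}A_i^{(k)}$, where $A_i^{(k)}$ is the measurable set of $x\in A_i$ with $f_J(x)>h(i)+\e+\tfrac1k$ for every $J\ni i$ and $f_J(x)<h(i)-\e-\tfrac1k$ for every $J\not\ni i$; since the ideal of countable sets (the case of interest) is a $\sigma$-ideal and $A_i\notin I_0$, some $A_i^{(k_i)}$ lies outside $I_0$, and choosing $\xi_i$ to contain $A_i^{(k_i)}$ gives $\bar f_J(\xi_i)\geq h(i)+\e+\tfrac1{k_i}>h(i)+\e$, and dually on the other side. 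Hence $\{\xi_1,\dots,\xi_n\}\subseteq Y$ is $\e$-fat shattered by $\mathscr F$, i.e.\ $\fat_{\e}({\mathscr F}\modd I)\geq n$.

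The one genuinely new difficulty, compared with the Boolean case, is precisely this strict-versus-non-strict issue in (\ref{i:fshattered})$\Rightarrow$(\ref{i:fvc}): an ultralimit along $\xi_i$ of a function that is pointwise above $h(i)+\e$ on a $\xi_i$-large set need only be $\geq h(i)+\e$, not $>$. The $\sigma$-ideal decomposition above handles it cleanly for countable sets; for a general ideal one instead absorbs a harmless loss of scale (replacing $\e$ by any $\e'<\e$, which affects none of the applications), much as the $\e/24$ slack is tolerated in the sample-size bound (\ref{eq:asymptoticest}). Everything else --- extending filter bases to ultrafilters, identifying $Y$ with the ultrafilters avoiding $I_0$, and checking measurability of the sets produced --- is a routine adaptation of the ultrafilter bookkeeping already carried out for Theorem~\ref{th:shatter}, now with the continuous (ultralimit) extension $\bar f$ in place of the clopen closure $\overline{C}$.
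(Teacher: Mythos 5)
In the direction (\ref{i:fvc})$\Rightarrow$(\ref{i:fshattered}) your construction of the $A_i$ as finite intersections of superlevel and sublevel sets is the paper's construction, written out directly in $\Omega$ rather than first in $\beta\Omega$ via (\ref{eq:fbigcap}); the two are equivalent.

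You were right to worry about the strict-versus-non-strict issue, and in fact the paper's argument for (\ref{i:fshattered})$\Rightarrow$(\ref{i:fvc}) has a gap at exactly this point. The paper asserts that $\overline{f_J}(\xi)>h(i)+\e$ for every $\xi$ in the closure $\overline{A_i}$ merely because $f_J>h(i)+\e$ pointwise on $A_i$; passing to the closure only gives $\geq$. Read literally for an arbitrary closed ideal $I$ of $\ell^\infty(\Omega)$, the equivalence is actually false: take $\Omega=\N$ and let $I$ be the closed ideal generated by characteristic functions of finite sets (so the associated set ideal $I_0$ is that of finite subsets), $n=1$, $A_1=\N$, $h(1)=1/2$, $f_1(k)=1/2+\e(1+1/k)$, $f_0(k)=1/2-\e(1+1/k)$ for some $\e<1/4$. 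Condition (\ref{i:fshattered}) holds, yet along every ultrafilter avoiding $I_0$ (i.e.\ every non-principal ultrafilter) the ultralimits of $f_1$ and $f_0$ are exactly $1/2+\e$ and $1/2-\e$, so no witness value can satisfy both required strict inequalities and (\ref{i:fvc}) fails.

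Your $\sigma$-ideal decomposition $A_i=\bigcup_k A_i^{(k)}$ is exactly the right repair, and it covers everything the paper actually uses: the only ideal invoked downstream is the ideal of countable sets, which is a $\sigma$-ideal, and there your argument produces some $A_i^{(k_i)}\notin I_0$ and an ultrafilter $\xi_i$ containing it and disjoint from $I_0$, so the ultralimits clear $h(i)\pm\e$ by the positive margin $1/k_i$. For a general ideal one must, as you say, either accept a scale loss $\e\mapsto\e'<\e$ in (\ref{i:fvc}) or replace its strict inequalities by non-strict ones. In short, your proposal does not just transcribe Theorem~\ref{th:shatter}: it catches and corrects a genuine oversight in the published proof.
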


\begin{proof}
Before proceeding to the argument, let us remind that ultrafilters on $\Omega$ are viewed sometimes as mere points of the Stone-\v Cech compactification $\beta\Omega$, and sometimes as families of subsets of $\Omega$. Every point $x\in\Omega$ is canonically {\em identified} with the corresponding principal ultrafilter $\bar x$, and every bounded function $f$ on $\Omega$ admits a canonical continuous extension over $\beta\Omega$ via the rule $\bar f(\xi)=\lim_{x\to\xi}f(x)$. Notice that this definition implies $\bar f(\bar x)=f(x)$ whenever $x\in\Omega$.

(\ref{i:vc})$\Rightarrow$(\ref{i:shattered}). Let $Y\subseteq \beta\Omega$ denote the maximal ideal space of the $C^\ast$-algebra $\ell^{\infty}(\Omega)/I$. In other words, $\ell^{\infty}(\Omega)/I\cong C(Y)$.
There exist $n$ elements of $Y$ which are $\e$-fat shattered by $\mathscr F$, let us say $\xi_1,\ldots,\xi_n$. Recall that these are ultrafilters on $\Omega$, that is, families of subsets of the domain. Choose a witness function $h\colon \{1,2,\ldots,n\}\to [0,1]$, and select for every $J\subseteq \{1,2,\ldots,n\}$ a function $f_J\in {\mathscr F}$ whose ultralimit along $\xi_i$ is $>h(i)+\e$ if $i\in J$, and is $<h(i)-\e$ otherwise. For all $i=1,2,\ldots,n$, denote by
\begin{equation}
\label{eq:fbigcap}
\widetilde{A_i} = \bigcap_{J\ni i}\left\{\xi\in\beta\Omega\colon \widetilde{f_J}(\xi)>h(i)+\e\right\}\cap \bigcap_{J\not\ni i} \left\{\xi\in\beta\Omega\colon \overline{f_J}(\xi)<h(i)-\e\right\},
\end{equation}
and consider $A_i = \widetilde{A_i}\cap\Omega$. For every $i$ one has $\xi_i\in \widetilde{A_i}$ by the choice of the functions $f_J$. Since the value $\overline{f_J}(\xi_i)$ is the ultralimit of $f_J$ along $\xi_i$, it follows from the definition of an ultralimit (\ref{eq:ultralimit}) that each of the $2^n$ sets appearing in Eq. (\ref{eq:fbigcap}) belongs to $\xi_i$, and since $\xi_i$ is closed under finite intersections, one has $A_i\in\xi_i$. Equivalently, $\overline{\chi_{A_i}}(\xi_i)=1$, which implies that $\chi_{A_i}\notin I$ (as every function in the ideal $I$ --- or, a bit more precisely, its unique continuous extension over $\beta\Omega$ --- identically vanishes on $Y$). Since the functions $f_J$ are measurable with regard to the Borel structure on $\Omega$, so are the sets $A_i$. The condition (\ref{i:shattered}) is verified by the definition of the sets $A_i$. 
\par
(\ref{i:shattered})$\Rightarrow$(\ref{i:vc}). Let $A_1,A_2,\ldots,A_n$ be a family of subsets of $\Omega$ satisfying (\ref{i:shattered}). Their topological closures $\overline{A_i}$ taken in $\beta\Omega$ satisfy  
\[(i\in J\wedge \xi\in \overline{A_i}) \Rightarrow \overline{f_J}(\xi)>h(i)+\e,\]
\[(i\notin J\wedge \xi\in \overline{A_i}) \Rightarrow \overline{f_J}(\xi)<h(i)-\e.\]
The condition $\chi_{A_i}\notin I$ can be reformulated as $\overline{A_i}\cap Y\neq\emptyset$. Choose $\xi_i\in \overline{A_i}\cap Y$ for every $i=1,2,\ldots,n$. The set $\{\xi_i\}_{1=1}^n$ is $\e$-fat shattered by the functions $\bar f$, $f\in {\mathscr F}$ with the witness function $\xi_i\mapsto h(i)$. 
\end{proof}

\begin{remark}
Note that we have not used the assumption of measurability of subsets $A_i$ in the proof of the implication (\ref{i:shattered})$\Rightarrow$(\ref{i:vc}).
\end{remark}

\begin{corollary}
Let $\mathscr F$ be a class of $[0,1]$-valued functions on $\Omega$ and let $\e>0$. The $\e$-fat shattering dimension of $\mathscr F$ equals the $\e$-fat shattering dimension of the set of functions $\bar f$, $f\in {\mathscr F}$ on $\beta\Omega$. \qed
\end{corollary}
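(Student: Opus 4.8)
The plan is to obtain this statement as the special case of Theorem~\ref{th:fshatter} in which $I$ is the zero ideal $\{0\}$ of $\ell^{\infty}(\Omega)$. Since $\ell^{\infty}(\Omega)/\{0\}=\ell^{\infty}(\Omega)\cong C(\beta\Omega)$, the maximal ideal space of this quotient is all of $\beta\Omega$, and the requirement $\chi_{A_i}\notin I$ degenerates to $A_i\neq\emptyset$. Thus the cited theorem specializes to: the $\e$-fat shattering dimension of $\{\bar f\colon f\in\mathscr F\}$ on $\beta\Omega$ is $\geq n$ if and only if there is a family $A_1,\dots,A_n$ of non-empty measurable subsets of $\Omega$ that is $\e$-fat shattered by $\mathscr F$ in the sense of Eq.~(\ref{eq:fatshatt}) for some witness $h\colon\{1,\dots,n\}\to[0,1]$.

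First I would dispose of the easy inequality. Identifying every $x\in\Omega$ with its principal ultrafilter, one has $\bar f(\bar x)=f(x)$ for all $f\in\mathscr F$, so any finite $\e$-fat shattered subset of $\Omega$ remains $\e$-fat shattered when regarded inside $\beta\Omega$ by the functions $\bar f$; hence $\fat_{\e}(\mathscr F\upharpoonright\Omega)\leq\fat_{\e}(\{\bar f\colon f\in\mathscr F\}\upharpoonright\beta\Omega)$.

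For the reverse inequality I would take $n\leq\fat_{\e}(\{\bar f\}\upharpoonright\beta\Omega)$ and use the specialization above to produce non-empty measurable sets $A_1,\dots,A_n\subseteq\Omega$, a witness $h$, and functions $f_J\in\mathscr F$ with~(\ref{eq:fatshatt}). The key small observation is that these $A_i$ are pairwise disjoint: if $x\in A_i\cap A_j$ with $i\neq j$, then (\ref{eq:fatshatt}) applied with $J=\{i\}$ gives $f_J(x)>h(i)+\e$ and simultaneously $f_J(x)<h(j)-\e$, forcing $h(i)<h(j)-2\e$, while $J=\{j\}$ forces the opposite strict inequality --- a contradiction. (Alternatively, disjointness is visible directly from the explicit description of the $A_i$ in Eq.~(\ref{eq:fbigcap}).) Picking one point $a_i\in A_i$ for each $i$ therefore yields $n$ distinct points of $\Omega$, and (\ref{eq:fatshatt}) shows that $\{a_1,\dots,a_n\}$ is $\e$-fat shattered by $\mathscr F$ with witness function $a_i\mapsto h(i)$. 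Hence $\fat_{\e}(\mathscr F\upharpoonright\Omega)\geq n$, and the two inequalities give equality.

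I do not expect a genuine obstacle: the statement is simply the fat-shattering analogue of the earlier corollary $\VC(\mathscr C\upharpoonright\Omega)=\VC(\mathscr C\upharpoonright\beta\Omega)$, and all of the substantive work --- transporting a shattering configuration of ultrafilters back down to the base domain --- has already been carried out inside Theorem~\ref{th:fshatter}. The only point meriting attention is ensuring that the $n$ chosen representatives $a_i$ are pairwise distinct, which is why the disjointness of the $A_i$ has to be noted.
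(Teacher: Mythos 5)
Your argument is correct, and it takes essentially the same route the paper intends: the statement is the fat-shattering analogue of the earlier corollary $\VC({\mathscr C}\upharpoonright\Omega)=\VC({\mathscr C}\upharpoonright\beta\Omega)$, whose proof in the paper likewise specializes the corresponding shattering theorem to the trivial ideal and picks one representative per set. Your additional remark that the $A_i$ are pairwise disjoint (so that the chosen representatives are genuinely $n$ distinct points) is a small but worthwhile point that the paper's VC-version proof leaves implicit; otherwise the two arguments coincide.
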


\begin{corollary}
Let $\mathscr F$ be a class of $[0,1]$-valued functions on $\Omega$ and let $\e>0$. The $\e$-fat shattering dimension of $\mathscr F$ modulo countable sets is the supremum of cardinalities of finite families $A_1,A_2,\ldots,A_n$ of uncountable subsets of $\Omega$ which are $\e$-fat shattered by $\mathscr F$ in the sense of Condition (\ref{eq:fatshatt}) with a suitable witness function $h\colon \{1,2,\ldots,n\}\to [0,1]$. \qed
\end{corollary}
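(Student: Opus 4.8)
The plan is to read this off Theorem~\ref{th:fshatter} specialized to the ideal $\tilde I$ of Definition~\ref{d:fatmodc}, once we identify exactly which indicator functions lie in $\tilde I$. So the first step is to pin down $\tilde I\subseteq\ell^\infty(\Omega)$, the (closed) ideal generated by the characteristic functions of countable subsets of $\Omega$. The purely algebraic ideal they generate is $\{g\cdot\chi_N\colon g\in\ell^\infty(\Omega),\ N\subseteq\Omega\text{ countable}\}$, that is, precisely the bounded functions whose set of non-zero values is countable. I would then observe that this family is already uniformly closed: if $h=\lim_j h_j$ in the sup-norm and each $h_j$ vanishes off a countable set $N_j$, then $h(x)\neq 0$ forces $h_j(x)\neq 0$ for all large $j$, hence $x\in\bigcup_j N_j$, so $\{x\colon h(x)\neq 0\}\subseteq\bigcup_j N_j$ is countable. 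Consequently $\tilde I=\{h\in\ell^\infty(\Omega)\colon \{x\colon h(x)\neq 0\}\text{ is countable}\}$, and for an arbitrary subset $A\subseteq\Omega$ — measurable or not — one has $\chi_A\in\tilde I$ if and only if $A$ is countable. (This is also visible from the identification, recalled in the previous section, of the maximal ideal space of $\ell^\infty(\Omega)/\tilde I$ with $\beta_{\omega_1}\Omega$, the ultrafilters all of whose members are uncountable.)

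Next I would combine this with Theorem~\ref{th:fshatter} and the Remark following it. By Definition~\ref{d:fatmodc}, $\fat_\e({\mathscr F}\modd\omega_1)=\fat_\e({\mathscr F}\modd\tilde I)$. If $\fat_\e({\mathscr F}\modd\tilde I)\geq n$, the implication (\ref{i:fvc})$\Rightarrow$(\ref{i:fshattered}) of Theorem~\ref{th:fshatter} yields measurable sets $A_1,\ldots,A_n$ with $\chi_{A_i}\notin\tilde I$ — hence, by the first step, uncountable — that are $\e$-fat shattered by $\mathscr F$ in the sense of Condition~(\ref{eq:fatshatt}); thus the supremum $S$ on the right-hand side of the corollary is $\geq n$. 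Conversely, given any family $A_1,\ldots,A_n$ of \emph{uncountable} subsets of $\Omega$, with no measurability hypothesis, that is $\e$-fat shattered as in~(\ref{eq:fatshatt}), each $\chi_{A_i}\notin\tilde I$, and the implication (\ref{i:fshattered})$\Rightarrow$(\ref{i:fvc}) — which, as noted in the Remark after Theorem~\ref{th:fshatter}, does not use measurability of the $A_i$ — gives $\fat_\e({\mathscr F}\modd\tilde I)\geq n$. Letting $n$ range over all naturals in both directions yields $\fat_\e({\mathscr F}\modd\omega_1)=S$, which is the assertion.

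There is no genuine obstacle here. The one point that needs a moment's attention is that the supremum in the statement is taken over \emph{all} uncountable families, not only measurable ones, so the argument genuinely relies on the measurability-free half of Theorem~\ref{th:fshatter} for the inequality $\fat_\e({\mathscr F}\modd\omega_1)\geq S$; the description of $\tilde I$ in the first step is elementary, and the other inequality is just the measurable direction of Theorem~\ref{th:fshatter} together with the fact that $\chi_A\notin\tilde I$ exactly when $A$ is uncountable.
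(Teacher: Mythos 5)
Your proof is correct and matches the paper's intent: the corollary carries a \qed in its statement precisely because it is meant to be read off directly from Definition~\ref{d:fatmodc} and Theorem~\ref{th:fshatter}, which is exactly what you do. The two points you single out — that $\tilde I$ is the set of bounded functions with countable support (so $\chi_A\notin\tilde I$ iff $A$ is uncountable), and that the Remark after Theorem~\ref{th:fshatter} licenses dropping measurability of the $A_i$ in the ``$S\leq\fat_\e({\mathscr F}\modd\omega_1)$'' direction — are the only two observations that need spelling out, and you get both right.
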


\section{\label{s:necessary}Finiteness of combinatorial dimension modulo countable sets as a necessary condition}

In this Section, we remark that, similarly to the classical case of distribution-free learning, 
finiteness of VC dimension modulo countable sets is necessary for PAC learnability of a concept class under non-atomic measures, but this is not the case for fat shattering dimension of a function class.

\begin{lemma}
\label{l:supports}
Every uncountable Borel subset of a standard Borel space supports a non-atomic Borel probability measure.
\end{lemma}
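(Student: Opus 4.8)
The plan is to reduce the statement to the classical fact that the Cantor set (and hence the standard Borel space $2^{\mathbb N}$) carries a non-atomic Borel probability measure, and then transport this measure along a Borel injection. First I would invoke the structure theory of standard Borel spaces: any uncountable standard Borel space $(\Omega,{\mathscr A})$ is Borel isomorphic to $[0,1]$ (or to the Cantor space $2^{\mathbb N}$), and, more to the point, any uncountable Borel subset $X\subseteq\Omega$ is itself, with the relative Borel structure, a standard Borel space of cardinality continuum. The key classical result I would cite is the perfect set property for analytic (in particular Borel) sets: every uncountable Borel subset of a standard Borel space contains a subset homeomorphic (hence Borel isomorphic) to the Cantor set $2^{\mathbb N}$; equivalently there is a Borel injection $\varphi\colon 2^{\mathbb N}\hookrightarrow X$.

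Next I would produce a non-atomic measure on the source. On $2^{\mathbb N}$ the obvious candidate is the coin-tossing (Haar) measure $\nu=\bigotimes_{n}(\tfrac12\delta_0+\tfrac12\delta_1)$, which is a Borel probability measure and is non-atomic because every singleton $\{x\}$ is a decreasing intersection of basic clopen cylinders of measure $2^{-k}\to 0$. Then I would push this measure forward: define $\mu(B)=\nu(\varphi^{-1}(B))$ for Borel $B\subseteq\Omega$. Since $\varphi$ is a Borel injection between standard Borel spaces, its image $K=\varphi(2^{\mathbb N})$ is Borel and $\varphi$ is a Borel isomorphism onto $K$, so $\mu$ is a well-defined Borel probability measure on $\Omega$ concentrated on $K\subseteq X$. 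Non-atomicity is inherited: if $\mu(\{\omega\})>0$ then $\omega\in K$ and $\nu(\{\varphi^{-1}(\omega)\})>0$, contradicting non-atomicity of $\nu$. (Strictly, one should recall that for Borel measures on standard Borel spaces, ``non-atomic'' in the sense of every singleton being null coincides with the diffuseness property stated in the paper — splitting positive-measure sets — which follows from the intermediate value behavior of $t\mapsto\mu(\varphi(\{y: y<_{\mathrm{lex}}t\}))$, or simply from the fact that $\nu$ has this splitting property on cylinders.)

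The only genuine obstacle is the perfect set property, i.e.\ knowing that an uncountable Borel set actually contains a Cantor set rather than merely having cardinality continuum; this is a standard theorem of descriptive set theory (it holds for all analytic sets, by the Cantor--Bendixson analysis applied in a Polish topology refining the Borel structure, or via the Suslin operation), and I would simply cite it rather than reprove it. Everything else — pushing forward a measure along a Borel isomorphism, checking $\sigma$-additivity, checking that singletons are null — is routine. Thus the proof is: realize $X$ as containing a Cantor set via a Borel embedding $\varphi$, take the Haar measure $\nu$ on $2^{\mathbb N}$, and let $\mu=\varphi_\ast\nu$; this is a non-atomic Borel probability measure on $\Omega$ with $\mu(X)=1$, as required.
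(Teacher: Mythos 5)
Your proof is correct and follows the same overall strategy as the paper: produce an injection from a well-understood uncountable space into the Borel set $A$ and push a non-atomic probability measure forward along it, with non-atomicity preserved because preimages of singletons under an injection are singletons. The difference is in the descriptive-set-theoretic input used to produce the injection. You invoke the perfect set property for Borel (indeed analytic) sets to obtain a Cantor-set embedding $\varphi\colon 2^{\mathbb N}\hookrightarrow A$ and push forward the coin-tossing measure; the paper instead invokes Souslin's parametrization theorem (every Borel set is the continuous one-to-one image of some Polish space $X$), observes that $X$ must be uncountable and hence carries some non-atomic probability measure, and pushes that forward. Both are textbook facts; your route is a bit more concrete, since both the source space and the measure are explicitly named, while the paper's route avoids singling out the Cantor set and quotes a single parametrization theorem. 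Your extra remark reconciling the singleton-null characterization with the paper's splitting definition of non-atomicity is a reasonable completeness check, and the paper leaves it implicit. One small inessential point: you justify that $K=\varphi(2^{\mathbb N})$ is Borel by appealing to the general Lusin--Souslin fact about injective Borel images, but since the embedding you get from the perfect set theorem is a homeomorphism onto its image, $K$ is already compact, hence closed, so the heavier machinery is not needed there.
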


\begin{proof}
Let $A$ be an uncountable Borel subset of a standard Borel space $\Omega$, that is, $\Omega$ is a Polish space equipped with its Borel structure. According to Souslin's theorem (see e.g. Theorem 3.2.1 in \cite{arveson}), there exists a Polish (complete separable metric) space $X$ and a continuous one-to-one mapping $f\colon X\to A$. The Polish space $X$ must be therefore uncountable, and so supports a non-atomic probability measure, $\nu$. The direct image measure $f_{\ast}\nu =\nu(f^{-1}(B))$ on $\Omega$ is a Borel probability measure supported on $A$, and it is non-atomic because the inverse image of every singleton is a singleton in $X$ and thus has measure zero.
\end{proof}

The following result makes no measurability assumptions on the concept class.

\begin{theorem}
\label{th:necessary}
Let $\mathscr C$ be a concept class on a domain $(\Omega,{\mathscr B})$ which is a standard Borel space. If $\mathscr C$ is PAC learnable under non-atomic measures, then the VC dimension of $\mathscr C$ modulo countable sets is finite.
\end{theorem}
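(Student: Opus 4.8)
The plan is to prove the contrapositive: assuming $\VC(\mathscr C \modd \omega_1) = \infty$, I will construct a single non-atomic measure (or a family of them indexed by $n$) witnessing that no learning rule can be PAC. By Theorem \ref{th:shatter}, for each $n$ there is a family $A_1^{(n)}, \ldots, A_n^{(n)}$ of \emph{uncountable measurable} subsets of $\Omega$ shattered by $\mathscr C$. The first step is to arrange these to be pairwise disjoint: since each $A_i^{(n)}$ is uncountable Borel in a standard Borel space, I can shrink each to an uncountable Borel subset so that the $n$ of them are pairwise disjoint (removing countably many points or passing to disjoint uncountable Borel pieces does not destroy uncountability, and the shattering property only requires the sets to be ``not in $I$'', i.e. uncountable, and the witnessing concepts still separate the smaller sets). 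By Lemma \ref{l:supports}, each shrunken $A_i^{(n)}$ supports a non-atomic Borel probability measure $\nu_i^{(n)}$; set $\mu_n = \frac1n \sum_{i=1}^n \nu_i^{(n)}$, a non-atomic probability measure on $\Omega$ with $\mu_n(A_i^{(n)}) = 1/n$ for each $i$.

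The second step is the standard lower-bound argument for learning a shattered family, adapted to this setting. Fix a learning rule $\mathcal L$ and a sample size $m$. Choose a target concept $C \subseteq \Omega$ at random by selecting $J \subseteq \{1,\ldots,n\}$ uniformly (each $A_i^{(n)}$ is entirely in or entirely out of $C$, using a shattering witness $C_J \in \mathscr C$) — this makes sense since the $A_i$ are disjoint. Draw an i.i.d.\ $m$-sample $\sigma$ from $\mu_n$. With $n$ large relative to $m$, with high probability the sample meets at most, say, $m$ of the clusters $A_i^{(n)}$, so at least $n - m$ clusters are ``unseen''; on each unseen cluster the labels of $C_J$ are independent fair coin flips conditionally, so the hypothesis $\mathcal L(\sigma, C_J \cap \sigma)$ must guess the label of each unseen cluster, and in expectation gets a constant fraction of the $(n-m)/n$ unseen mass wrong. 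Hence $\E_{J}\, \E_\sigma\, \mu_n(C_J \bigtriangleup \mathcal L(\sigma, C_J\cap\sigma)) \geq c \cdot \frac{n-m}{n}$ for an absolute constant $c$ (e.g.\ $c = 1/4$ after a careful accounting of the partially-seen clusters and the $1/n$ weight on a cluster seen once), so for $n$ large there is a concept $C_J \in \mathscr C$ for which $\mu_n^{\otimes m}\{\sigma : \mu_n(C_J \bigtriangleup \mathcal L(\sigma, C_J\cap\sigma)) > c/8\}$ is bounded below by an absolute positive constant. Since this holds for every fixed $m$ once $n = n(m)$ is chosen large enough, the defining limit \eqref{eq:pac} fails for $\e = c/8$, so $\mathscr C$ is not PAC learnable under non-atomic measures.

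There is a measurability subtlety to address: the quantity $\mu_n^{\otimes m}\{\sigma : \mu_n(C_J \bigtriangleup \mathcal L(\sigma, C_J\cap\sigma)) > \e\}$ must be meaningful, but by the measurability condition \eqref{eq:measurabilityl} built into the definition of a learning rule, the map $\sigma \mapsto \mu_n(C_J \bigtriangleup \mathcal L(\sigma, C_J \cap \sigma))$ is $\mu_n^{\otimes m}$-measurable for each fixed $C_J$, so the probabilities in play are well-defined without any hypothesis on $\mathscr C$ itself — this is exactly why the theorem can avoid measurability assumptions on the class. A second point of care is that the averaging over $J$ of an integer-indexed family of measurable functions is a finite average, hence measurable, so extracting a ``bad'' $C_J$ via the probabilistic method is legitimate.

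I expect the main obstacle to be the combinatorial core of step two: bookkeeping the contribution of clusters that the sample $\sigma$ touches versus those it misses, and showing the expected symmetric-difference mass is bounded below by a \emph{fixed} constant independent of $m$ (for suitably large $n$), rather than something that degrades. The clean way is to discard the (at most $m$) touched clusters entirely, note they carry total $\mu_n$-mass at most $m/n \to 0$, and on the remaining $\geq n-m$ clusters invoke that the learner's guess of the $\pm$ label on each is, conditionally on $\sigma$ and on the labels of touched clusters, forced to disagree with a fair independent coin with probability exactly $1/2$; each such disagreement costs $1/n$ of $\mu_n$-mass, giving expected error $\geq \frac12 \cdot \frac{n-m}{n}$, which exceeds $\frac14$ for $n \geq 2m$. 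This makes the argument quantitatively trivial once the disjointness reduction in step one is in place, so the disjointness reduction — ensuring we may choose the shattered clusters pairwise disjoint without losing uncountability or the shattering property — is really the only place demanding a small amount of descriptive-set-theoretic care.
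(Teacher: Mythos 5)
Your argument is correct, and it reaches the conclusion by a genuinely different route than the paper. The paper's proof is a \emph{packing} argument: having placed the non-atomic measure $\mu$ uniformly over the $d$ shattered uncountable clusters, it counts $2\e$-separated concepts in $(\mathscr C, d_\mu)$, bounds the packing number from below via Chernoff--Okamoto, and then quotes Lemma 7.1 of \cite{vidyasagar2003}, which asserts that PAC learnability forces $\sup_{\mu}M(2\e,\mathscr C,\mu)<\infty$. Your proof is instead a direct \emph{no-free-lunch} (Bayes-error) lower bound: randomize the target over $J\subseteq\{1,\ldots,n\}$, observe that conditionally on the sample at least $n-m$ of the $n$ clusters have labels distributed as independent fair coins, and compute that for any hypothesis $H=\mathcal L(\sigma,C_J\cap\sigma)$ the unseen cluster $i$ contributes $\nu_i(H)$ or $1-\nu_i(H)$ to the symmetric difference with equal probability, hence exactly $\tfrac12$ in expectation, so the expected error is at least $\tfrac12\cdot\frac{n-m}{n}$. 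Both arguments are standard and of equivalent strength; the metric-entropy route modularizes the measurability accounting inside a quoted lemma, while your route is self-contained and makes explicit that the measurability clause \eqref{eq:measurabilityl} built into the definition of a learning rule is exactly what makes the relevant probabilities well-defined with no assumption on $\mathscr C$. Either one is a legitimate way to finish.

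One thing you worried about unnecessarily, and whose proposed fix would not actually work in the generality you stated it: the pairwise disjointness of the shattered clusters is automatic, not something one must arrange. If $A_1,\ldots,A_n$ is shattered in the paper's thickened sense by witnesses $C_J\in\mathscr C$, then for $i\neq j$ pick any $J$ with $i\in J$ and $j\notin J$; since $A_i\subseteq C_J$ and $A_j\cap C_J=\emptyset$ one gets $A_i\cap A_j=\emptyset$. Equivalently, the measurable sets produced by formula \eqref{eq:bigcap} in the proof of Theorem \ref{th:shatter} are pairwise disjoint by construction, and the paper silently uses this when it says the mixture $\mu=\frac1d\sum\mu_i$ gives each $A_i$ weight exactly $1/d$ and that $d_\mu$ restricted to cluster-constant concepts is the normalized Hamming distance. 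It is fortunate the disjointness is free, because the blanket claim that one can always shrink overlapping uncountable Borel sets to pairwise disjoint uncountable Borel pieces is false in general; in the shattered setting overlap is simply forbidden, so no such reduction is needed.
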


\begin{proof}
This is just a minor variation of a classical result for distribution-free PAC learnability (Theorem 2.1(i) in \cite {BEHW}; we will follow the proof as presented in \cite{vidyasagar2003}, Lemma 7.2 on p. 279). 

Suppose $\VC({\mathscr C}\modd\omega_1)\geq d$. According to Theorem \ref{th:shatter}, there is a family of uncountable Borel sets $A_i$, $i=1,2,\dots, d$, shattered by $\mathscr C$ in our sense. Using Lemma \ref{l:supports}, select for every $i=1,2,\ldots,d$ a non-atomic probability measure $\mu_i$ supported on $A_i$, and let $\mu = \frac 1d\sum_{i=1}^d\mu_i$. This $\mu$ is a non-atomic Borel probability measure, giving each $A_i$ equal weight $1/d$. See Figure \ref{fig:shattered-4}.

\begin{figure}[ht]
\begin{center}
  \scalebox{0.275}[0.275]{\includegraphics{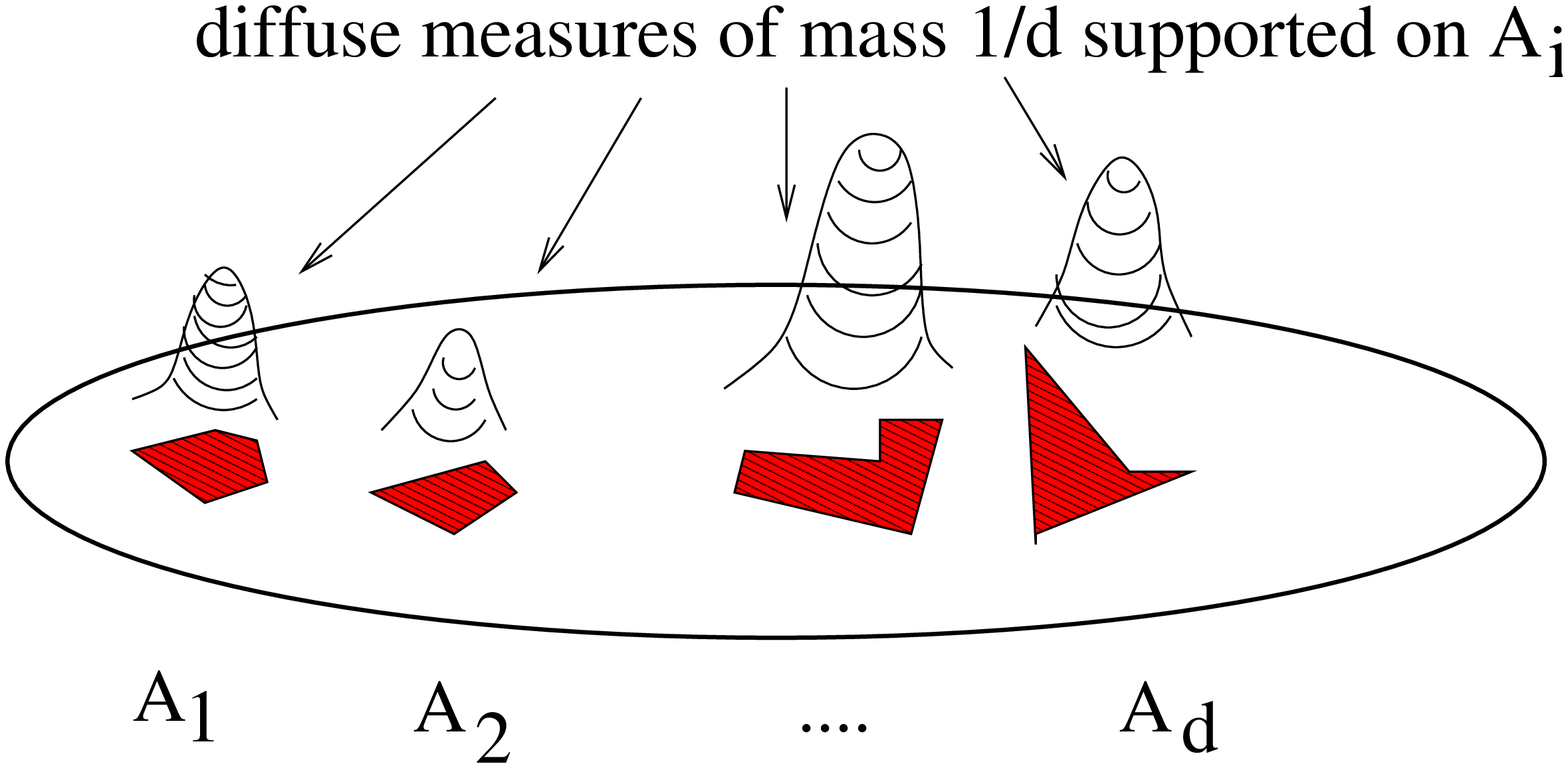}} 
\caption{Construction of the measure $\mu$.}
\label{fig:shattered-4}
\end{center}
\end{figure}

For every $d$-bit string $\sigma$ there is a concept $C_\sigma\in{\mathscr C}$ which contains all $A_{i}$ with $\sigma_i=1$ and is disjoint from $A_i$ with $\sigma_i=0$. 
If $A$ and $B$ take constant values on all the sets $A_i$, $i=1,2,\ldots,d$, then $d_{\mu}(A,B)$ is just the normalized Hamming distance between the corresponding $d$-bit strings. Now, given $A\in{\mathscr C}$ and $0\leq k\leq d$, there are 
\[\sum_{k\leq 2\e d}{d \choose k}\]
concepts $B$ with $d_{\mu}(A,B)\leq 2\e$. This allows to get the following lower bound on the number of pairwise $2\e$-separated concepts:
\[\frac {2^d}{\sum_{k\leq 2\e d}{d\choose k}}.\]
The Chernoff--Okamoto bound allows to estimate the above expression from below by $\exp[2(0.5-2\e)^2d]$. We conclude: the metric entropy of $\mathscr C$ with regard to $\mu$ is bounded from below by
\[M(2\e,{\mathscr C},\mu)\geq \exp[2(0.5-2\e)^2d].\]

The assumption $\VC({\mathscr C}\modd\omega_1)=\infty$ now implies that for every $0<\e<0.25$,
\[\sup_{P\in {\mathcal P}}M(2\e,{\mathscr C},\mu)=\infty,\]
where $\mathcal P$ denotes the family of all non-atomic measures on $\Omega$.
By Lemma 7.1 in \cite{vidyasagar2003}, p. 278, the class $\mathscr C$ is not PAC learnable under $\mathcal P$.
\end{proof}

On the contrary, a function class $\mathscr F$ can be PAC learnable under non-atomic measures and still have an infinite fat-shattering dimension modulo countable sets. The following is an adaptation of Example 2.10 in \cite{pestov:2010}.

\begin{example}
\label{ex:fprecpacinfdim}
For a given $n\in\N$, call any interval of the form $[i/n,(i+1)/n]$, $i=0,1,\ldots,n-1$ an {\em interval of order $n$}. Form the class $\mathscr C_n$ consisting of all unions of less than $\sqrt[3]{n}$ intervals of order $n$. Let $\mathscr C$ be the union of classes ${\mathscr C}_n$, $n\in\N$. Now we will transform $\mathscr C$ into a function class. With this purpose, establish a bijection $i$ between ${\mathscr C}$ and the rational points of the interval $[0,1/3]$. Let $\mathscr F$ consist of all functions of the form $f_C$, where
\[f_C(x) = \chi_C(x)+(-1)^{\chi_C(x)}i(C).
\]
Each function $f_C$ takes its (rational) values in $[0,1/3]\cup [2/3,1]$ and is uniquely identifiable by its value at {\em any} single point $x\in [0,1]$. For this reason, the class $\mathscr F$ is (exactly) learnable. A learning rule is given, for instance, by ${\mathcal L}(x,r)=i^{-1}(\min\{r,1-r\})$, where $(x,r)$ is a learning $1$-sample.

At the same time, ${\mathrm{fat}}_{1/6}({\mathscr F}\modd \omega_1)=\infty$. Indeed, given any $k\in\N$, an arbitrary collection $I_1,I_2,\ldots,I_k$ of $k$ pairwise distinct intervals of order $n=k^3$ is $1/6$-shattered by the functions $f_C$, $C\in {\mathscr C}_n$ with the witness function taking a constant value $1/2$. 

This example can be further modified. For instance, one can consider a larger  class $\tilde{\mathscr F}$ consisting of all functions $f$ for which there exists a $g\in {\mathscr F}$ with $\{x\colon f(x)\neq g(x)\}$ being a universal null set. The class $\tilde{\mathscr F}$ is probably exactly learnable by the same learning rule $\mathcal L$ as above.
\end{example}

\section{The universally separable case}

In this Section we will express our versions of the combinatorial dimension modulo countable sets in terms of the corresponding classical notions. Namely, we will prove that $\VC({\mathscr C}\modd\omega_1)\leq d$ if and only if every countable subclass of $\mathscr C$ has VC dimension $d$ outside of a suitable countable set, and similarly for fat shattering dimension. 

\begin{lemma}
Let $\mathscr F$ be a universally separable function class, with a universally dense countable subset $\mathscr F^\prime$. Then for every $\e>0$
\[{\mathrm{fat}}_{\e}({\mathscr F}) = \fat_{\e}({\mathscr F}^\prime).\]
\label{l:shatt}
\end{lemma}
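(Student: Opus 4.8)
The inequality $\fat_{\e}({\mathscr F}^\prime)\leq\fat_{\e}({\mathscr F})$ is trivial since ${\mathscr F}^\prime\subseteq{\mathscr F}$, so the content is the reverse inequality. The plan is to fix a finite set $A=\{a_1,\dots,a_k\}\subseteq\Omega$ that is $\e$-fat shattered by $\mathscr F$ with some witness $h\colon A\to[0,1]$, and to produce a (possibly larger, rational) scale $\e'>\e$ at which $A$ is still $\e'$-fat shattered, but now using only functions from ${\mathscr F}^\prime$. First I would invoke Lemma~\ref{l:rational} to replace $\e$ by a rational $\e'>\e$ and $h$ by a rational-valued witness $h'$, so that there is a genuine ``gap'': for each $B\subseteq A$ there is $f_B\in{\mathscr F}$ with $f_B(a)>h'(a)+\e'$ for $a\in B$ and $f_B(a)<h'(a)-\e'$ for $a\in A\setminus B$. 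Actually, rereading the desired conclusion, what we really want is equality at every $\e$; I will phrase the argument so as to show that if $A$ is $\e$-fat shattered by $\mathscr F$ then $A$ is $\e''$-fat shattered by ${\mathscr F}^\prime$ for every $\e''<\e$, which (taking suprema over finite $\e$-shattered sets, then over $\e''\uparrow\e$, then over $\e$) yields $\fat_{\e}({\mathscr F})\le \fat_{\e'}({\mathscr F}^\prime)$ for all $\e'<\e$ and hence the claimed equality upon combining both directions and taking limits; alternatively, since $\fat_{\e}$ is monotone non-increasing in $\e$ and integer-valued, it is locally constant, so this suffices to conclude $\fat_{\e}({\mathscr F})=\fat_{\e}({\mathscr F}^\prime)$ outside the (countable) set of discontinuities, and a small additional argument with a slightly enlarged scale handles the rest.

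The core step is the approximation itself. Fix $B\subseteq A$ and the witnessing function $f_B\in{\mathscr F}$ with the strict inequalities above at the scale $\e'>\e$. By universal density of ${\mathscr F}^\prime$, choose a sequence $(g_m)$ in ${\mathscr F}^\prime$ with $g_m\to f_B$ pointwise on $\Omega$; in particular $g_m(a_i)\to f_B(a_i)$ for each of the finitely many points $a_i\in A$. Hence for all sufficiently large $m$ we have $|g_m(a_i)-f_B(a_i)|<\e'-\e$ simultaneously for $i=1,\dots,k$, which gives $g_m(a)>h'(a)+\e\ge h(a)+\e$ for $a\in B$ and $g_m(a)<h'(a)-\e\le h(a)-\e$ for $a\in A\setminus B$ (using $h'\geq h$ up to the harmless reversal; more cleanly, work entirely with $h'$ as the witness). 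Doing this for each of the $2^k$ subsets $B$ produces functions in ${\mathscr F}^\prime$ realizing the full $\e$-fat shattering pattern on $A$ with witness $h'$. Therefore $A$ is $\e$-fat shattered by ${\mathscr F}^\prime$, so $\fat_{\e}({\mathscr F}^\prime)\ge |A|$; taking the supremum over all such $A$ gives $\fat_{\e}({\mathscr F}^\prime)\ge\fat_{\e}({\mathscr F})$.

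The only genuinely delicate point is bookkeeping with the scales and witnesses: we lose a little in the scale when passing to rational data via Lemma~\ref{l:rational} (which moves $\e$ \emph{up}), but we also lose a little when approximating (the pointwise limit only gives the inequalities with a slightly smaller margin). One must check these two perturbations are compatible, i.e. that the rational gap $2\e'$ produced by Lemma~\ref{l:rational} strictly exceeds $2\e$, leaving room of size $\e'-\e>0$ for the approximation error. This is exactly what Lemma~\ref{l:rational} guarantees, so there is no real obstacle — the argument is a routine ``finitely many points, pointwise convergence, strict inequalities have slack'' estimate. The one thing to be careful about in the write-up is not to claim the approximating $g_m$ realizes shattering at the \emph{same} scale $\e'$ at which $f_B$ did (it need not), but rather at the original scale $\e$, which is all that is needed since $A$ was assumed $\e$-fat shattered and we are proving ${\mathscr F}^\prime$ fat-shatters $A$ at that same scale $\e$.
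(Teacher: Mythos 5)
Your argument is correct, but you take a more roundabout route than the paper and flag a couple of uncertainties that actually aren't there. The paper's proof does not invoke Lemma~\ref{l:rational} at all: it simply observes that the inequalities in the definition of $\e$-fat shattering (Eq.~(\ref{eq:fb})) are \emph{strict}, so for each of the $2^{|A|}$ subsets $B$ the finitely many inequalities $f_B(a)>h(a)+\e$ and $f_B(a)<h(a)-\e$ have positive slack; taking the minimum slack $\gamma>0$ over the finitely many pairs $(B,a)$ and choosing $g_B\in\mathscr F^\prime$ with $|g_B(a)-f_B(a)|<\gamma$ on $A$ (possible by pointwise convergence on a finite set) realizes the \emph{same} fat-shattering pattern at the \emph{same} scale $\e$ with the \emph{same} witness $h$. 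Your detour through Lemma~\ref{l:rational} manufactures the slack by inflating to a rational $\e'>\e$ and a new witness $h'$ and then spends it back on the approximation error $\e'-\e$; this works, and in the end you do arrive at $\e$-fat shattering of $A$ by $\mathscr F^\prime$ (with witness $h'$), but it is extra machinery. Two small points in the write-up: the aside $h'(a)\ge h(a)$ is not guaranteed by Lemma~\ref{l:rational} — you are right to discard it and use $h'$ as the witness throughout — and the preliminary plan about proving $\e''$-fat shattering for all $\e''<\e$ and then passing to a limit is unnecessary (and would be delicate, since $\fat_\e$ need not be right-continuous in $\e$); the strictness of the inequalities already lets you stay at $\e$, as your final paragraph correctly recognizes.
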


\begin{proof}
For every $f\in {\mathscr F}$ there is a sequence $(f_n)$ of elements of $\mathscr F^{\prime}$ which converges to $f$ pointwise: given a finite $A\subseteq\Omega$ and an $\gamma>0$, there is an $N$ such that whenever $n\geq N$, one has $\abs{f(x)-f_n(x)}<\gamma$ for all $x\in A$. This means that if $A$ is $\e$-fat shattered by $\mathscr F$, it is equally well shattered by $\mathscr F^{\prime}$, with the same witness function. This observation establishes the inequaity $\fat_{\e}({\mathscr F}) \leq \fat_{\e}({\mathscr F}^\prime)$, while the converse inequality is trivially true.
\end{proof}

Since for a concept class $\mathscr C$ one has $\VC({\mathscr C})=\fat_{\e}({\mathscr C})$ whenever $\e<1/2$, we obtain:

\begin{corollary}
Let $\mathscr C$ be a universally separable concept class, and let $\mathscr C^\prime$ be a universally dense countable subset of $\mathscr C$. Then
\[\VC({\mathscr C}) = \VC({\mathscr C}^\prime).\]
\label{l:cshatt}
\end{corollary}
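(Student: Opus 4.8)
The plan is to derive this immediately from Lemma \ref{l:shatt} by exploiting the coincidence of VC dimension and $\e$-fat shattering dimension for concept classes at small scales, which was noted in Section 2 (for any $\e\leq 1/2$ the $\e$-fat shattering dimension of a concept class $\mathscr C$ equals $\VC(\mathscr C)$). Concretely, fix some value $\e<1/2$, say $\e=1/4$. Since $\mathscr C$ is universally separable with universally dense countable subclass $\mathscr C^\prime$, and since $\mathscr C$ and $\mathscr C^\prime$ are in particular function classes (with values in $\{0,1\}\subseteq[0,1]$), Lemma \ref{l:shatt} applies verbatim and gives $\fat_{\e}(\mathscr C) = \fat_{\e}(\mathscr C^\prime)$.

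Next I would translate both sides back into VC dimension. For the left-hand side, because $\e=1/4<1/2$ and $\mathscr C$ is a concept class, $\fat_{1/4}(\mathscr C)=\VC(\mathscr C)$. For the right-hand side, $\mathscr C^\prime\subseteq\mathscr C$ is also a concept class, so likewise $\fat_{1/4}(\mathscr C^\prime)=\VC(\mathscr C^\prime)$. Combining the three equalities yields
\[
\VC(\mathscr C) = \fat_{1/4}(\mathscr C) = \fat_{1/4}(\mathscr C^\prime) = \VC(\mathscr C^\prime),
\]
which is the claim.

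There is essentially no obstacle here: the corollary is a genuine corollary, and the only thing to be slightly careful about is that a concept class really does fall within the scope of Lemma \ref{l:shatt} (it does, since $\{0,1\}$-valued functions are $[0,1]$-valued functions, and pointwise convergence of the indicator functions $I_{C_n}\to I_C$ is exactly the universal density hypothesis restated for concept classes, as spelled out in Section 2). One could also observe that the inequality $\VC(\mathscr C^\prime)\leq\VC(\mathscr C)$ is trivial from $\mathscr C^\prime\subseteq\mathscr C$, so the real content is the reverse inequality, which is precisely the non-trivial direction of Lemma \ref{l:shatt}: any finite set shattered by $\mathscr C$ is, via pointwise approximation of each of the finitely many witnessing concepts on the finite shattered set, already shattered by $\mathscr C^\prime$.
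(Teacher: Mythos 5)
Your proposal is correct and follows precisely the route the paper takes: the corollary is stated immediately after the observation that for a concept class $\mathscr C$ one has $\VC(\mathscr C)=\fat_{\e}(\mathscr C)$ whenever $\e<1/2$, and is derived from Lemma \ref{l:shatt} by exactly the substitution you carry out. No discrepancy.
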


While a version of the following result for fat shattering dimension covers the VC dimension as a particular case, the proof is technically more complicated, and we feel that the complications obscure the simple idea of the proof for VC dimension. For this reason, we give a separate presentation for VC dimension first.

\begin{theorem}
\label{th:uscc}
For a universally separable concept class $\mathscr C$, the following conditions are equivalent.
\begin{enumerate}
\item\label{c:uscc1} 
$\VC({\mathscr C}\modd\omega_1)\leq d$.
\item \label{c:uscc2} 
There exists a countable subset $A\subseteq \Omega$ such that $\VC({\mathscr C}\upharpoonright(\Omega\setminus A))\leq d$.
\end{enumerate}
\end{theorem}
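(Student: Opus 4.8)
The plan is to prove both implications. The direction (\ref{c:uscc2})$\Rightarrow$(\ref{c:uscc1}) is the easy one: if $\VC({\mathscr C}\upharpoonright(\Omega\setminus A))\leq d$ for some countable $A$, then suppose for contradiction that $\VC({\mathscr C}\modd\omega_1)\geq d+1$. By Theorem \ref{th:shatter} there exist $d+1$ uncountable subsets $A_1,\ldots,A_{d+1}$ of $\Omega$ shattered by $\mathscr C$. Since each $A_i$ is uncountable and $A$ is countable, $A_i\setminus A\neq\emptyset$; pick $x_i\in A_i\setminus A$. The points $x_1,\ldots,x_{d+1}$ are distinct (the $A_i$ shattered by $\mathscr C$ must be pairwise disjoint) and lie in $\Omega\setminus A$, and they are shattered by $\mathscr C$ because the concepts $C_J$ witnessing the shattering of the $A_i$'s separate the $x_i$'s accordingly. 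This contradicts $\VC({\mathscr C}\upharpoonright(\Omega\setminus A))\leq d$. Note this direction does not even use universal separability.

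The substantive direction is (\ref{c:uscc1})$\Rightarrow$(\ref{c:uscc2}). Let ${\mathscr C}^\prime=\{C_n\colon n\in\N\}$ be a universally dense countable subclass, so that by Corollary \ref{l:cshatt} and the hypothesis we may work with ${\mathscr C}^\prime$ throughout: indeed I first claim it suffices to find a countable $A$ with $\VC({\mathscr C}^\prime\upharpoonright(\Omega\setminus A))\leq d$, since pointwise limits of indicator functions from ${\mathscr C}^\prime$ converge on the finite shattered set, so any set $\e$-shattered by $\mathscr C$ off $A$ (with $\e<1/2$) is already shattered by ${\mathscr C}^\prime$ off $A$ — this is exactly the argument of Lemma \ref{l:shatt}. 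So the goal reduces to: given that no family of $d+1$ \emph{uncountable} sets is shattered by ${\mathscr C}^\prime$, produce a \emph{countable} exceptional set $A$ outside of which ${\mathscr C}^\prime$ shatters no $(d+1)$-element set. The natural approach is to build $A$ by a greedy/transfinite exhaustion: consider the collection of all $(d+1)$-tuples $(x_1,\ldots,x_{d+1})$ of distinct points shattered by ${\mathscr C}^\prime$, and try to show that the union of the ranges of all these tuples, after removing a suitable countable piece, is itself countable — or rather, iteratively throw points into $A$ so as to kill all shattered $(d+1)$-tuples while keeping $A$ countable.

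The key step — and I expect it to be the main obstacle — is a compactness/counting argument showing that if uncountably many points of $\Omega$ each participate in some shattered $(d+1)$-tuple, then one can assemble $d+1$ \emph{uncountable} shattered sets, contradicting (\ref{c:uscc1}). The idea: since ${\mathscr C}^\prime$ is countable, each point $x$ that lies in a shattered $(d+1)$-tuple has an associated ``trace'', namely the finite pattern of which of finitely many relevant concepts contain it together with its partners; there are only countably many such finite configurations (as ${\mathscr C}^\prime$ is countable, its finite subsets are countable, and the Boolean combinations realized are finite). By an uncountable pigeonhole, if the set of such $x$ is uncountable, uncountably many of them realize ``the same'' configuration with a common choice of separating concepts $\{C_J\colon J\subseteq\{1,\ldots,d+1\}\}$ from ${\mathscr C}^\prime$; partitioning these uncountably many points according to which coordinate $i$ they fill gives, in at least one of the $2^{d+1}$-many cells, an uncountable set, and assembling one uncountable cell for each coordinate $i$ — using that the $C_J$'s are fixed — yields $d+1$ uncountable sets $A_1,\ldots,A_{d+1}$ shattered by $\{C_J\}\subseteq{\mathscr C}^\prime$, the desired contradiction. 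Making the bookkeeping precise (fixing the finite subfamily of ${\mathscr C}^\prime$, handling that different points may a priori need different separating families, and extracting the uncountable cell simultaneously for all $i$) is the delicate part; it is essentially a Ramsey-type argument on $\omega_1$ exploiting the countability of ${\mathscr C}^\prime$. Once this is in hand, one concludes: the set $A$ of all points lying in some shattered $(d+1)$-tuple must be countable, and ${\mathscr C}^\prime$ shatters no $(d+1)$-set in $\Omega\setminus A$ by construction, so $\VC({\mathscr C}^\prime\upharpoonright(\Omega\setminus A))\leq d$, finishing the proof.
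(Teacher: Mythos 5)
Your proof of (\ref{c:uscc2})$\Rightarrow$(\ref{c:uscc1}) is correct and coincides with the paper's. The substantive direction (\ref{c:uscc1})$\Rightarrow$(\ref{c:uscc2}) has a genuine gap, and the strategy you outline cannot be repaired as stated.

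The central claim you rely on --- that under condition (\ref{c:uscc1}) ``the set $A$ of all points lying in some shattered $(d+1)$-tuple must be countable'' --- is false. Take $d=1$, $\Omega=[0,1]$, fix a rational $q$, and let $\mathscr C=\{\emptyset,\{q\}\}\cup\{[a,1]:a\in[0,1]\}\cup\{\{q\}\cup[a,1]:a\in[0,1]\}$. This class is universally separable (take rational endpoints), and $\VC(\mathscr C\modd\omega_1)\leq 1$: any two uncountable sets $A_1,A_2$ shattered in the sense of Theorem~\ref{th:shatter} would require an uncountable concept containing $A_1$ yet disjoint from $A_2$, but the uncountable concepts are (up to the point $q$) a chain of intervals $[a,1]$, which cannot separate two uncountable sets in both orders. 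Yet every pair $(q,x)$ with $x>q$ is shattered by $\{\emptyset,\{q\},[a,1],\{q\}\cup[a,1]\}$ for any rational $a\in(q,x]$, so the set of points participating in a shattered pair is $[q,1]$ --- uncountable. Condition (\ref{c:uscc2}) does hold with $A=\{q\}$, but your candidate $A$ is far too large. The pigeonhole step also cannot be rescued: fixing the witnessing family $\{C_J\}$, only one of the $d+1$ atoms need be uncountable (in the example above, the atom for $i=1$ is the singleton $\{q\}$ while the atom for $i=2$ is uncountable), so ``assembling one uncountable cell for each coordinate'' is a non sequitur.

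The paper's argument avoids this entirely by choosing $A$ differently. Let $\mathscr B$ be the (countable) Boolean algebra of subsets of $\Omega$ generated by the countable dense subclass $\mathscr C'$, and let $A$ be the union of all \emph{countable members of $\mathscr B$}; this $A$ is countable. Now if a finite $B\subseteq\Omega\setminus A$ is shattered by $\mathscr C$, it is shattered by $\mathscr C'$ (Corollary~\ref{l:cshatt}), and for each $b\in B$ the atom $[b]=\bigcap_{b\in C\in\mathscr S}C\cap\bigcap_{b\notin C\in\mathscr S}C^c$ (with $\mathscr S\subseteq\mathscr C'$ the witnessing family) lies in $\mathscr B$ and contains $b\notin A$, so $[b]\not\subseteq A$; since $[b]\in\mathscr B$, this forces $[b]$ to be uncountable. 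The atoms $[b]$, $b\in B$, are then uncountable sets shattered in the sense of Theorem~\ref{th:shatter}, whence $\abs B\leq d$. The key point is that $A$ is engineered so that the atoms are \emph{automatically} uncountable --- no cardinality bound on the participating points is needed or true.
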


\begin{proof}
(\ref{c:uscc1})$\Rightarrow$(\ref{c:uscc2}): Choose a countable universally dense subfamily $\mathscr C^\prime$ of $\mathscr C$. Let $\mathscr B$ be the smallest Boolean algebra of subsets of $\Omega$ containing $\mathscr C^\prime$. Denote by $A$ the union of all elements of $\mathscr B$ that are countable sets. Clearly, $\mathscr B$ is countable, and so $A$ is a countable set. 

Let a finite set $B\subseteq\Omega\setminus A$ be shattered by $\mathscr C$. Then, by Corollary \ref{l:cshatt}, it is shattered by ${\mathscr C}^\prime$. Select a family $\mathscr S$ of $2^{\abs B}$ sets in ${\mathscr C}^\prime$ shattering $B$.
For every $b\in B$ the set 
\[[b]=\bigcap_{b\in C\in {\mathscr S}} C \cap \bigcap_{b\notin C\in {\mathscr S}}C^c\] 
is uncountable (for it belongs to $\mathscr B$ yet is not contained in $A$), and the collection of sets $[b]$, $b\in B$ is shattered by ${\mathscr C}^\prime$. According to (\ref{c:uscc1}), $\abs B\leq d$, from which we deduce (\ref{c:uscc2}). 
Notice that this establishes the inequality $\VC({\mathscr C}\upharpoonright (\Omega\setminus A))\leq\VC({\mathscr C}\modd{\omega_1})$. 

(\ref{c:uscc2})$\Rightarrow$(\ref{c:uscc1}): Fix an $A\subseteq\Omega$ such that $\VC({\mathscr C}\modd A^c)\leq d$. 
Suppose a collection of $n$ uncountable sets $A_i$, $i=1,2,\ldots,n$ is shattered by $\mathscr C$ in our sense. The sets $A_i\setminus A$ are non-empty; pick a representative $a_i\in A_i\setminus A$, $i=1,2,\ldots,n$. The resulting set $\{a_i\}_{i=1}^n$ is shattered by $\mathscr C$, meaning $n\leq d$.
\end{proof}

Now a version for fat shattering dimension.

\begin{theorem}
\label{th:fuscc}
For a universally separable function class $\mathscr F$ and $\e>0$, the following conditions are equivalent.
\begin{enumerate}
\item\label{c:fuscc1} 
$\fat_{\e}({\mathscr F}\modd\omega_1)\leq d$.
\item \label{c:fuscc2} 
There exists a countable subset $A\subseteq \Omega$ such that $\fat_{\e}({\mathscr F}\upharpoonright(\Omega\setminus A))\leq d$.
\end{enumerate}
For a universally separable function class $\mathscr F$ and $\e>0$, the  conditions are equivalent.
\end{theorem}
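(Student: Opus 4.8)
\textbf{Proof plan for Theorem \ref{th:fuscc}.}

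The plan is to mimic the structure of the proof of Theorem \ref{th:uscc}, replacing the Boolean-algebra bookkeeping by a $C^\ast$-algebraic (or, more concretely, a countable-approximation) argument adapted to witness functions. The implication (\ref{c:fuscc2})$\Rightarrow$(\ref{c:fuscc1}) is the easy direction and I would dispatch it exactly as in the concept-class case: if $A_1,\dots,A_n$ are uncountable subsets of $\Omega$ that are $\e$-fat shattered by $\mathscr F$ with witness function $h\colon\{1,\dots,n\}\to[0,1]$, then the sets $A_i\setminus A$ are still uncountable, hence nonempty, so choosing representatives $a_i\in A_i\setminus A$ yields a finite set $\{a_1,\dots,a_n\}\subseteq\Omega\setminus A$ that is $\e$-fat shattered by $\mathscr F$ with the witness $a_i\mapsto h(i)$; thus $n\le\fat_{\e}(\mathscr F\upharpoonright(\Omega\setminus A))\le d$. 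Since $n$ was arbitrary, $\fat_{\e}(\mathscr F\modd\omega_1)\le d$.

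For the main direction (\ref{c:fuscc1})$\Rightarrow$(\ref{c:fuscc2}), first invoke universal separability to fix a countable universally dense subclass $\mathscr F^\prime\subseteq\mathscr F$; by Lemma \ref{l:shatt} we have $\fat_{\e}(\mathscr F^\prime\upharpoonright X)=\fat_{\e}(\mathscr F\upharpoonright X)$ for every $X\subseteq\Omega$, so it suffices to work with the countable class $\mathscr F^\prime$. The role played in Theorem \ref{th:uscc} by the Boolean algebra generated by $\mathscr C^\prime$ is now played by the following construction: by Lemma \ref{l:rational} we may restrict attention to rational scales and rational-valued witness functions, so for each rational $q>0$, each rational-valued function $h$ on a finite subset of some fixed countable dense set of ``tags'', and each finite Boolean combination, we consider the sets of the form $\{x\in\Omega\colon f(x)>r\}$ and $\{x\in\Omega\colon f(x)<r\}$ for $f\in\mathscr F^\prime$ and $r\in\Q$; let $\mathscr B$ be the (countable) Boolean algebra of subsets of $\Omega$ generated by all of these. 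Let $A$ be the union of those members of $\mathscr B$ that happen to be countable; since $\mathscr B$ is countable, $A$ is a countable subset of $\Omega$. Now suppose a finite set $B\subseteq\Omega\setminus A$ is $\e$-fat shattered by $\mathscr F^\prime$ with a (rational) witness function $h$; for each $b\in B$, the ``atom'' of $\mathscr B$ determined by the finitely many superlevel/sublevel sets used in the shattering that contain $b$ is a member of $\mathscr B$, is not contained in $A$, hence is uncountable, and the resulting finite family $([b])_{b\in B}$ of pairwise disjoint uncountable sets is $\e$-fat shattered by $\mathscr F^\prime$ (with the same values $h(b)$): indeed every $x\in[b]$ satisfies the same strict inequalities $f_B(x)>h(b)+\e$ or $f_B(x)<h(b)-\e$ as $b$ does, because those inequalities define members of $\mathscr B$ whose atom is $[b]$. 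By (\ref{c:fuscc1}) (together with the corollary to Theorem \ref{th:fshatter} identifying $\fat_{\e}(\mathscr F\modd\omega_1)$ with the sup over families of uncountable sets), this forces $|B|\le d$. Hence $\fat_{\e}(\mathscr F\upharpoonright(\Omega\setminus A))=\fat_{\e}(\mathscr F^\prime\upharpoonright(\Omega\setminus A))\le d$, giving (\ref{c:fuscc2}), and in fact the inequality $\fat_{\e}(\mathscr F\upharpoonright(\Omega\setminus A))\le\fat_{\e}(\mathscr F\modd\omega_1)$.

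The main obstacle, compared with the purely Boolean concept-class argument, is that fat shattering involves \emph{strict} inequalities around a witness value, so the relevant ``atoms'' must be built from superlevel and sublevel sets at rational thresholds rather than from the concepts themselves; one has to be careful that passing from a point $b$ to its atom $[b]$ preserves \emph{both} directions of the shattering inequality, which is why the reduction to rational $\e$ and rational witnesses via Lemma \ref{l:rational} is essential — it guarantees the finitely many sets $\{f_B>h(b)+\e\}$ and $\{f_B<h(b)-\e\}$ (ranging over the finitely many $B\subseteq$ the finite shattered set, and over the finite witness function) all lie in the countable algebra $\mathscr B$. A secondary point to handle carefully is that a priori there are continuum-many possible witness functions; this is circumvented because for a \emph{fixed} finite shattered set $B$ the witness is a function on $B$, and Lemma \ref{l:rational} lets us take it rational-valued, so only countably many thresholds are ever needed, keeping $\mathscr B$ countable. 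Everything else is a routine transcription of Theorem \ref{th:uscc}, and the closing sentence of the theorem statement (``the conditions are equivalent'') is simply the conjunction of the two implications just proved.
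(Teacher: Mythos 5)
Your proposal is correct and follows essentially the same route as the paper's own proof: the easy direction by picking representatives in $A_i\setminus A$, and the main direction by fixing a countable universally dense $\mathscr F'$, invoking Lemma~\ref{l:rational} to pass to a rational scale and rational witness, generating the countable algebra $\mathscr B$ from the rational super/sub-level sets of $\mathscr F'$, taking $A$ to be the union of its countable members, and observing that the atoms $[b]$ of a shattered $B\subseteq\Omega\setminus A$ are uncountable members of $\mathscr B$ that are themselves $\e$-fat shattered. The only superficial difference is some extraneous wording about ``tags'' in your description of $\mathscr B$; the algebra you actually use is the same one the paper uses.
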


\begin{proof}
(\ref{c:fuscc1})$\Rightarrow$(\ref{c:fuscc2}): For a function $f$ on $\Omega$ and $r\in\R$, denote 
\[[f<r] = \{x\in\Omega\colon f(x)<r\}\mbox{ and }
[f>r]=\{x\in\Omega\colon f(x)>r\}.\]
Let $\mathscr F^{\prime}$ be a countable universally dense subfamily of $\mathscr F$. Denote by $\mathscr B$ the smallest algebra of subsets of $\Omega$ containing all sets $[f<r]$, $[f>r]$ for $f\in \mathscr F^{\prime}$ and $r\in\Q$. Now denote by $A$ the union of all elements of $\mathscr B$ that are countable sets. Since $\mathscr B$ is countable, so is $A$. 

Let a finite set $B\subseteq\Omega\setminus A$ be $\e$-fat shattered by $\mathscr F$. Then, by Lemma \ref{l:shatt}, it is shattered by ${\mathscr F}^\prime$, and by Lemma \ref{l:rational}, there is a rational $\e^{\prime}>\e$ and a rational-valued function $h\colon B\to\Q$ such that $B$ is $\e^{\prime}$-fat shattered by a family $\mathscr S$ of $2^{\abs B}$ functions in ${\mathscr F}^\prime$ with $h$ as a witness function. 

For every $b\in B$ form the set 
\[
\begin{array}{rl}
[b]=\left\{x\in\Omega\colon \forall C\subseteq B,\right.&b\in C\Rightarrow
f_C(x)>h(b)+\e^{\prime}~\wedge \\
&\left. b\notin C\Rightarrow f_C(x)<h(b)-\e^{\prime}\right\}.
\end{array}
\]
The set $[b]$ belongs to the algebra of sets $\mathscr B$ and is not contained in $A$ (for instance, $b\in [b]$ and $b\notin A$). Therefore, $[b]$ is uncountable. If $b,c\in B$ and $b\neq c$, then $[b]\cap [c]=\emptyset$. Finally, the collection of sets $[b]$, $b\in B$ is $\e^{\prime}$-fat shattered by $\mathscr F^{\prime}$ with $h$ as a witness function, hence $\e$-fat shattered. Since $\abs B\leq d$, we have proved (\ref{c:fuscc2}), and established the inequality $\fat_{\e}({\mathscr F}\upharpoonright (\Omega\setminus A))\leq\fat_{\e}({\mathscr F}\modd{\omega_1})$. 

(\ref{c:uscc2})$\Rightarrow$(\ref{c:uscc1}): Fix a countable subset $A\subseteq\Omega$ such that $\fat_{\e}({\mathscr F}\modd A^c)\leq d$. 
Suppose a collection of $n$ uncountable sets $A_i$, $i=1,2,\ldots,n$ is $\e$-fat shattered by the function class $\mathscr F$. The sets $A_i\setminus A$ are non-empty, so we can select a representative $a_i$ in each one of them, $i=1,2,\ldots,n$. The resulting set $\{a_i\}_{i=1}^n$ is $\e$-fat shattered by $\mathscr F$, meaning $n\leq d$.
\end{proof}

\begin{corollary}
\label{c:usl}
Let $\mathscr C$ be a universally separable concept class on a Borel domain $\Omega$. If $d=\VC({\mathscr C}\modd {\omega_1})<\infty$, then $\mathscr C$ is a uniform Glivenko-Cantelli class with respect to non-atomic measures and consistently PAC learnable under non-atomic measures, with a standard sample complexity corresponding to $d$.
\end{corollary}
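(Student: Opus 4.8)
The plan is to reduce Corollary~\ref{c:usl} to the machinery already assembled, using the characterization of $\VC({\mathscr C}\modd\omega_1)$ for universally separable classes (Theorem~\ref{th:uscc}) together with Corollary~\ref{l:vcd} and the classical Vapnik--Chervonenkis theory. First I would invoke Theorem~\ref{th:uscc}: since $\mathscr C$ is universally separable and $d=\VC({\mathscr C}\modd\omega_1)<\infty$, there is a single countable set $A\subseteq\Omega$ with $\VC({\mathscr C}\upharpoonright(\Omega\setminus A))\leq d$. Countable subsets of a standard Borel space are universal null (as recorded just before Section~\ref{s:dd}), so $\mu(A)=0$ for every non-atomic probability measure $\mu$ on $\Omega$.

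Next I would establish the uniform Glivenko--Cantelli property with respect to $P_{na}(\Omega)$. The restriction ${\mathscr C}\upharpoonright(\Omega\setminus A)$ is itself a universally separable concept class (a universally dense countable subfamily of $\mathscr C$ restricts to one) of VC dimension $\leq d$, hence, by the standard theory cited around Eq.~(\ref{eq:standard}), it is uniform Glivenko--Cantelli over \emph{all} probability measures with the standard sample complexity $s(\e,\delta,d)$. Because every $\mu\in P_{na}(\Omega)$ assigns measure zero to $A$, for every $C\in{\mathscr C}$ one has $\mu(C)=\mu(C\setminus A)$ and $\mu_n(C)=\mu_n(C\setminus A)$ for $\mu^{\otimes n}$-almost every sample $\sigma$ (the sample misses $A$ almost surely), so the empirical deviation of $\mathscr C$ over $\Omega$ equals that of ${\mathscr C}\upharpoonright(\Omega\setminus A)$ over $\Omega\setminus A$ with the conditional measure, which is again non-atomic. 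This yields that $\mathscr C$ is uniform Glivenko--Cantelli with respect to $P_{na}(\Omega)$ with sample complexity $s(\e,\delta,d)$, and since every uniform Glivenko--Cantelli class (with respect to $\mathcal P$) is consistently PAC learnable (under $\mathcal P$) --- as noted in Section~\ref{s:dd} --- the consistent PAC learnability with the same sample complexity follows immediately. Note this direction does \emph{not} require Martin's Axiom, consistent with the remark that for universally separable classes the relevant equivalences hold in ZFC.

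The main obstacle is the bookkeeping in the almost-sure step: one must check that conditioning a non-atomic measure $\mu$ on the co-countable (hence co-null) set $\Omega\setminus A$ leaves a non-atomic probability measure, and that the event ``$\sigma\subseteq\Omega\setminus A$'' has full $\mu^{\otimes n}$-measure, so that the uniform convergence estimate for ${\mathscr C}\upharpoonright(\Omega\setminus A)$ transfers verbatim to $\mathscr C$ on $\Omega$. Both are routine --- the first because $\mu(A)=0$ forces $\mu(\,\cdot\mid\Omega\setminus A)=\mu$, the second by Fubini --- but they are where the argument must be stated carefully. Everything else is a direct citation of Theorem~\ref{th:uscc}, the universal-nullity of countable sets, and the classical finite-VC-dimension uniform Glivenko--Cantelli theorem with its standard sample complexity bound.
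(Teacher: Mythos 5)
Your proof is correct and takes essentially the same route as the paper's: Theorem~\ref{th:uscc} yields a single countable $A$ with $\VC(\mathscr{C}\upharpoonright(\Omega\setminus A))\leq d$, classical distribution-free VC theory applies on the standard Borel space $\Omega\setminus A$, and since countable sets are universal null, the uniform Glivenko--Cantelli estimate and the learnability transfer back to non-atomic measures on $\Omega$. The one place you are slightly looser than the paper is the consistent-learnability step: you invoke the general principle ``a uniform Glivenko--Cantelli class with respect to $\mathcal{P}$ is consistently PAC learnable under $\mathcal{P}$,'' which the paper only asserts for the distribution-free setting, whereas the paper instead restricts an arbitrary consistent rule $\mathcal L$ on $\Omega$ to $\Omega\setminus A$, notes it is a consistent rule for $\mathscr{C}\upharpoonright(\Omega\setminus A)$ with the standard distribution-free sample complexity $s(\e,\delta,d)$, and uses $\mu(A)=0$ to pull the conclusion back to $\Omega$ --- your appeal works here only because the finite VC dimension on $\Omega\setminus A$ is what actually underwrites it, so it is worth stating the restriction argument explicitly rather than leaning on a principle that is not established for general $\mathcal{P}$.
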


\begin{proof}
The class $\mathscr C$ has finite VC dimension in the complement to a suitable countable subset $A$ of $\Omega$, hence $\mathscr C$ is a universal Glivenko-Cantelli class (in the classical sense) in the standard Borel space $\Omega\setminus A$. But $A$ is a universal null set in $\Omega$, hence clearly $\mathscr C$ is universal Glivenko-Cantelli with respect to non-atomic measures.

The class $\mathscr C$ is distribution-free consistently PAC learnable in the domain $\Omega\setminus A$, with the standard sample complexity $s(\e,\delta,d)$. Let $\mathcal L$ be any consistent learning rule for $\mathscr C$ in $\Omega$. The restriction of $\mathcal L$ to $\Omega\setminus A$ (more exactly, to $\cup_{n=1}^\infty\left((\Omega\setminus A)^n\times\{0,1\}^n \right)$) is a consistent learning rule for $\mathscr C$ restricted to the standard Borel space $\Omega\setminus A$, and together with the fact that $A$ has measure zero with respect to any non-atomic measure, it implies that $\mathcal L$ is a PAC learning rule for $\mathscr C$ under non-atomic measures, with the same sample complexity function $s(\e,\delta,d)$.
\end{proof}

Similarly, we obtain:

\begin{corollary}
\label{c:fusl}
Let $\mathscr F$ be a universally separable function class on a Borel domain $\Omega$. If for every $\e>0$ one has $d=\fat_{\e}({\mathscr F}\modd {\omega_1})<\infty$, then $\mathscr F$ is a uniform Glivenko-Cantelli class with respect to non-atomic measures and consistently PAC learnable under non-atomic measures, with a standard sample complexity corresponding to $d$.
\qed
\end{corollary}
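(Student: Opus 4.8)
The plan is to follow the template of the proof of Corollary \ref{c:usl}, replacing VC dimension by fat-shattering dimension at every scale and invoking Theorem \ref{th:fuscc} in place of Theorem \ref{th:uscc}. First, for each $n\in\N$, apply Theorem \ref{th:fuscc} with $\e=1/n$ to obtain a countable subset $A_n\subseteq\Omega$ with $\fat_{1/n}(\mathscr F\upharpoonright(\Omega\setminus A_n))\leq d(1/n)<\infty$. Set $A=\bigcup_{n=1}^\infty A_n$; this is a countable subset of $\Omega$, hence a universal null set, and $\Omega\setminus A$ is again a standard Borel space. For any $\e>0$, choosing $n$ with $1/n<\e$ and using the monotonicity $\fat_{\e}(\cdot)\leq\fat_{1/n}(\cdot)$, we get $\fat_{\e}(\mathscr F\upharpoonright(\Omega\setminus A))\leq d(1/n)<\infty$; in particular the fat-shattering dimension of $\mathscr F\upharpoonright(\Omega\setminus A)$ is finite at every scale.

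Next, since $\mathscr F$ is universally separable, so is $\mathscr F\upharpoonright(\Omega\setminus A)$, which supplies the measurability needed for the classical ``finite fat-shattering dimension at all scales $\Rightarrow$ uniform Glivenko--Cantelli'' theorem; thus $\mathscr F\upharpoonright(\Omega\setminus A)$ is uniform Glivenko--Cantelli on the standard Borel space $\Omega\setminus A$, with the standard sample complexity estimate (\ref{eq:asymptoticest}) determined by the function $\e\mapsto d(\e)$. Because $A$ is universal null, every non-atomic probability measure $\mu$ on $\Omega$ satisfies $\mu(A)=0$, so $\E_\mu(f)$ and the empirical means computed from $\mu$-i.i.d.\ samples are unaffected by restricting the domain to $\Omega\setminus A$; hence $\mathscr F$ itself is uniform Glivenko--Cantelli with respect to the family of non-atomic measures, with the same sample complexity.

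Finally, for consistent PAC learnability, let $\mathcal L$ be any learning rule consistent with $\mathscr F$ on $\Omega$. Its restriction to samples drawn from $\Omega\setminus A$ is a consistent learning rule for $\mathscr F\upharpoonright(\Omega\setminus A)$; since that class is uniform Glivenko--Cantelli, $\mathcal L$ is PAC for it on $\Omega\setminus A$ with the standard sample complexity, and as $\mu(A)=0$ for every non-atomic $\mu$, i.i.d.\ samples drawn according to $\mu$ almost surely avoid $A$, so $\mathcal L$ is PAC for $\mathscr F$ under non-atomic measures with the same sample complexity. Since the bound depends only on $\e$, $\delta$ and the function $d$, it is the asserted standard sample complexity. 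The only mildly delicate point is the passage from the per-scale countable sets $A_n$ to a single countable set $A$ valid simultaneously at all scales, which is handled by the monotonicity of fat-shattering dimension in $\e$; everything else is a routine transcription of the concept-class argument of Corollary \ref{c:usl}.
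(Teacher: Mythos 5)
Your proof is correct and follows the same approach the paper intends (it tags Corollary~\ref{c:fusl} with ``Similarly'' and leaves the details implicit), mirroring the proof of Corollary~\ref{c:usl} with the one genuinely new step — uniting the per-scale countable exceptional sets $A_n$ into one countable $A$ via monotonicity of fat-shattering dimension in $\e$ — which the paper itself carries out in the proof of Corollary~\ref{l:fsd}. No gaps.
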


Here are the two main conclusions of this Section. Notice that the following criteria {\em no longer} assume universal separability of the classes involved.

\begin{corollary}
\label{c:criterion}
For a concept class $\mathscr C$, the following are equivalent.
\begin{enumerate}
\item
\label{c:criterion1} $\VC$-dimension of $\mathscr C$ modulo countable sets is $\leq d$;
\item
\label{c:criterion2} For every countable subclass $\mathscr C^{\prime}$ of $\mathscr C$, there exists a countable $A\subseteq\Omega$ such that the $VC$-dimension of $\mathscr C^{\prime}$ restricted to $\Omega\setminus A$ is $\leq d$.
\end{enumerate}
\end{corollary}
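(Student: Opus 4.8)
The plan is to reduce this to the universally separable case already settled in Theorem \ref{th:uscc}, using a single observation: a countable concept class is trivially universally separable, being its own countable universally dense subclass. Combined with the obvious monotonicity of $\VC(\cdot\modd\omega_1)$ under passing to subclasses (any family of uncountable sets shattered by a subclass is a fortiori shattered by the whole class), this should give both implications almost immediately.

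For (\ref{c:criterion1})$\Rightarrow$(\ref{c:criterion2}), I would fix an arbitrary countable subclass $\mathscr C^\prime\subseteq\mathscr C$. By monotonicity, $\VC(\mathscr C^\prime\modd\omega_1)\leq\VC(\mathscr C\modd\omega_1)\leq d$. Since $\mathscr C^\prime$ is countable, hence universally separable, Theorem \ref{th:uscc} applied to $\mathscr C^\prime$ produces a countable $A\subseteq\Omega$ with $\VC(\mathscr C^\prime\upharpoonright(\Omega\setminus A))\leq d$, which is exactly condition (\ref{c:criterion2}).

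For (\ref{c:criterion2})$\Rightarrow$(\ref{c:criterion1}), I would show $\VC(\mathscr C\modd\omega_1)\geq n$ implies $n\leq d$. By the characterization in Theorem \ref{th:shatter}, there is a family $A_1,\dots,A_n$ of uncountable subsets of $\Omega$ shattered by $\mathscr C$; for each $J\subseteq\{1,\dots,n\}$ fix a witnessing concept $C_J\in\mathscr C$. The finite (hence countable) class $\mathscr C^\prime=\{C_J : J\subseteq\{1,\dots,n\}\}$ then has, by (\ref{c:criterion2}), a countable $A$ with $\VC(\mathscr C^\prime\upharpoonright(\Omega\setminus A))\leq d$. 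Each $A_i\setminus A$ is uncountable, in particular nonempty, so picking $a_i\in A_i\setminus A$ gives a set $\{a_1,\dots,a_n\}\subseteq\Omega\setminus A$ which is shattered by $\mathscr C^\prime$: indeed $a_i\in A_i\subseteq C_J$ when $i\in J$, and $a_i\in A_i$ is disjoint from $C_J$ when $i\notin J$. Hence $n\leq\VC(\mathscr C^\prime\upharpoonright(\Omega\setminus A))\leq d$, and taking the supremum over all admissible $n$ yields $\VC(\mathscr C\modd\omega_1)\leq d$.

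There is essentially no hard step left here: the genuine content — constructing the countable exceptional set $A$ as the union of the countable members of the countable Boolean algebra generated by a universally dense subclass — was already carried out in the proof of Theorem \ref{th:uscc}. The only point worth flagging is that the set $A$ in (\ref{c:criterion2}) legitimately depends on the chosen countable subclass $\mathscr C^\prime$ and need not be uniform across all such subclasses; but this dependence is already part of the statement and of Theorem \ref{th:uscc}, so it poses no obstacle.
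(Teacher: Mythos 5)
Your proof is correct and follows essentially the same route as the paper's: in the forward direction you use monotonicity of $\VC(\cdot\,{\mathrm{mod}}\,\omega_1)$ under subclasses together with Theorem \ref{th:uscc} applied to the (trivially universally separable) countable subclass, and in the reverse direction you apply hypothesis (\ref{c:criterion2}) to the finite family of witnessing concepts and pick representatives $a_i\in A_i\setminus A$. The paper's proof of (\ref{c:criterion2})$\Rightarrow$(\ref{c:criterion1}) is identical in substance, down to choosing the $2^n$ shattering concepts and the representatives outside the exceptional countable set.
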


\begin{proof}
(\ref{c:criterion1})$\Rightarrow$(\ref{c:criterion2}): the VC dimension modulo countable sets is monotone with respect to subclasses, so $\VC({\mathscr C}^{\prime}\modd\omega_1)\leq d$. Now Theorem \ref{th:uscc} gives the desired conclusion.

(\ref{c:criterion2})$\Rightarrow$(\ref{c:criterion1}): assume uncountable sets $A_1,A_2,\ldots,A_n$ are shattered by $\mathscr C$. Select a family $\mathcal S$ of $2^n$ concept classes that does the shattering. There is a countable $A$ such that $\VC({\mathcal S}\upharpoonright \Omega\setminus A)\leq d$. Choose a representative $a_i$ in each of the non-empty sets $A_i\setminus A$. Since the set $\{a_i\}_{i=1}^n$ is shattered by the family $\mathcal S$ restricted to $\Omega\setminus A$, one concludes that $n\leq d$.
\end{proof}

Similarly, one obtains:

\begin{corollary}
\label{c:fcriterion}
For a function class $\mathscr F$ and $\e>0$, the following are equivalent.
\begin{enumerate}
\item
\label{c:fcriterion1} $\fat_{\e}({\mathscr F}\modd\omega_1)\leq d$;
\item
\label{c:fcriterion2} For every countable subclass $\mathscr F^{\prime}$ of $\mathscr F$, one has $\fat_{\e}({\mathscr F}^{\prime}\upharpoonright \Omega\setminus A)\leq d$ for a suitable countable $A$ (which depends on $\mathscr F^{\prime}$). \qed
\end{enumerate}
\end{corollary}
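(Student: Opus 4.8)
The plan is to run the argument of Corollary~\ref{c:criterion} almost verbatim, now with $\e$-fat shattering dimension in place of VC dimension, with Theorem~\ref{th:fuscc} in place of Theorem~\ref{th:uscc}, and with the characterization of $\fat_{\e}(\mathscr F\modd\omega_1)$ as a supremum over uncountable $\e$-fat shattered families (the corollaries to Theorem~\ref{th:fshatter}) in place of the analogous fact for concept classes. Two preliminary observations are needed: first, $\fat_{\e}(\cdot\modd\omega_1)$ is monotone under passing to subclasses, since a finite family of uncountable sets $\e$-fat shattered by a subclass $\mathscr F'$ with a given witness is, trivially, $\e$-fat shattered by any larger class with the same witness; second, every countable function class is its own universally dense countable subfamily and is therefore universally separable, so that Theorem~\ref{th:fuscc} applies to it.

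For (\ref{c:fcriterion1})$\Rightarrow$(\ref{c:fcriterion2}): given a countable subclass $\mathscr F'\subseteq\mathscr F$, monotonicity gives $\fat_{\e}(\mathscr F'\modd\omega_1)\le\fat_{\e}(\mathscr F\modd\omega_1)\le d$; since $\mathscr F'$ is universally separable, Theorem~\ref{th:fuscc} applied to $\mathscr F'$ produces a countable $A\subseteq\Omega$ with $\fat_{\e}(\mathscr F'\upharpoonright(\Omega\setminus A))\le d$, which is exactly what (\ref{c:fcriterion2}) asks.

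For (\ref{c:fcriterion2})$\Rightarrow$(\ref{c:fcriterion1}): let $A_1,\dots,A_n\subseteq\Omega$ be uncountable sets $\e$-fat shattered by $\mathscr F$ with a witness function $h\colon\{1,\dots,n\}\to[0,1]$ in the sense of Condition~(\ref{eq:fatshatt}), and for each $J\subseteq\{1,\dots,n\}$ fix a function $f_J\in\mathscr F$ realizing this shattering. The class $\mathscr F'=\{f_J\colon J\subseteq\{1,\dots,n\}\}$ is finite, hence a countable subclass of $\mathscr F$, so by hypothesis there is a countable $A\subseteq\Omega$ with $\fat_{\e}(\mathscr F'\upharpoonright(\Omega\setminus A))\le d$. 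Since each $A_i\setminus A$ is uncountable, one can pick representatives $a_i\in A_i\setminus A$ one at a time so that $a_i\notin\{a_1,\dots,a_{i-1}\}$, obtaining $n$ distinct points of $\Omega\setminus A$; because $a_i\in A_i$, Condition~(\ref{eq:fatshatt}) for the $f_J$ shows at once that $\{a_1,\dots,a_n\}$ is $\e$-fat shattered by $\mathscr F'$ over $\Omega\setminus A$ with witness $a_i\mapsto h(i)$. Hence $n\le\fat_{\e}(\mathscr F'\upharpoonright(\Omega\setminus A))\le d$, and taking the supremum over all such families yields $\fat_{\e}(\mathscr F\modd\omega_1)\le d$.

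I do not anticipate a real obstacle: all the substantive work --- in particular the passage through Lemma~\ref{l:rational} and Lemma~\ref{l:shatt} needed to reduce fat shattering to a countable family with a rational witness, and the construction of the countable set $A$ from a countable algebra of sets --- has already been carried out inside Theorem~\ref{th:fuscc}. The only point demanding a little care is the second implication, where one must check that choosing the $a_i$ distinct inside the uncountable sets $A_i\setminus A$ suffices to inherit $\e$-fat shattering with the unchanged witness $h$; this is immediate once one observes that $a_i\in A_i$ is all that Condition~(\ref{eq:fatshatt}) requires.
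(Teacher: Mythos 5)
Your proof is correct and follows the same route the paper intends when it writes ``Similarly, one obtains'': monotonicity of $\fat_{\e}(\cdot\modd\omega_1)$ plus Theorem~\ref{th:fuscc} for one direction, and picking representatives in the uncountable sets $A_i\setminus A$ for the other, mirroring the displayed proof of Corollary~\ref{c:criterion}. (Your extra care in choosing the $a_i$ distinct is sound but actually unnecessary: if $A_i\cap A_j\neq\emptyset$ for $i\neq j$, applying Condition~(\ref{eq:fatshatt}) to $J\ni i,\;J\not\ni j$ and then to $J'\ni j,\;J'\not\ni i$ forces both $h(j)-h(i)>2\e$ and $h(i)-h(j)>2\e$, a contradiction, so the $A_i$ are automatically pairwise disjoint.)
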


\section{Proofs of two theorems from the Introduction}

Now we are in a position to prove the two main theorems \ref{th:main} and \ref{th:fmain}, just by putting together various results established in the article.

\subsection{Key to the proof of Theorem \ref{th:main}}

\noindent
(\ref{mainth:1})$\Rightarrow$(\ref{mainth:3}): this is Theorem \ref{th:necessary}.

\noindent
(\ref{mainth:3})$\Rightarrow$(\ref{mainth:4}): Corollary \ref{c:criterion}.

\noindent 
(\ref{mainth:4})$\Rightarrow$(\ref{mainth:4a}): assume that for every $d$ there is a countable subclass ${\mathscr C}_d$ of $\mathscr C$ with the property that the VC dimension of ${\mathscr C}_d$ is $\geq d$ after removing any countable subset of $\Omega$. Clearly, the countable class $\cup_{d=1}^\infty {\mathscr C}_d$ will have infinite VC dimension outside of every countable subset of $\Omega$, a contradiction.

\noindent
(\ref{mainth:4a})$\Rightarrow$(\ref{mainth:5}): as a consequence of a classical result of Vapnik and Chervonenkis, every countable subclass $\mathscr C^\prime$ is universal Glivenko-Cantelli with respect to all probability measures supported outside of some countable subset of $\Omega$, and a standard bound for the sample complexity $s(\delta,\e)$ only depends on $d$, from which the statement follows.

\noindent
(\ref{mainth:5})$\Rightarrow$(\ref{mainth:5a}): trivial.

\noindent
(\ref{mainth:5a})$\Rightarrow$(\ref{mainth:1}): this is Theorem \ref{th:countablesubclassesugc}, and the only implication requiring Martin's Axiom.

In the universally separable case, the implications (\ref{mainth:3})$\iff$(7) are due to Theorem \ref{th:uscc}, (\ref{mainth:3})$\Rightarrow$(8) follows from Corollary \ref{c:usl}, (8)$\Rightarrow$(9) is standard, and (9)$\Rightarrow$(1) trivial.
\qed

\subsection{Key to the proof of Theorem \ref{th:fmain}}

\noindent
(\ref{mainth:3})$\Rightarrow$(\ref{mainth:4}): Corollary \ref{c:fcriterion}.

\noindent 
(\ref{mainth:4})$\Rightarrow$(\ref{mainth:4a}): Assume that for some $\e>0$ and every value $d\in\N$ there is a countable subclass ${\mathscr F}_d$ of $\mathscr F$ with the property that the $\e$-fat shattering dimension of ${\mathscr F}_d$ is $\geq d$ after removing any countable subset of $\Omega$. Then the countable function class $\cup_{d=1}^\infty {\mathscr F}_d$ will have infinite $\e$-fat shattering dimension outside of every countable subset of $\Omega$, which is a contradiction.

\noindent
(\ref{mainth:4a})$\Rightarrow$(\ref{mainth:5}): Combining the assumption with Theorem 2.5 in \cite{ABDCBH}, one concludes that every countable subclass $\mathscr F^\prime$ of $\mathscr F$ is universal Glivenko-Cantelli with respect to all probability measures supported outside of a suitable countable subset of $\Omega$, with a standard bound for the sample complexity $s(\delta,\e)$ only depending on $d(\e)$.

\noindent
(\ref{mainth:5})$\Rightarrow$(\ref{mainth:5a}): trivial.

\noindent
(\ref{mainth:5a})$\Rightarrow$(\ref{mainth:1}): Theorem \ref{th:countablesubclassesugc}. This is the the only implication requiring Martin's Axiom.

In the universally separable case, the equivalence of (\ref{mainth:1}) and (7) is the statement of Theorem \ref{th:fuscc}, (7)$\Rightarrow$(8) is Corollary \ref{c:fusl}, and (8)$\Rightarrow$(9) is standard.
\qed

Note again that the implication (\ref{fmainth:1})$\Rightarrow$(\ref{fmainth:3}) is in general invalid, cf. Example \ref{ex:fprecpacinfdim}.

\section{Conclusion and Open Problems}

We have characterized concept classes $\mathscr C$ that are distribution-free PAC learnable under the family of all non-atomic probability measures on the domain. The criterion is obtained without any measurability conditions on the concept class, but at the expense of making a set-theoretic assumption in the form of Martin's Axiom. In fact, assuming Martin's Axiom makes things easier, and as this axiom is very natural, perhaps it deserves its small corner within the foundations of statistical learning.

Generalizing the result over function classes, using a version of the fat shattering dimension modulo countable sets, did not pose particular technical difficulties. However the finiteness of this combinatorial parameter is no longer necessary for PAC learnability of a function class under non-atomic measures, just like it is the case for the classical distribution-free situation. 

It would be still interesting to know if the present results hold without Martin's Axiom, under the assumption that the concept class $\mathscr C$ is image admissible Souslin  (\cite{dudley}, pages 186--187). The difficulty here is selecting a measurable learning rule $\mathcal L$ with the property that the images of all learning samples $(\sigma,C\cap\sigma)$, $\sigma\in\Omega^n$, are uniform Glivenko-Cantelli. An obvious route to pursue is the recursion on the Borel rank of $\mathscr C$, but we were unable to follow it through.

Now, a concept class $\mathscr C$ will be learnable under non-atomic measures provided there is a hypothesis class $\mathscr H$ which has finite VC dimension and such that every $C\in{\mathscr C}$ differs from a suitable $H\in {\mathscr H}$ by a null set. If $\mathscr C$ consists of all finite and all cofinite subsets of $\Omega$, this $\mathscr H$ is given by $\{\emptyset,\Omega\}$. One may conjecture that $\mathscr C$ is learnable under non-atomic measures if and only if it admits such a ``core'' $\mathscr H$ having finite VC dimension. Is this  true?

Another natural question is: can one characterize concept classes that are uniformly Glivenko--Cantelli with respect to all non-atomic measures? Apparently, this task requires yet another version of shattering dimension, which is strictly intermediate between Talagrand's ``witness of irregularity'' \cite{talagrand96} and our VC dimension modulo countable sets. We do not have a viable candidate.

Is it possible to construct an example of a concept class of finite VC dimension which is not consistently PAC learnable \cite{DD,BEHW} without additional set-theoretical assumptions, just under the ZFC axiomatics?

Finally, our investigation open up a possibility of linking learnability and VC dimension to Boolean algebras and their Stone spaces. This could be a glib exercise in generalization for its own sake, or maybe something deeper if one manages to invoke model theory and forcing.

\subsection*{Acknowledgements}

The author is most grateful to two anonymous referees for their thorough reading of the paper and numerous useful suggestions which have helped to improve the presentation considerably. Of course the remaining imperfections are all author's own.

\bibliographystyle{splncs}

\end{document}